\newcommand{\qed}{\hfill\ding{113}}
\newcommand{\avec}[1]{\boldsymbol{#1}}
\newcommand{\LTL}{\textsl{LTL}\xspace}
\newcommand{\U}{\mathbin{\mathsf{U}}}
\newcommand{\ind}{\textit{ind}}
\newcommand{\dep}{\textit{tdp}}
\newcommand{\suc}{\textit{suc}}
\newcommand{\tp}{\textit{tp}}
\newcommand{\sub}{\textit{sub}}
\newcommand{\D}{\mathcal{D}}
\newcommand{\I}{\mathcal{I}}
\renewcommand{\P}{\mathcal{P}}
\newcommand{\Diamondw}{\Diamond_{\!r}}
\newcommand{\nxt}{{\ensuremath\raisebox{0.25ex}{\text{\scriptsize$\bigcirc$}}}}
\newcommand{\core}{\textit{core}}
\newcommand{\op}{{\boldsymbol{o}}}
\newcommand{\var}{\textit{var}}
\newcommand{\tvar}{\textit{tv}}
\newcommand{\len}{\max}
\newcommand{\A}{\mathcal A}
\newcommand{\N}{\mathcal N}
\newcommand{\q}{{\avec{q}}}
\newcommand{\el}{\avec{l}}
\newcommand{\s}{\avec{s}}
\renewcommand{\r}{\avec{r}}
\newcommand{\TO}{\mathcal O}
\newcommand{\sig}{\textit{sig}}
\newcommand{\qw}{\hat{\q}}
\newcommand{\NDR}{^{\smash{\raisebox{1pt}{$\scriptscriptstyle\bigcirc$}\Diamond_r}}}
\newcommand{\ND}{^{\smash{\raisebox{1pt}{$\scriptscriptstyle\bigcirc$}\Diamond}}}
\newcommand{\NDD}{^{\smash{\raisebox{1pt}{$\scriptscriptstyle\bigcirc$}\Diamond\Diamond_r}}}
\newcommand{\UN}{^{\smash{\scriptstyle\U}}}
\newcommand{\wrt}{wrt\ }
\newcommand{\Q}{\mathcal{Q}}
\newcommand{\FQ}{\Theta}
\newcommand{\C}{\mathcal{C}}
\newcommand{\Type}{{\boldsymbol{T}}}
\newtheorem{theorem}{Theorem}
\newtheorem{example}{Example}
\newtheorem{lemma}{Lemma}
\title{Unique Characterisability and Learnability of Temporal Queries\\ Mediated by an Ontology}
\author{%
Jean Christoph Jung$^1$\and
Vladislav Ryzhikov$^2$\and
Frank Wolter$^3$\and
Michael Zakharyaschev$^2$ \\
\affiliations
$^1$TU Dortmund University, Germany\\
$^2$Birkbeck, University of London, UK\\
$^3$University of Liverpool, UK\\
\emails
jean.jung@tu-dortmund.de,
\{v.ryzhikov, m.zakharyaschev\}@bbk.ac.uk,
wolter@liverpool.ac.uk
}	
\begin{document}

\maketitle

\begin{abstract}
  Algorithms for learning database queries from examples and unique
characterisations of queries by examples are prominent starting points for developing automated support for query construction and explanation. We investigate how far recent results and techniques on learning and unique charac\-terisations of atemporal queries mediated by an ontology can be extended to temporal data and queries. Based on a systematic review of the relevant approaches in the atemporal case, we obtain general transfer results identifying  conditions under which temporal queries composed of atemporal ones are (polynomially) learnable and uniquely characterisable.
%
\end{abstract}

\section{Introduction}\label{intro}

Providing automated support for constructing database queries from data examples has been an important research topic in database management, knowledge representation and computational logic, often subsumed under the query-by-example  paradigm~\cite{martins2019reverse}. One prominent approach is based on exact learning using membership queries~\cite{DBLP:journals/ml/Angluin87}, where one aims to identify a database query by repeatedly asking an oracle (e.g., domain expert) whether certain data examples are answers or non-answers to the query. Recently, the ability to uniquely characterise a database query by a finite set of positive and negative examples has been identified and investigated as a `non-procedural' necessary condition for learnability via membership queries~\cite{DBLP:conf/icdt/StaworkoW15,DBLP:journals/tods/CateD22,DBLP:conf/kr/FortinKRSWZ22}.
More precisely, a query $\q(x)$ is said to fit a pair $E=(E^{+},E^{-})$ of sets $E^{+}$ and $E^{-}$ of pointed databases $(\mathcal{D},a)$ if $\mathcal{D}\models \q(a)$ for all $(\mathcal{D},a) \in E^{+}$, and $\mathcal{D}\not\models \q(a)$ for all $(\mathcal{D},a) \in E^{-}$. The example set $E$ uniquely characterises $\q$ within a class $\mathcal{Q}$ of queries if $\q$ is the only one (up to equivalence) in $\mathcal{Q}$ that fits $E$. The existence of (polynomial-size) unique characterisations is a necessary pre-condition for (polynomial) learnability via membership queries. Such characterisations can also be employed for explaining and synthesising queries.

Extending results on characterising and learning conjunctive queries (CQs) under the standard closed-world semantics~\cite{DBLP:journals/tods/CateD22}, there has recently been significant progress towards CQs mediated by a description logic (DL) ontology under the open-world semantics (Funk, Jung, and Lutz~\citeyear{DBLP:conf/ijcai/FunkJL21,DBLP:conf/ijcai/FunkJL22}). The focus
has been on ontologies in the tractable \DL{} and $\mathcal{EL}$ families and tree-shaped CQs such as ELQs ($\mathcal{EL}$-concepts) and ELIQs ($\mathcal{ELI}$-concepts). In fact, even under the closed-world semantics, only acyclic queries can be uniquely characterised and, equivalently, learned using membership queries in polynomial time~\cite{DBLP:journals/tods/CateD22}.

In this paper, we aim to understand how far these characterisability
and learnability results for atemporal queries mediated by an ontology
can be expanded to the temporal case. Temporal ontology-mediated query
answering provides a framework for accessing temporal data using a
background ontology. It has been investigated for about a decade---see, e.g.,~\cite{DBLP:conf/time/ArtaleKKRWZ17} for a survey---resulting in different settings and a variety of query and ontology languages~\cite{DBLP:journals/ws/BaaderBL15,DBLP:conf/ijcai/BorgwardtT15,DBLP:journals/jair/ArtaleKKRWZ22,DBLP:conf/ijcai/Gutierrez-Basulto16,DBLP:journals/tocl/ArtaleKRZ14,KR2020-79}. As a natural starting point, we assume that the background ontology holds at all times and does not admit temporal operators in its axioms. As a query language we consider a combination of ELIQs with linear temporal logic (\LTL) operators. First observations on unique characterisability and learnability of plain \LTL{} queries~\cite{DBLP:conf/kr/FortinKRSWZ22} showed that, even without ontologies, a restriction to so-called \emph{path queries} (defined below) is needed to obtain positive general and useful results. Our main contributions in this paper are general transfer theorems identifying abstract properties of query and ontology languages that are needed to lift unique characterisability and learnability from atemporal ontology-mediated queries and ontology-free path \LTL{} queries to temporalised domain queries mediated by a DL ontology. To facilitate the transfer, we begin by revisiting the atemporal case. Below is an overview of the obtained results.

%
\medskip
\noindent
\textbf{Atemporal case.}
We present and compare two approaches to finding unique (polysize) characterisations of atemporal queries mediated by an ontology: via frontiers and via split-partners (aka dualities). Both tools are developed under the condition that query containment in the respective atemporal DLs can be reduced to query evaluation. We call this condition \emph{containment reduction}. It applies to all fragments of the expressive DL $\mathcal{ALCHI}$ and more general FO-ontologies without equality as well as to \DL{} with functional roles. It ensures that whenever a unique characterisation of a query mediated by an ontology exists, there is also one with a single positive example in $E^+$. These tools yield two essentially optimal unique characterisability results: frontiers give polynomial-size characterisations of ELIQs mediated by an ontology in the DLs
$\DL_{\mathcal{H}}$ and $\DL^-_{\mathcal{F}}$~(Funk, Jung, and
Lutz \citeyear{DBLP:conf/ijcai/FunkJL21,DBLP:conf/ijcai/FunkJL22}), while split-partners provide exponential-size characterisations of ELIQs mediated by an $\mathcal{ALCHI}$ ontology and polysize characterisations of ELQs mediated by an RDFS ontology.

\medskip
\noindent
\textbf{Temporalising unique characterisations.} We now assume that temporal data instances are finite sets of facts (ground unary and binary atoms) timestamped by the moments $i \in \mathbb N$ they happened and that queries are equipped with temporal operators.
By combining the results from the atemporal case above with the techniques of~\cite{DBLP:conf/kr/FortinKRSWZ22}, we establish general transfer theorems on (polysize) unique characterisations of temporal queries mediated by a DL ontology.

We first consider the temporal operators $\nxt$ (at the next moment), $\Diamond$ (sometime later), and $\Diamondw$ (now or later) and define, given a class $\Q$ of atemporal queries (say, ELIQs), the family $\LTL\NDD_p(\Q)$ of \emph{path queries} of the form
\begin{equation*}\label{pathquery}
\q = \r_0 \land \op_1 (\r_1 \land \op_2 (\r_2 \land \dots \land \op_n \r_n) ),
\end{equation*}
where $\op_i \in \{\nxt, \Diamond, \Diamondw\}$ and $\r_i \in \Q$. These queries are evaluated at time 0. Even if $\Q$ consists of  conjunctions of atoms only and no ontology is present, not all queries in $\LTL\NDD_p(\Q)$ can be uniquely characterised. A typical example of a non-characterisable query in this class is $\q(x) = \Diamondw (A(x) \land B(x))$~\cite{DBLP:conf/kr/FortinKRSWZ22}. We first give an effective syntactic criterion for an $\LTL\NDD_p(\Q)$-query to be `safe' in the sense of admitting a  unique characterisation. Then we prove a fully general transfer theorem stating that if a DL $\Lmc$ admits containment reduction and (polysize) unique characterisations for $\Q$-queries mediated by an $\Lmc$-ontology, then so does the class of safe temporalised queries in  $\LTL\NDD_p(\Q)$. For example, this theorem yields polysize unique characterisations of safe queries in $\LTL\NDD_p(\text{ELIQ})$ mediated by a $\DL^-_{\mathcal{F}}$ or $\DL_{\mathcal{H}}$ ontology and exponential ones for safe $\LTL\NDD_p(\text{ELIQ})$-queries mediated by an $\mathcal{ALCHI}$ ontology.

Our second transfer result concerns temporal queries with the binary operator $\U$ (until) under the strict semantics and the family $\LTL\UN_{p}(\Q)$ of path queries of the form
\begin{equation*}\label{peepath}
\q = \r_0 \land (\el_1 \U (\r_1 \land ( \el_2 \U ( \dots (\el_n \U \r_n) \dots )))).
\end{equation*}
For its subclass of `peerless' queries, in which the $\r_i,\el_i \in
\Q$ do not contain each other \wrt a given ontology $\mathcal{O}$, we prove general transfer of unique characterisations provided that unique characterisations for the atemporal class $\Q$ can be obtained via split-partners. For example, this result gives exponential-size unique characterisations of peerless queries in $\LTL\UN_{p}(\text{ELIQ})$ mediated by any $\mathcal{ALCHI}$ ontology and polysize characterisations of peerless queries in $\LTL\UN_{p}(\text{ELQ})$ mediated by any RDFS ontology. We also show that the general transfer fails if frontier-based characterisations of queries in $\Q$ are used in place of split-partners.

\medskip \noindent \textbf{Temporalising learning.} We apply our
results on unique characterisations to learning a target query
$\q_T$, known only to a teacher, \wrt a given ontology \Omc in
Angluin's framework of exact learning. We allow the learner to use \emph{membership
queries}, which return in unit time whether a given example $(\Dmc,a)$ is
a positive one for $\q_T$ \wrt to \Omc.
Given that we always
construct example sets effectively, it is not difficult to show that
our exponential-size unique characterisations entail exponential
learning algorithms. We are, however, mainly interested in efficient algorithms formalised as polynomial time or
polynomial query learnability.

Obtaining such algorithms from polysize characterisations is more
challenging and we currently only know how this can be done if the
unique characterisation is based on polysize frontiers. Hence, we
focus on queries in $\LTL_p\NDD(\Q)$ and show that
polynomial query learnability transfers from $\Q$ to safe queries in
$\LTL_p\NDD(\Q)$ and that polytime learnability transfers if natural
additional conditions hold for $\Q$ and the considered ontology language.

Omitted details and proofs can be found in the appendix.


\section{Related Work}
The unique characterisation framework for temporal formulas, underpinning this paper, was originally introduced by Fortin et al.~(\citeyear{DBLP:conf/kr/FortinKRSWZ22}). Recently, it has been generalised to finitely representable transfinite words as data examples \cite{Sestic}, whose results are not directly applicable to the problems we are concerned with as the queries have no DL component and no ontology is present. It would be of interest to extend the techniques used by Sestic~(\citeyear{Sestic}) to the more general languages considered here.

%

The database and KR communities have been working on
identifying queries and concept descriptions from data
examples~\cite{DBLP:conf/icdt/StaworkoW15,DBLP:journals/jmlr/KonevLOW17,DBLP:journals/tods/CateDK13,DBLP:journals/ki/Ozaki20,DBLP:journals/tods/CateD22}. In reverse engineering of queries, the goal is typically to decide whether there is a query separating given positive and negative examples. Relevant work
includes~\cite{DBLP:journals/tods/ArenasD16,DBLP:conf/icdt/Barcelo017} under the closed world
and~\cite{DBLP:journals/ml/LehmannH10,GuJuSa-IJCAI18,DBLP:conf/ijcai/FunkJLPW19,DBLP:journals/ai/JungLPW22} under the open world assumption.

We are not aware of any work on exact learning of temporal formulas
save~\cite{DBLP:conf/aips/CamachoM19} and the related work on exact learning of finite automata starting
with~\cite{DBLP:journals/iandc/Angluin87}. In contrast, reverse
engineering of \LTL{}-formulas has recently received significant attention~\cite{lemieux2015general,DBLP:conf/fmcad/NeiderG18,DBLP:conf/aips/CamachoM19,DBLP:journals/corr/abs-2102-00876,ourijcai23}.

The use of unique characterisations for explaining and constructing schema mappings was promoted and investigated by Kolaitis~(\citeyear{kolaitis:LIPIcs:2011:3359}) and Alexe et al.\ (\citeyear{DBLP:journals/tods/AlexeCKT11}).



Unique characterisability of DL concepts under both closed and open world assumptions has recently been studied
by ten Cate, Koudijs, and Ozaki~(\citeyear{CateKO24}).


\section{Atemporal Ontologies and Queries}\label{Sec:prelims}

We assume that background knowledge about the object domain is given as a standard description logic ontology. This section recaps the relevant definitions.

As usual in DL, we work with any signature of unary and binary
predicate symbols, typically denoted $A,B$ and $P,R$, respectively. A
\emph{data instance} is any finite set $\A \ne \emptyset$ of
\emph{atoms} of the form $A(a)$ and $P(a,b)$ with \emph{individual
names} $a,b$, and also $\top(a)$, which simply says that $a$ exists.
We denote by $\ind(\A)$ the set of individuals in~$\A$ and by $P^-$
the \emph{inverse} of $P$, assuming that $P^-(a,b)\in\A$ iff
$P(b,a)\in\A$. Let $S$ range over binary predicates and their
inverses. A \emph{pointed data instance} is a pair $(\A,a)$ with $a
\in \ind(\A)$. \mbox{The \emph{size} $|\A|$ of $\A$ is the number of
symbols in it.}

In general, an \emph{ontology}, $\mathcal{O}$, is a finite set of first-order (FO) sentences in the given signature. Ontologies and data instances are interpreted in structures $\I=(\Delta^{\I},\cdot^{\I})$ with domain $\Delta^{\I} \ne \emptyset$, $a^{\I}\in \Delta^{\I}$, $\top^{\I} = \Delta^{\I}$, $A^{\I}\subseteq \Delta^{\I}$, and $P^{\I}\subseteq \Delta^{\I} \times \Delta^{\I}$. As usual in database theory, we assume that $a^{\I}\not=b^{\I}$ for distinct $a,b$; moreover, to simplify notation, we adopt the \emph{standard name assumption} and interpret each individual name by itself, i.e., $a^{\I} = a$. Thus, $\I$ is a \emph{model} of $\A$ if $a \in A^{\I}$ and $(a,b)\in P^{\I}$, for all $A(a)\in \A$ and $P(a,b)\in \A$.
We call $\I$ a \emph{model} of an ontology $\mathcal{O}$ if all
sentences in $\mathcal{O}$ are true in $\I$, and say that
$\mathcal{O}$ and $\A$ are \emph{satisfiable} if they have a common model.

The ontology languages we consider here are certain members of the
\DL{} family, $\mathcal{ALCHI}$, and $\mathcal{ELHIF}$; we define them
below as fragments of first-order logic.
\begin{description}\itemsep=0pt
\item[\rm$\DL_\mathcal{F}$]\!\cite{romans} aka $\DL^\mathcal{F}_\core$~\cite{dllite-jair09} allows axioms of the following forms:
\begin{multline}\label{dl-lite}
\forall x \, \big(B(x) \to B'(x)\big), \quad \forall x \, \big(B(x) \land  B'(x) \to \bot \big),\\
\forall x,y,z \, \big( S(x,y) \land S(x,z) \to (y=z) \big),
\end{multline}
where \emph{basic concepts} $B(x)$ are either $A(x)$ or $\exists S(x)=\exists y\, S(x,y)$. In DL parlance, the first two axioms in~\eqref{dl-lite} are written as $B \sqsubseteq B'$ and $B \sqcap B' \sqsubseteq \bot$, and the third one as $\ge 2\, S \sqsubseteq \bot$ or $\text{fun}(S)$, a \emph{functionality constraint} stating that relation $S$ is \emph{functional}.

\item[\rm$\DL_{\mathcal{F}}^{-}$]\!\cite{DBLP:conf/ijcai/FunkJL22}  is the fragment of $\DL_{\mathcal{F}}$, in which \emph{concept inclusions} (CIs) $B \sqsubseteq B'$ cannot have $B' = \exists S$ with functional $S^{-}$.

\item[\rm$\DL_\mathcal{H}$]\!\cite{romans} aka $\DL_\core^\mathcal{H}$~\cite{dllite-jair09} is obtained by disallowing the functionality constraints in $\DL_{\mathcal{F}}$ and adding axioms of the form
\begin{equation}\label{RI}
\forall x,y \, (S(x,y) \to S'(x,y))
\end{equation}
known as \emph{role inclusions} (RIs) and written as $S \sqsubseteq S'$.

\item[\rm RDFS]\hspace*{-2mm}\footnote{\url{https://www.w3.org/TR/rdf12-schema/}} has CIs between concept names, RIs between role names, and CIs of the forms $\exists P \sqsubseteq A$ or $\exists P^{-} \sqsubseteq A$ saying that the domain of $P$ and range of $P$ are in $A$, respectively.

\item[$\mathcal{ALCHI}$]\!\cite{DL-Textbook} has the same RIs as in~\eqref{RI} but more expressive CIs $\forall x\, (C_1(x) \to C_2(x))$, in which the \emph{concepts} $C_i$ are defined inductively starting from atoms $\top(x)$ and $A(x)$ and using the constructors $C(x) \land C'(x)$, $\neg C(x)$, and $\exists y \, (S(x,y) \land C(y))$---or $C \sqcap C'$, $\neg C$, and $\exists S. C$ in DL terms.


\item[\rm$\mathcal{ELHIF}$]\!\cite{DL-Textbook} has RIs~\eqref{RI}, functionality constraints, and CIs with concepts built from atoms and $\bot$ using $\land$ and $\exists y \, (S(x,y) \land C(y))$ only. \ELHI and $\ELIF$ are the fragments of $\mathcal{ELHIF}$ without functionality constraints and RIs, respectively.
\end{description}
We reserve $\mathcal{L}$ for denoting any of these ontology languages:\\
\centerline{\small
\begin{tikzpicture}

\node (rdfs) at (0, .6) {RDFS \ \ \ $\subset$};

\node (one) at (3.2,0) {$\DL_{\mathcal{F}}^{-} \ \subset \ \DL_{\mathcal{F}} \ \subset \ \ELIF$};

\node (elhif) at (6.6,0) {$\subset \ \mathcal{ELHIF}$};

\node (dlh) at (1.58,.6) {$\DL_\mathcal{H}$};
\node (elhi) at (5.05,.6) {$\mathcal{ELHI}$} edge[draw=none] node[sloped]{$\subset$} (dlh) edge[draw=none] node[sloped]{$\subset$} (6.2,0.2);

\node (alchi) at (6.8,.6) {$\mathcal{ALCHI}$} edge[draw=none] node[sloped]{$\subset$} (elhi);
\end{tikzpicture}
}

The most general query language over the object domain we consider
consists of \emph{conjunctive queries} (CQs, for short) $\q(x)$ with a single \emph{answer variable} $x$. We often think of
$\q(x)$ as the set of its atoms and denote by $\var(\q)$ and $\sig(\q)$ the sets of its individual variables and predicates symbols, respectively. We say that $\q(x)$ is \emph{satisfiable} \wrt an ontology $\mathcal{O}$ if $\mathcal{O} \cup \{\q(x)\}$ has a model.
%
%

Given a CQ $\q(x)$, an ontology $\mathcal{O}$, and a data instance $\A$, we say that $a\in \ind(\A)$ is a (\emph{certain})  \emph{answer to $\q$ over $\A$ \wrt $\mathcal{O}$} and write $\mathcal{O},\A\models \q(a)$  if $\I\models \q(a)$ for all models $\I$ of $\mathcal{O}$ and $\A$. Recall that $\emptyset,\A\models \q(a)$ iff there is function $h \colon \var(\q) \to \ind(\A)$ such that $h(x) = a$, $A(y) \in \q$ implies $A(h(y)) \in \A$, and $P(y,z) \in \q$ implies $P(h(y),h(z)) \in \A$. Such a function $h$ is called a \emph{homomorphism} from $\q$ to $\A$, written $h \colon \q \to \A$; $h$ is \emph{surjective} if $h(\var(\q)) = \ind(\A)$.

We say that a CQ $\q_1(x)$ is \emph{contained} in a CQ $\q_{2}(x)$ \wrt an ontology $\mathcal{O}$ and write $\q_{1}\models_{\mathcal{O}} \q_{2}$ if $\mathcal{O},\A\models\q_{1}(a)$ implies $\mathcal{O},\A\models\q_{2}(a)$, for any data instance $\A$ and any $a \in \ind(\A)$. If $\q_{1}\models_{\mathcal{O}} \q_{2}$ and $\q_{2}\models_{\mathcal{O}} \q_{1}$, we say that $\q_{1}$ and $\q_{2}$ are \emph{equivalent \wrt $\mathcal{O}$}, writing $\q_{1}\equiv_{\mathcal{O}} \q_{2}$. For $\TO = \emptyset$, we often write $\q_{1}\equiv \q_{2}$ instead of $\q_{1}\equiv_{\emptyset} \q_{2}$.
%

Two smaller query languages we need are $\mathcal{ELI}$-\emph{queries} (or ELIQs, for short) that can be defined by the grammar
\begin{equation*}
	\q  \quad := \quad \top \quad \mid \quad A \quad \mid \quad \exists S.\q \quad \mid \quad \q \land \q'
\end{equation*}
and $\mathcal{EL}$-\emph{queries} (or ELQs), which are ELIQs without inverses $P^-$. Semantically, an ELIQ $\q$ has the same meaning as the \emph{tree-shaped CQ} $\q(x)$ that is defined inductively starting from atoms $\top(x)$ and $A(x)$ and using the constructors $\exists y \, (S(x,y) \land \q(y))$, for a fresh $y$, and $\q(x) \land \q'(x)$. The only free (i.e., answer) variable in $\q$ is $x$.

We reserve $\Q$ for denoting a class of queries with answer variable $x$ such that whenever $\q_1,\q_2\in \Q$, then $\q_1 \land \q_2\in \Q$. Some of our results require restricting $\Q$ to a \emph{finite signature} $\sigma$: we denote by $\Q^\sigma$ the class of those queries in $\Q$ that are built from predicates in $\sigma$. The classes of all $\sigma$-ELIQs and $\sigma$-ELQs are denoted by ELIQ$^\sigma$ and ELQ$^\sigma$, respectively.

It will be convenient to include the `inconsistency query' $\bot$ into
all of our query classes. By definition, we have $\TO, \A \models
\bot(a)$ iff $\TO$ and $\A$ are unsatisfiable.


%
%

\section{Unique Characterisability}\label{Sec:uniqueChar}

An \emph{example set} is a pair $E = (E^+, E^-)$, where $E^+$ and $E^-$ are finite sets of pointed data instances $(\A,a)$. A CQ $\q(x)$ \emph{fits} $E$ \wrt $\mathcal{O}$ if $\mathcal{O},\A^+ \models \q(a^{+})$ and $\mathcal{O},\A^- \not\models \q(a^{-})$, for all $(\A^+,a^+) \in E^+$ and $(\A^-,a^-) \in E^-$. We say that $E$ \emph{uniquely characterises $\q$ \wrt $\mathcal{O}$ within a given class $\Q$ of queries} if $\q$ fits $E$ and $\q \equiv_{\mathcal{O}} \q'$, for every $\q' \in \Q$ that fits $E$.
Note that, in this case, $E^+ = \emptyset$ implies $\q \equiv_\mathcal{O} \bot$, and so $\q$ is not satisfiable \wrt $\mathcal{O}$.

We first observe that, for a large class of ontologies $\mathcal{O}$, including all those considered here, if $\q$ is uniquely characterised by some $E = (E^+, E^-)$ \wrt $\mathcal{O}$, then $\q$ has a unique characterisation of the form $E' = (\{(\hat{\q},a)\}, E^-)$ with a single positive example $(\hat{\q},a)$.
Say that an ontology $\mathcal{O}$ \emph{admits containment reduction} if, for any CQ $\q(x)$, there is a pointed data instance $(\hat{\q},a)$ such that the following conditions hold:
\begin{description}
\item[(cr$_1$)] $\q(x)$ is satisfiable \wrt $\mathcal{O}$ iff $\mathcal{O}$ and $\hat{\q}$ are satisfiable;

\item[(cr$_2$)] there is a surjective $h \colon \q \to \hat{\q}$ with $h(x) = a$;

\item[(cr$_3$)] if $\q(x)$ is satisfiable \wrt $\mathcal{O}$, then for every CQ $\q'(x)$, we have $\q\models_{\mathcal{O}} \q'$ iff $\mathcal{O},\hat{\q} \models \q'(a)$.
\end{description}
An ontology language $\mathcal{L}$ \emph{admits containment reduction} if every $\mathcal{L}$-ontology does. If the pointed data instance $(\hat{\q},a)$ is computable in polynomial time, for every $\mathcal{O}$ in $\mathcal{L}$, we say that $\mathcal{L}$ \emph{admits tractable containment reduction}. The next lemma 
illustrates this definition by a few concrete examples.

\begin{restatable}{lemma}{acr}\label{acr}
$(1)$ FO without equality admits tractable containment reduction\textup{;} in particular, $\mathcal{ALCHI}$ admits tractable containment reduction.

$(2)$ $\mathcal{ELIF}$  admits tractable containment reduction.

	
	
$(3)$ $\{ \ge 3 \,P \sqsubseteq \bot\}$
%
does not admit containment reduction.
\end{restatable}
\begin{proof}
	For (1), one can define $\hat{\q}$ as $\q$, with the variables
	regarded as individual names. To show (2), $\q$ has to be factorised first to ensure functionality; (3) is shown in appendix.
\end{proof}

%
%

It is readily checked that we have the following:

\begin{restatable}{lemma}{norm}\label{norm}
Suppose $\TO$ admits containment reduction and \mbox{$\q \in \Q$} is satisfiable \wrt $\mathcal{O}$, having a unique characterisation  $E=(E^+, E^-)$ \wrt $\TO$ within $\Q$. Then $E' = (\{(\hat{\q},a)\}, E^-)$ is a unique characterisation of $\q$ \wrt $\TO$ within $\Q$, too.
%
\end{restatable}
We use two ways of constructing unique characterisations: via  frontiers and via split-partners.
Let $\mathcal{O}$ be an ontology, $\Q$ a class of queries, and $\q \in \Q$ a satisfiable query \wrt $\mathcal{O}$. A \emph{frontier of $\q$ \wrt $\mathcal{O}$ within $\Q$} is a set $\mathcal{F}_{\q} \subseteq \Q$ such that
\begin{itemize}
\item for any $\q'\in \mathcal{F}_{\q}$, we have $\q\models_{\mathcal{O}} \q'$ and $\q' \not \models_{\mathcal{O}} \q$;

\item for any $\q'' \in \Q$, if $\q\models_{\mathcal{O}} \q''$,
	then either $\q''\models_{\mathcal{O}} \q$ or there is $\q'\in \mathcal{F}_{\q}$ with $\q'\models_{\mathcal{O}} \q''$.

\end{itemize}
(Note that if $\q \equiv_\mathcal{O} \top$, then $\mathcal{F}_{\q} = \emptyset$.) An ontology $\mathcal{O}$ is said to \emph{admit} (\emph{finite}) \emph{frontiers within $\Q$} if every $\q \in \Q$ satisfiable \wrt $\mathcal{O}$ has a (finite)  frontier \wrt $\mathcal{O}$ within $\Q$. Further, if such frontiers can be computed in polynomial time, we say that $\mathcal{O}$ \emph{admits polytime-computable frontiers.}

The next theorem follows directly from the definitions:

\begin{restatable}{theorem}{critO}\label{critO}
Suppose $\Q$ is a class of queries, an ontology $\mathcal{O}$ admits containment reduction, $\q \in \Q$ is satisfiable \wrt $\mathcal{O}$, and $\mathcal{F}_{\q}$ is a finite frontier of $\q$ \wrt $\mathcal{O}$ within $\Q$. Then $(\{(\hat{\q},a)\},\{(\hat{\r},a) \mid \r \in \mathcal{F}_{\q}\})$  is a unique characterisation of $\q$ \wrt $\mathcal{O}$ within $\Q$.
\end{restatable}

As shown by Funk, Jung, and Lutz~(\citeyear{DBLP:conf/ijcai/FunkJL22}), the two main ontology languages that admit polytime-computable frontiers within ELIQ are $\DL_{\mathcal{H}}$ and $\DL_{\mathcal{F}}^{-}$, whereas $\DL_{\mathcal{F}}$ itself does not admit finite ELIQ-frontiers. By Theorem~\ref{critO} and Lemma~\ref{acr}, we then obtain:

\begin{theorem}\label{polyuniqDL-Lite}
If an ELIQ $\q$ is satisfiable \wrt a $\DL_{\mathcal{H}}$ or $\DL_{\mathcal{F}}^{-}$ ontology $\mathcal{O}$, then $\q$ has a polysize unique characterisation \wrt $\mathcal{O}$ within ELIQ.
\end{theorem}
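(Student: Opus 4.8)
The plan is to combine the two main results already established in the excerpt: the general characterisation criterion via frontiers (Theorem~\ref{crit0}) and the fact that $\DL_\mathcal{H}$ and $\mathcal{ALCHI}$ admit tractable containment reduction (Lemma~\ref{acr}(1)). The statement asserts a polysize \canonical characterisation for ELIQs satisfiable \wrt a $\DL_\mathcal{H}$ or $\DL_\mathcal{F}^-$ ontology, so the whole argument reduces to assembling these pieces together with the cited external fact about frontiers.

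First I would invoke Lemma~\ref{acr} to secure containment reduction. For $\DL_\mathcal{H}$ this is immediate since $\DL_\mathcal{H}$ is a fragment of FO without equality, which is covered by part~(1). For $\DL_\mathcal{F}^-$ one needs part~(2), noting that $\DL_\mathcal{F}^-$ is a fragment of $\DL_\mathcal{F}$ and hence inherits (tractable) containment reduction; one could also observe that the $\sim$-glueing construction in the proof of part~(2) still applies. This gives us, for any ELIQ $\q$ and any such ontology $\mathcal{O}$, a pointed data instance $(\hat{\q},a)$ satisfying \textbf{(cr$_1$)}--\textbf{(cr$_3$)}, and it is computable in polynomial time.

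Next I would bring in the external result, stated in the text just before the theorem and attributed to~\cite{DBLP:conf/ijcai/FunkJL22}: both $\DL_\mathcal{H}$ and $\DL_\mathcal{F}^-$ admit polysize frontiers within ELIQ (indeed polynomial-time computable ones). Thus for a $\q$ satisfiable \wrt $\mathcal{O}$ there is a finite frontier $\mathcal{F}_{\q} \subseteq \text{ELIQ}$ whose total size is polynomial in $\q$ and $\mathcal{O}$. With containment reduction and this finite frontier in hand, Theorem~\ref{crit0} directly yields that $(\{(\hat{\q},a)\},\{(\hat{\r},a) \mid \r \in \mathcal{F}_{\q}\})$ is a \canonical characterisation of $\q$ \wrt $\mathcal{O}$ within ELIQ.

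The only remaining point is the size bound, which is where a little care is needed rather than any real difficulty. The positive example $(\hat{\q},a)$ is polynomial in $\q$ by tractability of containment reduction. For the negative examples, each $\hat{\r}$ with $\r \in \mathcal{F}_{\q}$ is polynomial in $\r$ (again by tractable containment reduction), and since the frontier is polysize both in the number of its members and in the size of each member, the collection $\{(\hat{\r},a) \mid \r \in \mathcal{F}_{\q}\}$ is of polynomial total size. Hence the whole characterisation is polysize, as claimed. I expect no genuine obstacle here: the theorem is essentially a corollary assembled from Theorem~\ref{crit0}, Lemma~\ref{acr}, and the cited frontier result, and the main (minor) thing to verify is that the polynomial size of the frontier is preserved under the containment-reduction transformation $\r \mapsto \hat{\r}$.
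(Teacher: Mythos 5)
Your proposal is correct and is essentially the paper's own proof: the paper obtains this theorem as a direct consequence of Theorem~\ref{crit0}, Lemma~\ref{acr}, and the cited fact from~\cite{DBLP:conf/ijcai/FunkJL22} that $\DL_{\mathcal{H}}$ and $\DL_{\mathcal{F}}^{-}$ admit polynomial-time computable frontiers within ELIQ. The extra details you supply---that $\DL_{\mathcal{H}}$ falls under Lemma~\ref{acr}(1) and $\DL_{\mathcal{F}}^{-}$ under Lemma~\ref{acr}(2), and that tractable containment reduction keeps each $\hat{\r}$ for $\r \in \mathcal{F}_{\q}$ of polynomial size---are exactly what the paper leaves implicit.
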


We next introduce split-partners aka dualities~\cite{McKenzie1972EquationalBA,DBLP:journals/tods/CateD22}. Let $\sigma$ be a finite signature, $\Q^\sigma$ a class of $\sigma$-queries, $\mathcal{O}$ a $\sigma$-ontology, and $\FQ \subseteq \Q^\sigma$ a finite set of queries. A set $\mathcal{S}(\FQ)$ of pointed data instances $(\mathcal{A},a)$ is called a \emph{split-partner for $\FQ$ \wrt $\mathcal{O}$ within $\Q^\sigma$} if, for all $\q' \in \Q^\sigma$, we have
%
\begin{multline}\label{eq:split-def}
\mathcal{O},\mathcal{A}\models \q'(a) \text{ for some $(\mathcal{A},a) \in \mathcal{S}(\FQ)$} \quad \text{iff} \\
\q'\not\models_{\mathcal{O}} \q \text{ for all $\q\in \FQ$.}
\end{multline}
Say that an ontology language $\Lmc$ \emph{has general split-partners within $\Q^\sigma$} if all finite sets of $\Q^\sigma$-queries have split partners \wrt any $\Lmc$-ontology in $\sigma$. If this holds for all singleton subsets of $\Q^\sigma$, we say that $\Lmc$ \emph{has split-partners within $\Q^\sigma$}.

We illustrate the notion of split-partner by a few examples, the last of which shows that, without the restriction to a finite signature $\sigma$, split-partners almost never exist.

\begin{example}\label{ex:22}\em
$(i)$ Let $\mathcal{O}$ be any ontology such that $\mathcal{O}$ and $\A$  are satisfiable for all data instances $\A$, say, $\mathcal{O} =\{A \sqsubseteq B\}$. Let $\Q^\sigma$ be any class of $\sigma$-CQs, for some signature $\sigma$. Then the split-partner $\mathcal{S}_{\bot}$ of the query $\bot$ \wrt $\mathcal{O}$ within $\mathcal{Q}^\sigma$ is 
$$
\mathcal{S}_{\bot} = \{\mathcal{B}_{\sigma}\}, \text{for } \mathcal{B}_{\sigma}= \{R(a,a) \mid R\in \sigma\} \cup \{A(a) \mid A\in \sigma\}.
$$
(Here and below we drop $a$ from $(\A,a)$ if $\ind(\A) = \{a\}$.)
Clearly, $\TO, \mathcal{B}_{\sigma} \models \q$, for any $\q \in \Q^\sigma$ different from $\bot$.

$(ii)$ For $\TO = \{A \sqcap B \sqsubseteq \bot\}$ and $\sigma = \{A,B\}$, we have $\mathcal{S}_\bot = \{ \{A(a)\}, \ \{B(a)\} \}$.

$(iii)$ There does not exist a split-partner for $\FQ =\{A\}$ \wrt the empty ontology $\mathcal{O}$ within ELIQ. To show this, observe that  $B\not\models_{\mathcal{O}}A$ for any unary predicate $B\not=A$. Hence,
as any data instance $\A$ is finite, there is no finite set $\mathcal{S}(\{A\})$ satisfying \eqref{eq:split-def}.
\end{example}

In contrast, for frontiers and unique characterisations, restrictions
to sets of predicates containing all symbols in the query and ontology
do not make any difference. Indeed, let $\sigma$ be
  the signature of $\mathcal{O}$ and $\q$. Then, for any class $\Q$
  of queries, a set $\mathcal{F}_{\q}$ is a frontier for $\q$ \wrt
  $\mathcal{O}$ within $\Q$ iff it is a frontier for $\q$ \wrt
  $\mathcal{O}$ within the restriction of $\Q$ to $\sigma$.
The same holds for unique characterisations $E$ of $\q$ \wrt $\mathcal{O}$.

%
%
%
The following result is proved (in the appendix) using a construction from the reduction of ontology-mediated query answering to constraint satisfaction~\cite{DBLP:journals/tods/BienvenuCLW14}.
\begin{restatable}{theorem}{alchi}\label{alchi-split}
$\mathcal{ALCHI}$ has general split-partners within ELIQ$^\sigma$ that  can be computed in exponential time.
%
\end{restatable}

For ELQs, we can construct general split-partners \wrt RDFS ontologies in polynomial time, provided that the number of input queries is bounded. The proof generalises the construction of split-partners for queries in ELQ \wrt to the empty ontology in~\cite{DBLP:conf/kr/FortinKRSWZ22,DBLP:conf/ijcai/CateFJL23}.

\begin{restatable}{theorem}{elsplit}\label{thm:ELsplit}
	Let $\sigma$ be a signature, $\TO$ a $\sigma$-ontology in RDFS, and $n>0$. For any set $\Theta\subseteq \text{ELQ}^{\sigma}$ with $|\Theta|\leq n$, one can compute in polynomial time a
	split-partner $\mathcal{S}(\Theta)$ of $\Theta$ \wrt $\TO$ within ELQ$^{\sigma}$.
\end{restatable}

Here is our second sufficient characterisability condition:

\begin{restatable}{theorem}{thmcritone}\label{thm:crit1}
Suppose $\Q$ is a class of queries, an ontology $\mathcal{O}$ admits containment reduction, $\q\in \Q$ is satisfiable \wrt $\mathcal{O}$, and $\sigma$ contains the predicate symbols in $\q$ and $\mathcal{O}$. If $\mathcal{S}_{\q}$ is a split-partner for $\{\q\}$ \wrt $\mathcal{O}$ within $\mathcal{Q}^\sigma$, then $(\{(\hat{\q},a)\}, \mathcal{S}_{\q})$  is a unique characterisation of $\q$ \wrt $\mathcal{O}$ within $\Q$.
\end{restatable}
%
%
As a consequence of Theorems~\ref{alchi-split},~\ref{thm:ELsplit},~\ref{thm:crit1} and Lemma~\ref{acr}, we obtain the following:

\begin{theorem}\label{thm:alchi}
If a query $\q \in \text{ELIQ}^\sigma$ is satisfiable \wrt an $\mathcal{ALCHI}$-ontology $\mathcal{O}$ in a signature $\sigma$, then $\q$ has a unique characterisation \wrt $\mathcal{O}$ within ELIQ$^\sigma$. 
\end{theorem}


The sufficient conditions of Theorems~\ref{critO}  and~\ref{thm:crit1} use the notions of frontier and split-partner, respectively. (Notice that both of them are applicable to ELIQs \wrt $\DL_{\mathcal{H}}$ and RDFS ontologies; however, only split-partners will give us polysize unique characterisations of temporalised ELQs \wrt RDFS ontologies in  Theorem~\ref{thmforU} $(ii)$, Section~\ref{sec:seven}.) We now show examples of queries and ontologies having frontiers but no split-partners and vice versa.
The query witnessing that frontiers can exist where split-partners do not exist provides a counterexample even if one admits \emph{CQ-frontiers}, frontiers containing not only ELIQs but also CQs and defined in the obvious way in the appendix.
\begin{restatable}{theorem}{counterexone}\label{counterex1}
$\mathcal{EL}$ does not admit finite CQ-frontiers within ELIQ.
\end{restatable}
\begin{proof}
The query $\q=A\wedge B$ does not have a finite CQ-frontier \wrt the ontology $\mathcal{O} = \{ A \sqsubseteq \exists R.A, \ \exists R.A \sqsubseteq A\}$
within ELIQs.
\end{proof}
\begin{example}\label{count2}\em
Observe that the following set of pointed data instances is a split-partner of $\{\q\}$ \wrt $\mathcal{O}$ from the proof of  Theorem~\ref{counterex1} within ELIQ$^{\{A,B,R\}}$; here all arrows are assumed to be labelled by $R$:\\
\centerline{
\begin{tikzpicture}[thick, transform shape]
\tikzset{every state/.style={minimum size=0pt}}
%
%
%
\node[circle, draw=black,inner sep=2pt, label = below:{\scriptsize $a$},  label = above:{\scriptsize $A$}] at (0,0) (one) {};
\node[circle, draw=black,inner sep=2pt, label = below:{\scriptsize $b$},  label = above:{\scriptsize $A, B$}] at (2,0) (two) {};
\path (one) edge[->, bend right=12]  (two);
\path (two) edge[->, bend right=12]  (one);
%
%
%
\path (two) edge[->, loop right] (two);
%
%
\node[circle, draw=black,inner sep=2pt, label = below:{\scriptsize $a$},  label = above:{\scriptsize $B$}] at (4,0) (one) {};
\node[circle, draw=black,inner sep=2pt, label = below:{\scriptsize $b$},  label = above:{\scriptsize $A, B$}] at (6,0) (two) {};
\path (one) edge[->, bend right=12]  (two);
\path (two) edge[->, bend right=12]  (one);%
\path (two) edge[->, loop right] (two);
\end{tikzpicture}
}
\end{example}

\begin{restatable}{theorem}{counterexampletwo}\label{counterexample}
There exist a $\DL^-_\mathcal{F}$ ontology $\TO$, a query $\q$ and a signature $\sigma$ such that $\{\q\}$ does not have a finite split-partner \wrt $\TO$ within ELIQ$^\sigma$.
\end{restatable}
\begin{proof}
Let $\mathcal{O}=\{ \text{fun}(P), \text{fun}(P^-),B \sqcap \exists P^{-}\sqsubseteq \bot\}$ and $\q=A$. Then $Q = \{\q\}$ does not have a finite split-partner \wrt $\mathcal{O}$ within ELIQ$^{\{A,B,P\}}$.
\end{proof}

Observe that $\{\top\}$ is a frontier for $A$ \wrt $\mathcal{O}$ from the proof of Theorem~\ref{counterexample} within ELIQ and that we can combine the two proofs above to also refute the natural conjecture
that frontiers and splittings together provide a `universal tool' for constructing unique characterisations.


\section{Temporal Data and Queries}
\label{sec:tempintro}

We now extend the definitions of Sections~\ref{Sec:prelims} and~\ref{Sec:uniqueChar} by adding a temporal dimension to the domain data and queries mediated by an ontology.
Our definitions generalise those of~\cite{DBLP:conf/kr/FortinKRSWZ22}, where the ontology-free case was first considered.

A \emph{temporal data instance}, denoted $\D$, is a finite sequence $\mathcal{A}_{0},\dots,\mathcal{A}_{n}$ of data instances, where each $\A_i$ comprises the facts with timestamp $i$. We assume all $\ind(\A_{i})$ to be the same, adding $\top(a)$ to $\A_i$ if needed, and set $\ind(\D) = \ind(\A_{0})$. The \emph{length} of $\D$ is $\len(\D) = n$ and the \emph{size} of $\D$ is $|\D| = \sum_{i\le n} |\A_i|$.
Within a temporal $\sigma$-data instance, we often denote by $\emptyset$ the instance \mbox{$\{\top(a) \mid a \in \ind(\D)\}$.}


Temporal queries for accessing temporal data instances we propose here are built from domain queries (with one implicit answer variable $x$)  in a given class $\Q$ (say, ELIQs) using $\wedge$ and the (future-time) temporal operators of the standard linear temporal logic \LTL{} over the time flow $(\mathbb N,<)$: unary $\nxt$ (next time), $\Diamond$ (sometime later), $\Diamondw$ (now or later), and binary $\U$ (until); see below for the precise semantics.
The class of such temporal queries that only use the operators from a set $\Phi \subseteq \{\nxt, \Diamond, \Diamondw, \U\}$ is denoted by $\LTL^\Phi(\Q)$.
%
The class $\LTL\NDD_p(\Q)$ comprises  \emph{path queries} of the form
\begin{equation}\label{dnpath}
\q = \r_0 \land \op_1 (\r_1 \land \op_2 (\r_2 \land \dots \land \op_n \r_n) ),
\end{equation}
where $\op_i \in \{\nxt, \Diamond, \Diamondw\}$ and $\r_i \in \Q$; \emph{path queries} in $\LTL\UN_p(\Q)$ take the form
\begin{equation}\label{upath}
\q = \r_0 \land (\el_1 \U (\r_1 \land ( \el_2 \U ( \dots (\el_n \U \r_n) \dots )))),
\end{equation}
where $\r_i \in \Q$ and either $\el_i \in \Q$ or $\el_i = \bot$.
We use $\C$ to refer to classes of temporal queries.
The \emph{size} $|\q|$ of $\q$ is the number of symbols in $\q$; the \emph{temporal depth} $\dep(\q)$ of $\q$ is the maximum number of nested temporal operators in $\q$. %




An (atemporal) ontology $\mathcal{O}$ and temporal data instance $\D=\A_{0},\ldots,\A_{n}$ are \emph{satisfiable} if $\mathcal{O}$ and $\A_{i}$ are satisfiable for each $i\leq n$. For satisfiable  $\mathcal{O}$ and $\D$, the \emph{entailment relation} $\mathcal{O},\D,\ell,a \models \q$ with $\ell \in \mathbb N$ and $a \in \ind(\D)$ is defined by induction as follows, where $\A_\ell = \emptyset$, for $\ell > n$:
\begin{align*}
	&\mathcal{O},\D,\ell,a \models \q \text{ iff $\mathcal{O},\A_{\ell}\models \q(a)$, for any $\q \in \Q$},\\
	&\mathcal{O},\D,\ell,a \models \q_1 \land \q_2 \text{ iff } \mathcal{O},\D,\ell,a \models \q_i, \text{ for $i=1,2$},\\
	&\mathcal{O},\D,\ell,a \models \nxt \q \ \text{ iff } \ \mathcal{O},\D,\ell+1,a \models \q,\\
	&\mathcal{O},\D,\ell,a \models \Diamond \q \ \text{ iff } \ \mathcal{O},\D,m,a \models \q, \text{ for some $m > \ell$}, \\
	&\mathcal{O},\D,\ell,a \models \Diamondw \q \ \text{ iff } \ \mathcal{O},\D,m,a \models \q, \text{ for some $m \ge \ell$}, \\
	&\mathcal{O},\D,\ell,a \models \q_1 \U \q_2 \text{ iff }  \mathcal{O},\D,m,a \models \q_2,\! \text{ for some } m > \ell, \\
	& \hspace*{2.2cm} \text{ and } \mathcal{O},\D,k,a \models \q_1, \text{ for all $k$, $\ell < k < m$}.
\end{align*}
If $\mathcal{O}$ and $\D$ are not satisfiable, we set $\mathcal{O},\D,\ell,a \models \q$ to hold for all $\q$, $\ell$ and $a$. Our semantics follows the well established epistemic approach to evaluating temporal queries; see~\cite{DBLP:conf/ijcai/CalvaneseGLLR07,DBLP:journals/jair/ArtaleKKRWZ22} and references therein. The alternative classical Tarski semantics based on temporal interpretations is equivalent to our semantics for all \emph{Horn ontologies} whose FO-translations belong to the Horn fragment of first-order logic~\cite{modeltheory}, and so for all DLs we consider here except $\mathcal{ALCHI}$. A detailed discussion of the relationship between the two semantics is given in the appendix.

We call a temporal query $\q$ \emph{satisfiable} \wrt an ontology $\mathcal{O}$ if $\mathcal{O} \cup \{\q\}$ has a model. Note that $\q$ of the form~\eqref{dnpath} or \eqref{upath} is satisfiable
	\wrt $\mathcal{O}$ iff all $\r_{i}$ in $\q$ are satisfiable \wrt $\mathcal{O}$. 
	

By an \emph{example set} we now mean a pair $E = (E^+, E^-)$ of finite sets
$E^+$ and $E^-$ of pointed temporal data instances $\D,a$ with $a
\in \ind(\D)$. We say that a query $\q$ \emph{fits} $E$ \wrt $\mathcal{O}$
if $\mathcal{O},\D^+,0,a^+ \models \q$ and $\mathcal{O},\D^-,0,a^-
\not\models \q$, for all $(\D^+,a^+) \in E^+$ and $(\D^-,a^-) \in E^-$. As before, $E$ \emph{uniquely characterises $\q$ \wrt $\mathcal{O}$ within a class $\mathcal{C}$ of temporal queries} if $\q$ fits $E$ \wrt $\mathcal{O}$ and every $\q' \in \mathcal{C}$ fitting $E$ \wrt $\mathcal{O}$ is
equivalent to $\q$ \wrt $\mathcal{O}$.

The following lemma shows that typically queries that are not satisfiable \wrt the ontology in question cannot be uniquely characterised.

\begin{lemma}\label{lem:sat}
Let $\TO$ be an ontology and $\C$ a class of temporal queries containing, for any $n>0$,  a satisfiable \wrt $\TO$ query of temporal depth $\geq n$. If a query $\q$ is not satisfiable \wrt $\TO$, then $\q$ is not uniquely characterisable \wrt $\TO$ within $\mathcal{C}$.
\end{lemma}

To prove Lemma~\ref{lem:sat}, assume that $\q$ is not satisfiable \wrt $\TO$ but $E=(E^{+},E^{-})$ uniquely characterises $\q$ \wrt $\TO$ within $\mathcal{C}$. In this case $E^{+}=\emptyset$. Let $n$ be the maximal length of instances in $E^{-}$. Then any $\q'\in  \mathcal{C}$ of temporal depth $>n$ fits $E$, which is a contradiction.

In what follows, we mostly exclude such unsatisfiable queries from  consideration. 

Suppose $\mathcal{C}$ is a class of queries and $\mathcal{O}$ an ontology. If each $\q \in \mathcal{C}$ satisfiable \wrt $\TO$ is uniquely characterised by some $E$ \wrt $\mathcal{O}$ within $\mathcal{C}' \supseteq \mathcal{C}$, we say that $\mathcal{C}$ is \emph{uniquely} \emph{characterisable \wrt $\mathcal{O}$ within $\mathcal{C}'$}.
Let $\mathcal{C}^n$ be the set of queries in $\mathcal{C}$ of temporal depth $\le n$. We say that $\mathcal{C}$ is \emph{polysize characterisable \wrt $\mathcal{O}$ for bounded temporal depth} if there is a polynomial $f$ such that every $\q \in \mathcal{C}^n$  is characterised by some $E$ of size $\le f (n)$ within $\mathcal{C}^n$, $n \in \mathbb N$.

Note that $\Diamond\q \equiv \nxt\Diamondw \q$, so $\Diamond$ does not add any expressive power to $\LTL_p\NDD(\Q)$ and $\LTL_p\NDD(\Q) = \LTL_p\NDR(\Q)$; however, $\LTL_p\ND(\Q) \subsetneqq \LTL_p\NDD(\Q)$.
We also observe that our temporal query languages do not admit containment reduction as, for example, there is no temporal data instance $\hat\q$ for $\q = \nxt (A \wedge \Diamond B)$ because  it will have to fix the number of steps between 0 and the moment of time  where $B$ holds.

We next prove general theorems lifting unique characterisability from domain queries considered above and ontology-free \LTL{} queries of~\cite{DBLP:conf/kr/FortinKRSWZ22} to temporal queries mediated by a DL ontology.


\section{Unique Characterisations in $\LTL_p\NDD(\Q)$}
\label{sec:six}

The aim of this section is to give a criterion of (polysize) unique characterisability of temporal queries in the class $\LTL_p\NDD(\Q)$ under certain conditions on the ontology and on the class $\Q$ of domain queries.
It will be convenient to represent queries $\q$ of the form \eqref{dnpath} as a sequence
\begin{equation}\label{eq:cqform}
\q = \r_{0}(t_0), R_{1}(t_0,t_1), \dots,  R_{m}(t_{m-1},t_m),\r_{m}(t_m),
\end{equation}
where $R_{i}\in \{\suc,<,\leq\}$, $\suc(t,t')$ stands for $t' = t+1$, and $\tvar = \{t_0,\dots, t_m\}$ are variables over the timeline $(\mathbb N,<)$.
%
\begin{example}\label{q1}\em
Below are a temporal query $\q$ and its representation of the form~\eqref{eq:cqform}:
\begin{multline}
\q = \exists P.B \land \nxt(\exists P.A \land \Diamond A)\ \ \leadsto \ \ \\
\exists P.B(t_{0}), \suc(t_{0},t_{1}), \exists P.A(t_{1}), (t_{1}< t_{2}), A(t_{2})
\end{multline}
with $\tvar(\q)=\{t_{0},t_{1},t_{2}\}$. 
\end{example}

We divide $\q$ of the form~\eqref{eq:cqform} into \emph{blocks} $\q_i$ such that
\begin{align}\label{fullq2appendix}
&	\q = \q_{0} \mathcal{R}_{1} \q_{1} \dots \mathcal{R}_{n} \q_{n},
\end{align}
where $\mathcal{R}_{i} = R_{1}^{i}(t_{0}^{i},t_{1}^{i}) \dots  R_{n_{i}}^{i}(t_{n_{i}-1}^{i},t_{n_{i}}^{i})$, \mbox{$R_{j}^{i}\in \{<,\leq\}$} and
\begin{equation}\label{subqi}
\q_{i} =  \r_{0}^{i}(s_{0}^{i}) \suc (s_{0}^{i},s_{1}^{i})
\dots \suc(s_{k_{i}-1}^{i},s_{k_{i}}^{i}) \r_{k_{i}}^{i}(s_{k_{i}}^{i})
\end{equation}
with $s_{k_{i}}^{i}=t_{0}^{i+1}$, $t_{n_{i}}^{i}=s_{0}^{i}$. If $k_{i}=0$, the block $\q_{i}$ is called \emph{primitive}.

\begin{example}\em
The query $\q$ from Example~\ref{q1} has two blocks
$$
\q_{0} =\exists P.B(t_{0}), \suc(t_{0},t_{1}), \exists P.A(t_{1}) \quad \text{and} \quad \q_{1}=A(t_{2})
$$
connected by $(t_1 < t_2)$. It contains one primitive block, $\q_{1}$.
\end{example}

Suppose we are given an ontology $\mathcal{O}$ and a class $\Q$ of domain queries. Then
a primitive block $\q_{i}=\r_{0}^{i}(s_{0}^{i})$ with $i>0$ in $\q$  of the form \eqref{fullq2appendix} is called a \emph{lone conjunct \wrt $\mathcal{O}$ within $\Q$} if $\r_{0}^{i}$ is \emph{meet-reducible \wrt $\mathcal{O}$ within $\Q$} in the sense that
there are queries $\r_{1},\r_{2} \in \Q$ such that
$\r \equiv_{\mathcal{O}} \r_{1} \land \r_{2}$ and $\r
\not\equiv_{\mathcal{O}} \r_{i}$, for $i=1,2$. Lone
conjuncts and their impact on unique characterisability are illustrated by the next example.


\begin{example}\label{ex:3}\em
The query $\Diamond A$, which is represented by the sequence $\top(t_0), (t_0 < t_1), A(t_1)$,  does not have any lone conjuncts \wrt the empty ontology within ELIQ, but $A$ is a lone conjunct of $\Diamond A$ \wrt $\mathcal{O}=\{A \equiv B \sqcap C\}$ within ELIQ.
		
The query $\q = \Diamond A$ is uniquely characterised \wrt the empty ontology within $\LTL_p\NDD(\text{ELIQ})$ by the example set $E = (E^+, E^-)$, where $E^+$ contains two temporal data instances $\emptyset,\{A\}$ and $\emptyset,\emptyset,\{A\}$ and $E^-$ consists of one instance $\{A\}$.
However, $\q = \Diamond A$ cannot be uniquely characterised \wrt $\mathcal{O}=\{A \equiv B \sqcap C\}$ within $\LTL_p\NDD(\text{ELIQ})$ as it cannot be separated from queries of the form
$$
\Diamond(B \wedge \Diamond_{r}(C \wedge \Diamond_{r} (B \land \Diamond_{r} (C \land \Diamond_{r}(\dots)))))
$$
by a finite example set.
Observe also that $A$ is a lone conjunct in $\q'=\Diamond (A \land \Diamondw D)$ \wrt $\mathcal{O}'=\mathcal{O}\cup \{D\sqsubseteq A\}$ but, for the simplification $\q''=\Diamond D$ of $\q'$, we have $\q'' \equiv _{\mathcal{O}'} \q'$ and $\q''$ does not have any lone conjuncts \wrt $\mathcal{O}'$.
\end{example}

Example~\ref{ex:3} shows that the notion of lone conjunct depends on the presentation of the query.
To make lone conjuncts semantically meaningful, we introduce a normal form. Given an ontology $\mathcal{O}$ and a query $\q$ of the form \eqref{fullq2appendix}, we say that $\q$ is in \emph{normal form \wrt $\mathcal{O}$} if the
following conditions hold:
\begin{description}
	\item[(n1)] $\r_{0}^{i}\not\equiv_{\mathcal{O}}\top$ if $i>0$, and $\r_{k_{i}}^{i}\not\equiv_{\mathcal{O}}\top$ if either $i>0$ or $k_{i}>0$
	(thus, of all the first/last $\r$ in a block only $\r_0^0$ can be trivial);
	
	\item[(n2)] each $\mathcal{R}_{i}$ is either a single $t_{0}^{i}\leq t_{1}^{i}$ or a sequence of $<$;
	
	\item[(n3)] $\r_{k_{i}}^{i}\not\models_{\mathcal{O}} \r_{0}^{i+1}$ if $\q_{i+1}$ is primitive and $\mathcal{R}_{i+1}$ is $\le$;
	
	\item[(n4)] $\r_{0}^{i+1}\not\models_{\mathcal{O}}\r_{k_{i}}^{i}$ if $i>0$, $\q_{i}$ is primitive and $\mathcal{R}_{i+1}$ is $\le$;
	
\item[(n5)] $\r_{k_{i}}^{i} \land \r_{0}^{i+1}$ is satisfiable \wrt $\mathcal{O}$ whenever $\mathcal{R}_{i+1}$ is $\le$.
\end{description}

\begin{restatable}{lemma}{lemquerynormalform}\label{lem:query-normalform}
Let $\mathcal{O}$ be an FO-ontology \textup{(}possibly with
$=$\textup{)}. Then every query $\q\in \LTL_p\NDD(\Q)$ is equivalent
\wrt $\mathcal{O}$ to a query in normal form of size at most $|\q|$
and of temporal depth not exceeding $\dep(\q)$.
This query can be computed in polynomial time if containment between queries in $\Q$ \wrt $\mathcal{O}$ is decidable in polynomial time.
\end{restatable}

Note that containment between queries in $\Q =
  \text{ELIQ}$ is decidable in polynomial time for
  $\DL_{\cal{F}}$-ontologies $\mathcal{O}$ but not for
  $\DL_{\mathcal{H}}$-ontologies \textup{(}unless $\textup{P} =
  \textup{NP}$\textup{)}~\cite{DBLP:conf/kr/KikotKZ12}. 

We call a query $\q\in \LTL_p\NDD(\Q)$ \emph{safe \wrt $\mathcal{O}$}
if it is equivalent \wrt $\mathcal{O}$ to an $\LTL_p\NDD(\Q)$-query in
normal form that has no lone conjuncts. It follows
  from the proof of Theorem~\ref{thm:first} given in the appendix
  that, for any query satisfiable \wrt $\mathcal{O}$, the normal form is unique modulo equivalence of its constituent domain queries.

%
%

We are now in a position to formulate the main result of this section.

\begin{restatable}{theorem}{thmfirst}\label{thm:first}
Suppose an ontology $\mathcal{O}$ admits containment reduction and $\Q$ is
a class of domain queries that is uniquely characterisable \wrt $\mathcal{O}$. Then the following hold\textup{:}

$(i)$ Let $\q\in \LTL_p\NDD(\Q)$ be satisfiable \wrt $\TO$. Then $\q$ is uniquely characterisable within $\LTL_p\NDD(\Q)$ \wrt $\mathcal{O}$ iff $\q$ is safe \wrt $\mathcal{O}$.

$(ii)$ If $\mathcal{O}$ admits polysize characterisations within $\Q$, then those queries that are uniquely characterisable within $\LTL_p\NDD(\Q)$ \wrt $\mathcal{O}$ are actually polysize characterisable within $\LTL_p\NDD(\Q)$ \wrt $\mathcal{O}$.
		
$(iii)$ $\LTL_p\NDD(\Q)$ is polysize characterisable \wrt $\mathcal{O}$ for bounded temporal depth if $\mathcal{O}$ admits polysize unique characterisations within $\Q$.
		
$(iv)$ $\LTL_p\ND(\Q)$ is uniquely characterisable \wrt $\mathcal{O}$. It is
polysize characterisable \wrt $\mathcal{O}$ if $\mathcal{O}$ admits polysize unique characterisations within $\Q$.
\end{restatable}

A detailed proof of Theorem~\ref{thm:first} is given in the appendix.  To explain the intuition behind it, we show and discuss the positive and negative examples that provide the unique characterisation required for $(i)$.
Suppose $\mathcal{O}$ admits containment reduction and $\Q$ is a class of domain queries with a unique
characterisation $(\{\hat{\r}\},\mathcal{N}_{\r})$ of $\r\in \Q$ \wrt $\mathcal{O}$ within $\Q$.
Assume that  $\q\in \LTL_p\NDD(\Q)$ in normal form \wrt $\mathcal{O}$ takes the form~\eqref{fullq2appendix} with $\q_i$ of the form~\eqref{subqi}.
We define an example set $E=(E^{+},E^{-})$ characterising $\q$ under the assumption that $\q$ has no lone conjuncts \wrt $\mathcal{O}$. Let $b$ be the number of ocurrences of $\nxt$ and $\Diamond$ in $\q$ plus 1.
For every block $\q_i$ of the form~\eqref{subqi}, let $\hat{\q}_{i}$ be the temporal data instance
\begin{align*}
	\hat{\q}_{i}= \hat{\r}_{0}^{i} \hat{\r}_{1}^{i} \dots \hat{\r}_{k_{i}}^{i} .
\end{align*}
For any two blocks $\q_i, \q_{i+1}$ such that $\r_{k_{i}}^{i}\wedge \r_{0}^{i+1}$ is satisfiable \wrt $\mathcal{O}$, we take the temporal data instance
\begin{align*}
	\hat{\q}_{i} \Join \hat{\q}_{i+1} = \hat{\r}_{0}^{i} \dots \hat{\r}_{k_{i-1}}^{i}  \widehat{\r_{k_{i}}^{i} \wedge \r_{0} ^{i+1}}  \hat{\r}_{1}^{i+1}\dots \hat{\r}_{k_{i+1}}^{i+1} .
\end{align*}
Now, the set $E^{+}$ contains the data instances given by
\begin{itemize}
	\item[--] $\D_{b} = \qw_{0} \emptyset^{b} \dots \qw_{i} \emptyset^{b} \qw_{i+1} \dots \emptyset^{b} \qw_{n}$,
	
	\item[--] $\D_{i} = \qw_{0} \emptyset^{b} \dots (\qw_{i} \! \Join \! \qw_{i+1}) \dots \emptyset^{b} \qw_{n}$, \  if $\mathcal{R}_{i+1}$ is $\leq$ and
	
	\item[--] $\D_{i} = \qw_{0} \emptyset^{b} \dots \qw_{i} \emptyset^{n_{i+1}} \qw_{i+1} \dots \emptyset^{b} \qw_{n}$, \ otherwise.
\end{itemize}
Here, $\emptyset^b$ is a sequence of $b$-many $\emptyset$ and similarly for $\emptyset^{n_{i+1}}$ (intuitively, these `paddings' of multiple $\emptyset$s are needed to ensure that queries fitting the examples have the same block structure as the target query).
By the definition of $\hat{\r}$ using containment reduction, it follows that  $\TO,\D,0,a\models \q$, for all $\D\in E^{+}$. Intuitively, the data instances in $E^{+}$ force any query that is entailed to be divided into blocks in a similar way as $\q$. The set $E^{-}$ contains all data instances of the form
\begin{itemize}
	\item[--] $\D_i^- = \qw_{0} \emptyset^{b} \dots \qw_{i} \emptyset^{n_{i+1} - 1} \qw_{i+1} \dots \emptyset^{b} \qw_{n}$, \  if $n_{i+1} > 1$,
	
	\item[--] $\D^-_i = \qw_{0} \emptyset^{b} \dots \qw_{i} \! \Join \! \qw_{i+1} \dots \emptyset^{b} \qw_{n}$, \ if $\mathcal{R}_{i+1}$ is a single $<$ and $\r_{k_{i}}^{i}\wedge\r_{0}^{i+1}$ is satisfiable \wrt $\mathcal{O}$,
	
	\item[--] the data instances obtained from $\D_{b}$ by applying to it exactly once each of the rules (a)--(e) defined below in all possible ways.
\end{itemize}
It follows from the assumption that $\q$ is in normal form and the reduced `gaps' between blocks in $\D_{i}^{-}$ that we have $\TO,\D_{i}^{-},0,a\not\models \q$ for all $\D_{i}^{-}$. To obtain a unique charcaterisation, the additional data instances
obtained by applying rules (a)--(e) to $\D_{b}$ are crucial. They `weaken'
$\D_{b}$ by replacing some $\hat{\r}$ by negative examples in $\mathcal{N}_{\r}$
or by introducing big `gaps' between some $\hat{\r}$s. To make our notation more uniform, we think of the pointed data instances in $\N_{\r}$ as having the form $\hat{\r}'$, for a suitable CQ $\r'$ (which is not necessarily in $\Q$). The rules are as follows:

\begin{description}
\item[\rm (a)] replace some $\hat{\r}^i_j$ with
	$\r^i_j\not\equiv_{\mathcal{O}}\top$ by an $\hat{\r} \in \N_{\r^i_j}$, for $i,j$ such that $(i,j)\not=(0,0)$---that is, the rule is not applied to $\r_{0}^{0}$;
	
\item[\rm (b)] replace some pair $\hat{\r}^i_j\hat{\r}^i_{j+1}$
	within block $i$
	by $\hat{\r}^i_j\emptyset^{b}\hat{\r}^i_{j+1}$;
	
\item[\rm (c)] replace some $\hat{\r}_j^i$ such that $\r_{j}^{i} \not\equiv_{\mathcal{O}}\top$ by $\hat{\r}_j^i \emptyset^b \hat{\r}_j^i$,
	where $k_{i}>j>0$---that is, the rule is not applied to $\r_{j}^{i}$ if it is on the border of its block;
	
	
\item[\rm (d)] replace $\hat{\r}^i_{k_i}$ ($k_i > 0$)
	by $\hat{\r}\emptyset^b \hat{\r}^i_{k_i}$, for some $\hat{\r} \in \N_{\r^i_{k_i}}$, or replace $\hat{\r}^i_{0}$ ($k_i > 0$) by $\hat{\r}^i_{0}  \emptyset^b \hat{\r}$, for some $\hat{\r}\in \N_{\r^i_{0}}$;
	
	\item[\rm (e)] replace $\hat{\r}_0^0$ with $\r^0_{0} \not\equiv_{\mathcal{O}}\top$ by $\hat{\r}\emptyset^{b}\hat{\r}^{0}_{0}$, for $\hat{\r}\in \N_{\r^0_0}$, if $k_0 = 0$, and by $\hat{\r}_0^0 \emptyset^b \hat{\r}_0^0$ if $k_0 > 0$.
\end{description}
The proof that $(E^{+},E^{-})$ as defined above uniquely characterises $\q$ \wrt $\TO$ if $\q$ contains no lone conjuncts is non-trivial and extends ideas from the ontology-free case investigated in~\cite{DBLP:conf/kr/FortinKRSWZ22}.
Claim $(ii)$ follows from the observation that the unique characterisation constructed in $(i)$ is polynomial in the size of the characterisations $(\{\hat{\r}\},\mathcal{N}_{\r})$ of the domain queries used in $\q$.
For $(iii)$, assume that 
$\dep(\q) \le n$. Then we add to rules (a)--(e) the following rule: if $\hat{\r}$ is a lone conjunct in $\q$, then replace $\hat{\r}$ by $(\hat{\r}_{1}\emptyset^{b}\cdots \emptyset^{b} \hat{\r}_{k})^{n}$
in $\D_{b}$ for $\N_{\r} = \{\hat\r_{1},\dots,\hat\r_{k}\}$ with $\r_{i}\not\equiv_{\mathcal{O}}\r_{j}$, for $i\ne j$.
As $\r$ is meet-reducible \wrt $\TO$, one can first show that $|\N_{\r}|\geq 2$ and then that we obtain a unique characterisation of $\q$ \wrt $\TO$ within the class of queries in $\Q$ of temporal depth $\leq n$. To show $(iv)$, one can follow the proof of $(i)$ without $\Diamond_{r}$ in $\q$ but possibly with lone conjuncts.
Now, rules (c), (d), and (e) are not needed in the construction of $E^{-}$.

As an immediate consequence of Lemma~\ref{acr} and Theorems~\ref{polyuniqDL-Lite}, \ref{thm:alchi} and~\ref{thm:first} we obtain:
\begin{theorem}
$(i)$ For any $\DL_\mathcal{H}$ or $\DL^-_\mathcal{F}$ ontology $\mathcal{O}$, the following hold\textup{:}
\begin{description}
\item[$(i_1)$] any $\q\in \LTL_p\NDD(\text{ELIQ})$ satisfiable \wrt $\TO$ is uniquely charac\-terisable---in fact, polysize characterisable---\wrt $\mathcal{O}$ within $\LTL_p\NDD(\text{ELIQ})$ iff $\q$ is safe \wrt $\mathcal{O}$\textup{;}

\item[$(i_2)$] $\LTL_p\NDD(\text{ELIQ})$ is polysize characterisable \wrt $\mathcal{O}$ for bounded temporal depth\textup{;}
		
\item[$(i_3)$] $\LTL_p\ND(\text{ELIQ})$ is polysize characterisable \wrt $\mathcal{O}$.
\end{description}
$(ii)$ Let $\sigma$ be a signature. Then claims $(i_1)$--$(i_3)$ also hold for $\mathcal{ALCHI}$ ontologies provided that `polysize' is replaced by `exponen\-tial-size' and ELIQ by ELIQ$^{\sigma}$.
\end{theorem}


\section{Unique Characterisations in $\LTL_{p}\UN(\Q^{\sigma})$}
\label{sec:seven}

We next consider temporalisations by means of the binary operator $\U$ (until), which is more expressive than $\nxt$ and $\Diamond$ as  $\nxt \q \equiv \bot \U \q$ and $\Diamond \q \equiv \top \U \q$ under the strict semantics. Compared to the previous section, we now have to restrict queries to a finite signature because otherwise the implicit universal quantification in $\U$ makes queries such as $\bot \U A$ not uniquely characterisable \wrt the empty ontology~\cite{DBLP:conf/kr/FortinKRSWZ22}. For the same reason, we also have to disallow nesting of $\U$ on the left-hand side of $\U$ in queries. Finally, in the ontology-free case, polysize unique characterisations for propositional \LTL-queries with $\U$ are only available for the so-called peerless queries~\cite{DBLP:conf/kr/FortinKRSWZ22}. These observations lead to the following classes of temporal queries, for which we are going to obtain our transfer results.

Let $\Q$ be a domain query language and $\sigma$ a finite signature of unary and binary predicate symbols. Then $\mathcal{Q}^{\sigma}$ denotes the set of queries in $\Q$ that only use symbols in $\sigma$. The class $\LTL_p\UN(\Q^{\sigma})$ comprises temporal path queries of the form~\eqref{upath}
%
%
where each $\r_{i}\in \mathcal{Q}^{\sigma}$ and each $\el_{i}$ is either in $\mathcal{Q}^{\sigma}$ or $\bot$ (recall that $\q$, $\r_i$, $\el_i$ have a single answer domain variable $x$ and that we evaluate $\q$ at time point $0$).
Given an ontology $\mathcal{O}$, we consider the class $\LTL_{pp}\UN(\Q^{\sigma})$ of \emph{$\mathcal{O}$-peerless queries}
in  $\LTL_{p}\UN(\Q^{\sigma})$ of the form~\eqref{upath}, in which $\r_{i}\not\models_{\mathcal{O}}\el_{i}$ and $\el_{i}\not\models_{\mathcal{O}}\r_{i}$, for all $i \le n$.
In what follows we  write $\TO, \D \models \q$ instead of $\TO, \D, 0, a \models \q$ when $a$ is clear from context. We also write $\D \models \q$ instead of $\emptyset, \D \models \q$ (that is, for the empty ontology).


A fundamental difference to the previous section and Theorem~\ref{thm:first} is that now containment reduction and
unique characterisability of domain queries are not sufficient to guarantee transfer to the temporal case. Recall
that $\DL^-_\mathcal{F}$ admits polytime computable frontiers but no split-partners.
\begin{restatable}{theorem}{frontiersnotworking}\label{frontiersnotworking}
There exist a $\DL^-_\mathcal{F}$ ontology $\TO$, a signature $\sigma$ and a query $\q \in \LTL_{pp}\UN(\text{ELIQ}^{\sigma})$ satisfiable \wrt $\TO$ such that $\q$ is not uniquely characterisable \wrt $\TO$ within $\LTL_{p}\UN(\text{ELIQ}^{\sigma})$.
\end{restatable}

In fact, one can take $\TO$ and $\sigma$ from the proof of Theorem~\ref{counterexample} and set $\q = \bot \U A \equiv \nxt A$. Observe that to separate $\nxt A$ from $\q'\U A$
with a $\sigma$-ELIQ $\q'$ such that $\q'\not\models_{\TO}A$,
one has to add to $E^{-}$ a temporal $\sigma$-data instance $\D=\{\top(a)\}, \A, \{A(a)\}$ such that $\TO,\A \models \q'(a)$ but $\TO,\A\not\models A(a)$.
Such $\A$ could be provided by a finite split-partner for $\{A\}$ \wrt $\TO$ within ELIQ$^{\sigma}$ had it existed, but not from a frontier.

%
%
%

We establish the following general transfer theorem, assuming containment reduction and split-partners:

\begin{restatable}{theorem}{thmmainuntil}\label{thm:mainuntil}
	Suppose $\Q$ is a class of domain queries, $\sigma$ a signature, an ontology language $\mathcal L$ has general split-partners within $\Q^\sigma$, and $\TO$ is a $\sigma$-ontology in $\mathcal L$ admitting  containment reduction. Let $\q \in \LTL_{pp}\UN(\Q^{\sigma})$ be satisfiable \wrt $\TO$. Then the following hold\textup{:}
	
	$(i)$ $\q$ is uniquely characterisable \wrt $\TO$ within $\LTL_{p}\UN(\Q^{\sigma})$.
	
	$(ii)$ If a split-partner for any set $\FQ$, $|\FQ| \le 2$, of $\Q^\sigma$ queries  \wrt $\TO$ within $\Q^\sigma$ is exponential, then there is an exponential-size unique characterisation of $\q$ \wrt $\TO$.
	
	$(iii)$ If a split-partner of any set $\FQ$ as above is polynomial and a split-partner $\mathcal{S}_{\bot}$ of $\bot(x)$ within $\Q^\sigma$ \wrt $\TO$ is a singleton, then there is a polynomial-size unique characterisation of $\q$ \wrt $\TO$.
\end{restatable}

The detailed proof of Theorem~\ref{thm:mainuntil} given in the appendix is by reduction to the ontology-free \LTL{} case, using a characterisation of~\cite{DBLP:conf/kr/FortinKRSWZ22}. Here, we define the example set
that provides the characterisation for $(i)$.
Suppose a signature $\sigma$, a $\sigma$-ontology $\mathcal{O}$, and a query $\q \in \LTL_{pp}\UN(\Q^{\sigma})$ of the form \eqref{upath} are given.
We may assume that $\r_{n}\not\equiv_{\mathcal{O}} \top$.
We obtain the set $E^{+}$ of positive examples by taking
\begin{description}
	\item[$(\mathfrak p_0')$] $\hat{\r}_{0}\dots\hat{\r}_n$;
	
	\item[$(\mathfrak p_1')$] $\hat{\r}_0\dots\hat{\r}_{i-1}\hat{\el}_i\hat{r}_i \dots \hat{\r}_n$;
	
	\item[$(\mathfrak p_2')$] 
	$\hat{\r}_0 \dots \hat{\r}_{i-1} \hat{\el}_{i}^k \hat{\r}_{i} \dots \hat{\r}_{j-1} \hat{\el}_{j}\hat{\r}_j\dots\hat{\r}_n$, \ $i < j$, \ $k = 1,2$.
	
\end{description}
Here, $\hat{\el}_{i}^k$ is a sequence of $k$-many $\hat{\el}_{i}$.
The negative examples $E^{-}$ comprise the following instances $\mathcal{D}$ whenever $\mathcal{D}\not\models \q$:
\begin{description}
	\item[$(\mathfrak n_0')$] 
	$\mathcal{A}_{1},\ldots,\A_{n}$ and $\mathcal{A}_{1},\ldots,\A_{n-i},\A,\A_{n-i+1},\ldots,\A_{n}$, for $(\mathcal{A}, a) \in \mathcal{S}(\{\r_i\})$ and $(\A_{1}, a),\ldots,(\A_{n}, a) \in \mathcal{S}_\bot$;
	
	\item[$(\mathfrak n_1')$] $\D=\hat{\r}_0\dots\hat{\r}_{i-1}\mathcal{A}\hat{\r}_i\dots\hat{\r}_n$, where $(\mathcal{A}, a)$ is an element of $\mathcal{S}(\{\el_{i},\r_{i}\})\cup \mathcal{S}(\{\el_{i}\}) \cup \mathcal{S}_\bot$;
	
	\item[$(\mathfrak n_2')$]
	for all $i$ and $(\mathcal{A}, a) \in \mathcal{S}(\{\el_i,\r_{i}\})\cup\mathcal{S}(\{\el_i\})\cup\mathcal{S}_\bot$, \emph{some} data instance
	$$
	\D^i_{\mathcal{A}} = \hat{\r}_{0}\dots \hat{\r}_{i-1} \mathcal{A} \hat{\r}_{i}\hat{\el}_{i+1}^{k_{i+1}}\hat{\r}_{i+1} \dots \hat{\el}_{n}^{k_n}\r_n,
	$$
	if any, such that $\max(\D^i_{\mathcal{A}}) \le (n+1)^2$ and $\D^i_{\mathcal{A}}\not\models \q^\dag$ for $\q^\dag$ obtained from $\q$ by replacing all $\el_j$, for  $j \le i$, with $\bot$.
\end{description}
%
%
We have $(ii)$ since $(E^{+},E^{-})$ is at most exponential in the size of split-partners of sets with at most two queries. For $(iii)$, observe that $(\mathfrak n_1')$ is exponential in $|\mathcal{S}_{\bot}|$ iff $|\mathcal{S}_{\bot}|\geq 2$.

As a consequence of Lemma~\ref{acr}, Theorem~\ref{thm:mainuntil} $(ii)$ and $(iii)$, and Theorems~\ref{alchi-split} and~\ref{thm:ELsplit} we obtain the following (note that, for every RDFS ontology, the split-partner $\mathcal{S}_{\bot}$ of $\bot$ is a singleton by Example~\ref{ex:22} $(i)$):
\begin{theorem}\label{thmforU}
$(i)$ Each $\q \in \LTL_{pp}\UN(\text{ELIQ}^{\sigma})$  satisfiable \wrt an $\mathcal{ALCHI}$ ontology $\TO$ in a signature $\sigma$ is exponential-size uniquely characterisable \wrt $\TO$ within $\LTL_{p}\UN(\text{ELIQ}^{\sigma})$.

	
$(ii)$ Each $\q \in \LTL_{pp}\UN(\text{ELQ}^{\sigma})$ is polysize uniquely characterisable \wrt any RDFS ontology in $\sigma$   within $\LTL_{p}\UN(\text{ELQ}^{\sigma})$.
\end{theorem}

\newcommand{\ELHIF}{\ensuremath{\mathcal{ELHIF}}\xspace}

\section{Exact Learnability}
\label{sec:learning}

We apply the results on unique characterisability obtained in
Section~\ref{sec:six} to exact learnability of queries
\wrt ontologies.
Given a query class $\Cmc$ and
an ontology \Omc, the \emph{learner} aims to identify a \textit{target
query} $\q_T\in \Cmc$ by means of membership queries of the form `does
$\Omc,\Dmc,0,a\models \q_T$ hold?' to the \emph{teacher}.  We call \Cmc
\textit{polynomial time learnable \wrt
\Lmc ontologies using membership queries} if there is a learning
algorithm that given \Omc constructs (up to equivalence \wrt
\Omc) any $\q_T$
satisfiable \wrt \Omc in time polynomial in the sizes of $\q_T$ and $\Omc$. For the weaker requirement of
\textit{polynomial query learnability}, it suffices that
the total size of the examples given to the oracle be bounded by a
polynomial. 
We start with making the following observation, where exponential
query learnability is defined in the expected way.
\begin{theorem}\label{thm:explearning}
  Let \Lmc be an ontology language and $\Cmc$ be a class of queries
  such that $(1)$ \Cmc has an effective syntax, $(2)$ for every \Lmc-ontology \Omc, every query
  in $\Cmc$ satisfiable \wrt \Omc admits effective
  exponential size unique characterisations
  \wrt \Omc, and $(3)$ satisfiability of queries in \Cmc \wrt
\Lmc-ontologies is decidable. 
  Then, \Cmc is exponential query learnable \wrt \Lmc ontologies.
\end{theorem}
\begin{proof}
  Let $\q_T\in\Cmc$ be the target query and \Omc an \Lmc ontology. 
  Due to~(1), we can enumerate all queries from \Cmc in increasing size. For every
  enumerated $\q$, test whether $\q$ is satisfiable \wrt \Omc,
  using~(3). If so, then using~(2), we compute its unique
  characterisation $(E^+,E^-)$ \wrt \Omc and use membership
  queries to check whether all examples in $E^+$ are positive examples
and all examples in $E^-$ are negative examples. If so, output
  $\q$. 
\end{proof}


Since our main focus in this section is, however, polynomial time and query
learnability, we consider below cases which allow for polynomial
size unique characterisations. As the presence of $\sqcap$ and 
$\bot$ in the ontology language precludes polynomial query
learnability already in the atemporal case, c.f.\ Theorem~6 in~\cite{DBLP:conf/ijcai/FunkJL22}, we follow
their approach and assume 
that the learner also receives an initial positive example
$\Dmc,a$ with $\Dmc$ and $\Omc$
satisfiable. Note that the existence of such an example
implies that the target query is satisfiable \wrt the ontology. In order to state
our main result, we introduce one further natural condition. An ontology language \Lmc \emph{admits polynomial time
instance checking} if given an \Lmc ontology \Omc, a pointed instance
$(\Amc,a)$, and a concept name $A$, it is decidable in polynomial time
whether $\Omc,\Amc\models A(a)$. 
%
\begin{restatable}{theorem}{thmlearning} \label{thm:learning}
  Let \Lmc be an ontology language that contains only
  \ELHI or only $\ELIF$ ontologies and that admits polysize frontiers
  within ELIQ that can be computed. Then:
  \begin{description}

    \item[\rm$(i)$] The class of safe 
      $\LTL_p\NDD(\text{ELIQ})$ queries are polynomial query
      learnable \wrt \Lmc ontologies using membership queries.

    \item[\rm$(ii)$] The class $\LTL_p\NDD(\text{ELIQ})$ is
      polynomial query learnable \wrt \Lmc ontologies using membership
      queries if the learner knows the temporal depth of the target query.

    \item[\rm$(iii)$] The class
      $\LTL_p\ND(\text{ELIQ})$ is polynomial query
      learnable \wrt \Lmc ontologies using membership queries.

  \end{description}
  If $\Lmc$ further admits polynomial time instance checking and polynomial time computable frontiers within ELIQ, then in $(ii)$ and $(iii)$,
  polynomial query learnability can be replaced by polynomial time
  learnability. If, in addition, meet-reducibility \wrt \Lmc ontologies 
  can be decided in polynomial time, then also in $(i)$ polynomial
  query learnability can be replaced by polynomial time learnability.
  %
\end{restatable}

To achieve the generality of the results independently of the exact
languages, in the proof of Theorem~\ref{thm:learning} we rely on the results and techniques from
Section~\ref{sec:six} and general results proved in the context of
exact learning of (atemporal) ELIQs \wrt 
ontologies~\cite{DBLP:conf/dlog/FunkJL22}. 
%
%

Let $\q_T$ be a target query, $\Omc$ be an ontology, and $\Dmc,a$ be a positive
example with $\Dmc=\Amc_0\ldots\Amc_n$ and \Dmc and \Omc
satisfiable. The idea is to modify \Dmc in a number of steps such
that, in the end, \Dmc viewed as temporal query is equivalent to~$\q_T$.

We describe how to show~$(i)$; $(ii)$
and~$(iii)$ are slight modifications thereof. In \textbf{Step 1}, the goal is
to find a temporal data instance $\Dmc$ where each $\Amc_i$ is
\emph{tree-shaped} and hence can be viewed as an ELIQ. This can be done separately for each time point
using membership queries and standard unraveling techniques from the
atemporal setting~\cite{DBLP:conf/dlog/FunkJL22}. 
In \textbf{Step 2}, we exhaustively apply
Rules~(a)-(e) from the proof of Theorem~\ref{thm:first} to \Dmc, as
long as $\Dmc,a$ remains a positive example. In \textbf{Step 3}, we
take care of lone conjuncts in \Dmc (when viewed as a temporal query)
-- recall that $\q_T$ is safe and thus does not have any. For this
step, we rely on a characterisation of meet-reducibility in terms of
\emph{minimal} frontiers. For computing those, we exploit the fact
that containment of ELIQs \wrt \ELHI and \ELIF ontologies is
decidable~\cite{DBLP:conf/ijcai/Bienvenu0LW16}. After Step~3, \Dmc
(viewed as query) is already very similar to $\q_T$. More precisely,
when representing $\q_T$ in shape~\eqref{fullq2appendix} as a sequence of blocks
$\q_0\Rmc_1\q_1\ldots\Rmc_m \q_m$, then \Dmc has the shape
$\Dmc_0\emptyset^b\ldots\emptyset^b \Dmc_m$, for sufficiently large
$b$, and each $\q_i$ is isomorphic to $\Dmc_i$. So
in \textbf{Step~4}, it remains to identify the precise separators
$\Rmc_i$. They can be a single $\leq$ or a sequence of $<$, and
the two
cases can be distinguished using suitable membership queries. 

In order to show that this entire process terminates after asking
polynomially many membership queries, we lift the notion of
\emph{generalisation sequences} from~\cite{DBLP:conf/dlog/FunkJL22} to
the temporal setting. For the sake of convenience, we treat the data
instances in the time points as CQs. A sequence $\D_{1},\ldots$ of
temporal data instances is a \emph{generalisation sequence towards
$\q_T$ \wrt $\Omc$} if for all $i\geq 1$:
  \begin{itemize}

  \item $\D_{i+1}$ is obtained from $\D_{i}$ by modifying one
    non-temporal CQ $\r_{j}$ in $\D_{i}$ to $\r_{j}'$ such that
    $\r_{j}\models_\Omc \r_{j}'$ and $\r_{j}'\not\models_\Omc \r_{j}$;

  \item $\Omc,\D_{i},0,a\models \q_T$ for all $i\geq 1$.

\end{itemize}
Intuitively, data instances in generalisation sequences become weaker
and weaker, and based on this, we show that the length of
generalisation sequences towards $\q_T$ \wrt \Omc is bounded by a
polynomial in $\max(\Dmc_1)$ and the sizes of $\q_T,\Omc$.
The crucial observation is that the sequences
of temporal data instances obtained by rule application are mostly generalisation sequences towards $\q_T$ \wrt \Omc;
thus the steps terminate in polynomial time. If they are not,
we use a different (but usually easier) termination argument.

It remains to note that the sketched algorithm runs in polynomial time
when \Lmc satisfies all the required criteria. 
\qed

We finally apply Theorem~\ref{thm:learning} to concrete ontology
languages, namely $\DL_\Fmc^-$ and $\DL_{\mathcal{H}}$. 
\begin{restatable}{theorem}{thmlearningdllite}\label{thm:learning-dllite}
  %
  The following learnability results hold:
  \begin{description}

    \item[\rm$(i)$] The class of safe queries in
      $\LTL_p\NDD(\text{ELIQ})$ is polynomial query
      learnable \wrt $\DL_{\mathcal{H}}$ ontologies using
      membership queries
      and polynomial time
      learnable \wrt $\DL_{\mathcal{F}}^{-}$ ontologies using
      membership queries.

    \item[\rm$(ii)$] The class $\LTL_p\NDD(\text{ELIQ})$ is polynomial time
      learnable \wrt both $\DL_{\cal{F}}^{-}$ and
      $\DL_{\mathcal{H}}$ ontologies using membership
      queries if the learner knows the temporal depth of the target
      query in advance.

    \item[\rm$(iii)$] The class
      $\LTL_p\ND(\text{ELIQ})$ is polynomial time
      learnable \wrt  both $\DL_{\cal{F}}^{-}$ and
      $\DL_{\mathcal{H}}$ ontologies using membership
      queries.

  \end{description}
\end{restatable}
Theorem~\ref{thm:learning-dllite} is a direct consequence of Theorem~\ref{thm:learning} and the
fact that the considered ontology languages satisfy all conditions
mentioned there. In particular, we show in the appendix that
meet-reducibility of ELIQs \wrt $\DL_{\Fmc}^-$ ontologies Turing
reduces to ELIQ containment \wrt $\DL_\Fmc^-$ ontologies, which is
tractable~\cite{DBLP:conf/ijcai/BienvenuOSX13}. The latter is not true
for $\DL_\Hmc$ which explains the difference in~$(i)$. We leave
it for future work whether $\LTL_p\NDD(\text{ELIQ})$ is
polynomial \emph{time} learnable \wrt $\DL_{\mathcal{H}}$
ontologies.
\section{Outlook}

Many interesting and challenging problems remain to be addressed. We discuss a few of them below.
\begin{description}\itemsep=0pt
\item [\rm(1)] Is it possible to overcome our `negative' unique characterisability results by admitting some form of infinite (but finitely presentable) examples? Some results in this direction without ontologies are obtained in~\cite{Sestic}.

\item [\rm(2)] We have not considered learnability using membership queries of temporal queries with $\U$. In fact, it remains completely open how far our characterisability results for these queries can be exploited to obtain polynomial query (or time) learnability.

\item [\rm(3)] We only considered path queries with no temporal operator occurring in the scope of a DL operator. This is motivated by the negative results of~\cite{DBLP:conf/kr/FortinKRSWZ22}, which showed that $(i)$ applying $\exists P$ to $\nxt\Diamond$-queries quickly leads to non-characterisability and that $(ii)$ even without DL-operators and without ontology, branching $\Diamond$-queries are often not uniquely characterisable. We still  believe there is some scope for useful positive characterisability results.
\end{description}

\bibliographystyle{kr}
\bibliography{local}

\begin{thebibliography}{}

\bibitem[\protect\citeauthoryear{Alexe \bgroup et al\mbox.\egroup
  }{2011}]{DBLP:journals/tods/AlexeCKT11}
Alexe, B.; ten Cate, B.; Kolaitis, P.~G.; and Tan, W.~C.
\newblock 2011.
\newblock Characterizing schema mappings via data examples.
\newblock {\em ACM Trans. Database Syst.} 36(4):23.

\bibitem[\protect\citeauthoryear{Angluin}{1987a}]{DBLP:journals/iandc/Angluin87}
Angluin, D.
\newblock 1987a.
\newblock Learning regular sets from queries and counterexamples.
\newblock {\em Inf. Comput.} 75(2):87--106.

\bibitem[\protect\citeauthoryear{Angluin}{1987b}]{DBLP:journals/ml/Angluin87}
Angluin, D.
\newblock 1987b.
\newblock Queries and concept learning.
\newblock {\em Mach. Learn.} 2(4):319--342.

\bibitem[\protect\citeauthoryear{Arenas and
  Diaz}{2016}]{DBLP:journals/tods/ArenasD16}
Arenas, M., and Diaz, G.~I.
\newblock 2016.
\newblock The exact complexity of the first-order logic definability problem.
\newblock {\em {ACM} Trans. Database Syst.} 41(2):13:1--13:14.

\bibitem[\protect\citeauthoryear{Artale \bgroup et al\mbox.\egroup
  }{2009}]{dllite-jair09}
Artale, A.; Calvanese, D.; Kontchakov, R.; and Zakharyaschev, M.
\newblock 2009.
\newblock The {DL-Lite} family and relations.
\newblock {\em J.\ Artif.\ Intell.\ Res.} 36:1--69.

\bibitem[\protect\citeauthoryear{Artale \bgroup et al\mbox.\egroup
  }{2014}]{DBLP:journals/tocl/ArtaleKRZ14}
Artale, A.; Kontchakov, R.; Ryzhikov, V.; and Zakharyaschev, M.
\newblock 2014.
\newblock A cookbook for temporal conceptual data modelling with description
  logics.
\newblock {\em {ACM} Trans. Comput. Log.} 15(3):25:1--25:50.

\bibitem[\protect\citeauthoryear{Artale \bgroup et al\mbox.\egroup
  }{2017}]{DBLP:conf/time/ArtaleKKRWZ17}
Artale, A.; Kontchakov, R.; Kovtunova, A.; Ryzhikov, V.; Wolter, F.; and
  Zakharyaschev, M.
\newblock 2017.
\newblock Ontology-mediated query answering over temporal data: {A} survey.
\newblock In {\em Proc.\ of {TIME}},  1:1--1:37.
\newblock Schloss Dagstuhl, Leibniz-Zentrum f{\"{u}}r Informatik.

\bibitem[\protect\citeauthoryear{Artale \bgroup et al\mbox.\egroup
  }{2022}]{DBLP:journals/jair/ArtaleKKRWZ22}
Artale, A.; Kontchakov, R.; Kovtunova, A.; Ryzhikov, V.; Wolter, F.; and
  Zakharyaschev, M.
\newblock 2022.
\newblock First-order rewritability and complexity of two-dimensional temporal
  ontology-mediated queries.
\newblock {\em J. Artif. Intell. Res.} 75:1223--1291.

\bibitem[\protect\citeauthoryear{Baader \bgroup et al\mbox.\egroup
  }{2017}]{DL-Textbook}
Baader, F.; Horrocks, I.; Lutz, C.; and Sattler, U.
\newblock 2017.
\newblock {\em An Introduction to Description Logics}.
\newblock Cambridge University Press.

\bibitem[\protect\citeauthoryear{Baader, Borgwardt, and
  Lippmann}{2015}]{DBLP:journals/ws/BaaderBL15}
Baader, F.; Borgwardt, S.; and Lippmann, M.
\newblock 2015.
\newblock Temporal query entailment in the description logic {SHQ}.
\newblock {\em J. Web Semant.} 33:71--93.

\bibitem[\protect\citeauthoryear{Barcel{\'{o}} and
  Romero}{2017}]{DBLP:conf/icdt/Barcelo017}
Barcel{\'{o}}, P., and Romero, M.
\newblock 2017.
\newblock The complexity of reverse engineering problems for conjunctive
  queries.
\newblock In {\em Proc.\ of {ICDT}},  7:1--7:17.

\bibitem[\protect\citeauthoryear{Bienvenu \bgroup et al\mbox.\egroup
  }{2013}]{DBLP:conf/ijcai/BienvenuOSX13}
Bienvenu, M.; Ortiz, M.; Simkus, M.; and Xiao, G.
\newblock 2013.
\newblock Tractable queries for lightweight description logics.
\newblock In {\em Proc.\ of {IJCAI}},  768--774.
\newblock {IJCAI/AAAI}.

\bibitem[\protect\citeauthoryear{Bienvenu \bgroup et al\mbox.\egroup
  }{2014}]{DBLP:journals/tods/BienvenuCLW14}
Bienvenu, M.; ten Cate, B.; Lutz, C.; and Wolter, F.
\newblock 2014.
\newblock Ontology-based data access: {A} study through disjunctive datalog,
  {CSP}, and {MMSNP}.
\newblock {\em {ACM} Trans. Database Syst.} 39(4):33:1--33:44.

\bibitem[\protect\citeauthoryear{Bienvenu \bgroup et al\mbox.\egroup
  }{2016}]{DBLP:conf/ijcai/Bienvenu0LW16}
Bienvenu, M.; Hansen, P.; Lutz, C.; and Wolter, F.
\newblock 2016.
\newblock First order-rewritability and containment of conjunctive queries in
  {Horn} description logics.
\newblock In {\em Proc.\ of IJCAI},  965--971.

\bibitem[\protect\citeauthoryear{Borgwardt and
  Thost}{2015}]{DBLP:conf/ijcai/BorgwardtT15}
Borgwardt, S., and Thost, V.
\newblock 2015.
\newblock Temporal query answering in the description logic {EL}.
\newblock In {\em Proc.\ of {IJCAI}},  2819--2825.
\newblock {AAAI} Press.

\bibitem[\protect\citeauthoryear{Calvanese \bgroup et al\mbox.\egroup
  }{2007a}]{DBLP:conf/ijcai/CalvaneseGLLR07}
Calvanese, D.; Giacomo, G.~D.; Lembo, D.; Lenzerini, M.; and Rosati, R.
\newblock 2007a.
\newblock {EQL}-{L}ite: Effective first-order query processing in description
  logics.
\newblock In {\em Proc.\ of {IJCAI}},  274--279.

\bibitem[\protect\citeauthoryear{Calvanese \bgroup et al\mbox.\egroup
  }{2007b}]{romans}
Calvanese, D.; Giacomo, G.~D.; Lembo, D.; Lenzerini, M.; and Rosati, R.
\newblock 2007b.
\newblock Tractable reasoning and efficient query answering in description
  logics: The {DL-Lite} family.
\newblock {\em J. Autom. Reasoning} 39(3):385--429.

\bibitem[\protect\citeauthoryear{Camacho and
  McIlraith}{2019}]{DBLP:conf/aips/CamachoM19}
Camacho, A., and McIlraith, S.~A.
\newblock 2019.
\newblock Learning interpretable models expressed in linear temporal logic.
\newblock In {\em Proc.\ of {ICAPS}},  621--630.
\newblock {AAAI} Press.

\bibitem[\protect\citeauthoryear{Chang and Keisler}{1998}]{modeltheory}
Chang, C., and Keisler, H.~J.
\newblock 1998.
\newblock {\em Model Theory}.
\newblock Elsevier.

\bibitem[\protect\citeauthoryear{Fijalkow and
  Lagarde}{2021}]{DBLP:journals/corr/abs-2102-00876}
Fijalkow, N., and Lagarde, G.
\newblock 2021.
\newblock The complexity of learning linear temporal formulas from examples.
\newblock {\em CoRR} abs/2102.00876.

\bibitem[\protect\citeauthoryear{Fortin \bgroup et al\mbox.\egroup
  }{2022}]{DBLP:conf/kr/FortinKRSWZ22}
Fortin, M.; Konev, B.; Ryzhikov, V.; Savateev, Y.; Wolter, F.; and
  Zakharyaschev, M.
\newblock 2022.
\newblock Unique characterisability and learnability of temporal instance
  queries.
\newblock In {\em Proc.\ of KR}.

\bibitem[\protect\citeauthoryear{Fortin \bgroup et al\mbox.\egroup
  }{2023}]{ourijcai23}
Fortin, M.; Konev, B.; Ryzhikov, V.; Savateev, Y.; Wolter, F.; and
  Zakharyaschev, M.
\newblock 2023.
\newblock Reverse engineering of temporal queries mediated by {LTL} ontologies.
\newblock In {\em Proceedings of IJCAI}.

\bibitem[\protect\citeauthoryear{Funk \bgroup et al\mbox.\egroup
  }{2019}]{DBLP:conf/ijcai/FunkJLPW19}
Funk, M.; Jung, J.~C.; Lutz, C.; Pulcini, H.; and Wolter, F.
\newblock 2019.
\newblock Learning description logic concepts: When can positive and negative
  examples be separated?
\newblock In {\em Proc. of {IJCAI}},  1682--1688.

\bibitem[\protect\citeauthoryear{Funk, Jung, and
  Lutz}{2021}]{DBLP:conf/ijcai/FunkJL21}
Funk, M.; Jung, J.~C.; and Lutz, C.
\newblock 2021.
\newblock Actively learning concepts and conjunctive queries under
  {EL}r-ontologies.
\newblock In {\em Proc.\ of {IJCAI}},  1887--1893.
\newblock ijcai.org.

\bibitem[\protect\citeauthoryear{Funk, Jung, and
  Lutz}{2022a}]{DBLP:conf/dlog/FunkJL22}
Funk, M.; Jung, J.~C.; and Lutz, C.
\newblock 2022a.
\newblock Exact learning of {ELI} queries in the presence of dl-lite-horn
  ontologies.
\newblock In {\em Proc.\ of DL}.
\newblock CEUR-WS.org.

\bibitem[\protect\citeauthoryear{Funk, Jung, and
  Lutz}{2022b}]{DBLP:conf/ijcai/FunkJL22}
Funk, M.; Jung, J.~C.; and Lutz, C.
\newblock 2022b.
\newblock Frontiers and exact learning of {ELI} queries under dl-lite
  ontologies.
\newblock In {\em Proc.\ of IJCAI},  2627--2633.

\bibitem[\protect\citeauthoryear{Guti{\'{e}}rrez{-}Basulto, Jung, and
  Kontchakov}{2016}]{DBLP:conf/ijcai/Gutierrez-Basulto16}
Guti{\'{e}}rrez{-}Basulto, V.; Jung, J.~C.; and Kontchakov, R.
\newblock 2016.
\newblock Temporalized {EL} ontologies for accessing temporal data: Complexity
  of atomic queries.
\newblock In {\em Proc.\ of IJCAI},  1102--1108.
\newblock {IJCAI/AAAI} Press.

\bibitem[\protect\citeauthoryear{Guti{\'e}rrez-Basulto, Jung, and
  Sabellek}{2018}]{GuJuSa-IJCAI18}
Guti{\'e}rrez-Basulto, V.; Jung, J.~C.; and Sabellek, L.
\newblock 2018.
\newblock Reverse engineering queries in ontology-enriched systems: The case of
  expressive {Horn} description logic ontologies.
\newblock In {\em Proc. of {IJCAI-ECAI}}.

\bibitem[\protect\citeauthoryear{Jung \bgroup et al\mbox.\egroup
  }{2022}]{DBLP:journals/ai/JungLPW22}
Jung, J.~C.; Lutz, C.; Pulcini, H.; and Wolter, F.
\newblock 2022.
\newblock Logical separability of labeled data examples under ontologies.
\newblock {\em Artif. Intell.} 313:103785.

\bibitem[\protect\citeauthoryear{Kikot, Kontchakov, and
  Zakharyaschev}{2011}]{DBLP:conf/dlog/KikotKZ11}
Kikot, S.; Kontchakov, R.; and Zakharyaschev, M.
\newblock 2011.
\newblock On (in)tractability of {OBDA} with {OWL} 2 {QL}.
\newblock In {\em Proc.\ of {DL}}.
\newblock CEUR-WS.org.

\bibitem[\protect\citeauthoryear{Kikot, Kontchakov, and
  Zakharyaschev}{2012}]{DBLP:conf/kr/KikotKZ12}
Kikot, S.; Kontchakov, R.; and Zakharyaschev, M.
\newblock 2012.
\newblock Conjunctive query answering with owl 2 ql.
\newblock In {\em Proc.\ of KR}.

\bibitem[\protect\citeauthoryear{Kolaitis}{2011}]{kolaitis:LIPIcs:2011:3359}
Kolaitis, P.~G.
\newblock 2011.
\newblock {Schema Mappings and Data Examples: Deriving Syntax from Semantics
  (Invited Talk)}.
\newblock In {\em Proc.\ of {FSTTCS}}, volume~13,  25--25.
\newblock Dagstuhl, Germany: Schloss Dagstuhl--Leibniz-Zentrum fuer Informatik.

\bibitem[\protect\citeauthoryear{Konev \bgroup et al\mbox.\egroup
  }{2017}]{DBLP:journals/jmlr/KonevLOW17}
Konev, B.; Lutz, C.; Ozaki, A.; and Wolter, F.
\newblock 2017.
\newblock Exact learning of lightweight description logic ontologies.
\newblock {\em J. Mach. Learn. Res.} 18:201:1--201:63.

\bibitem[\protect\citeauthoryear{Lehmann and
  Hitzler}{2010}]{DBLP:journals/ml/LehmannH10}
Lehmann, J., and Hitzler, P.
\newblock 2010.
\newblock Concept learning in description logics using refinement operators.
\newblock {\em Machine Learning} 78:203--250.

\bibitem[\protect\citeauthoryear{Lemieux, Park, and
  Beschastnikh}{2015}]{lemieux2015general}
Lemieux, C.; Park, D.; and Beschastnikh, I.
\newblock 2015.
\newblock General ltl specification mining (t).
\newblock In {\em Proc.\ of ASE},  81--92.
\newblock IEEE.

\bibitem[\protect\citeauthoryear{Martins}{2019}]{martins2019reverse}
Martins, D. M.~L.
\newblock 2019.
\newblock Reverse engineering database queries from examples: State-of-the-art,
  challenges, and research opportunities.
\newblock {\em Inf. Syst.} 83:89--100.

\bibitem[\protect\citeauthoryear{McKenzie}{1972}]{McKenzie1972EquationalBA}
McKenzie, R.
\newblock 1972.
\newblock Equational bases and nonmodular lattice varieties.
\newblock {\em Transactions of the American Mathematical Society} 174:1--43.

\bibitem[\protect\citeauthoryear{Neider and
  Gavran}{2018}]{DBLP:conf/fmcad/NeiderG18}
Neider, D., and Gavran, I.
\newblock 2018.
\newblock Learning linear temporal properties.
\newblock In {\em Proc.\ of {FMCAD}},  1--10.
\newblock {IEEE}.

\bibitem[\protect\citeauthoryear{Ozaki}{2020}]{DBLP:journals/ki/Ozaki20}
Ozaki, A.
\newblock 2020.
\newblock Learning description logic ontologies: Five approaches. where do they
  stand?
\newblock {\em K{\"{u}}nstliche Intell.} 34(3):317--327.

\bibitem[\protect\citeauthoryear{Sestic}{2023}]{Sestic}
Sestic, P.
\newblock 2023.
\newblock Unique characterisability of linear temporal logic.
\newblock Msc, University of Amsterdam.

\bibitem[\protect\citeauthoryear{Staworko and
  Wieczorek}{2015}]{DBLP:conf/icdt/StaworkoW15}
Staworko, S., and Wieczorek, P.
\newblock 2015.
\newblock Characterizing {XML} twig queries with examples.
\newblock In {\em Proc.\ of {ICDT}},  144--160.

\bibitem[\protect\citeauthoryear{ten Cate and
  Dalmau}{2022}]{DBLP:journals/tods/CateD22}
ten Cate, B., and Dalmau, V.
\newblock 2022.
\newblock Conjunctive queries: Unique characterizations and exact learnability.
\newblock {\em {ACM} Trans. Database Syst.} 47(4):14:1--14:41.

\bibitem[\protect\citeauthoryear{ten Cate \bgroup et al\mbox.\egroup
  }{2023}]{DBLP:conf/ijcai/CateFJL23}
ten Cate, B.; Funk, M.; Jung, J.~C.; and Lutz, C.
\newblock 2023.
\newblock Sat-based {PAC} learning of description logic concepts.
\newblock In {\em Proc.\ of {IJCAI}},  3347--3355.
\newblock ijcai.org.

\bibitem[\protect\citeauthoryear{ten Cate, Dalmau, and
  Kolaitis}{2013}]{DBLP:journals/tods/CateDK13}
ten Cate, B.; Dalmau, V.; and Kolaitis, P.~G.
\newblock 2013.
\newblock Learning schema mappings.
\newblock {\em {ACM} Trans. Database Syst.} 38(4):28:1--28:31.

\bibitem[\protect\citeauthoryear{ten Cate, Koudijs, and Ozaki}{2024}]{CateKO24}
ten Cate, B.; Koudijs, R.; and Ozaki, A.
\newblock 2024.
\newblock On the power and limitations of examples for description logic
  concepts.
\newblock In {\em Proc.\ of IJCAI}.

\bibitem[\protect\citeauthoryear{Tobies}{2001}]{DBLP:phd/dnb/Tobies01}
Tobies, S.
\newblock 2001.
\newblock {\em Complexity results and practical algorithms for logics in
  knowledge representation}.
\newblock Ph.D. Dissertation, {RWTH} Aachen University, Germany.

\bibitem[\protect\citeauthoryear{Wałęga \bgroup et al\mbox.\egroup
  }{2020}]{KR2020-79}
Wałęga, P.~A.; Cuenca~Grau, B.; Kaminski, M.; and V.~Kostylev, E.
\newblock 2020.
\newblock {DatalogMTL over the Integer Timeline}.
\newblock In {\em Proc.\ of {KR}},  768--777.

\end{thebibliography}

\newpage

\newpage
\begin{appendix}
%
%

\section{Proofs and Detailed Definitions for Section~\ref{Sec:uniqueChar}}
	\acr*
	
	\begin{proof}
		(1) By~\cite[Proposition 5.9]{DBLP:journals/tods/BienvenuCLW14}, for any FO-ontology $\mathcal{O}$ without =, any CQ $\q$, and any pointed instances $\A_{1},a_{1}$ and $\A_{2},a_{2}$, if there is $h \colon \A_{1} \to \A_{2}$ with $h(a_{1}) = a_{2}$, then $\mathcal{O},\A_{1} \models \q(a_{1})$ implies $\mathcal{O},\A_{2} \models \q(a_{2})$.
		Let $(\hat{\q}(x),a)$ be \emph{induced} by $\q(x)$, i.e., obtained by replacing the variables in $\q$ by distinct constants, with $x$ replaced by $a$.
		Suppose $\mathcal{O},\hat{\q}\models \q'$ and $\mathcal{O},\A\models \q(a)$ but $\mathcal{O},\A\not\models \q'(a)$. Take a model $\I$ witnessing $\mathcal{O},\A\not\models \q_{2}(a)$. Then
		$\I \models \q_{1}(a)$ and this is witnessed by a homomorphism $h:\q_{1}\rightarrow \I$. Take the image $h(\q_{1})$. Then $\mathcal{O},\widehat{h(\q_{1})}\not\models \q_{2}(x)$ is witnessed by $\I$, and so $\mathcal{O},\hat{\q}_{1}\not\models \q_{2}(x)$, which is a contradiction.
		
		(2) Let $\mathcal{O}$ be a $\mathcal{ELIF}$ ontology. Given a CQ $\q(x)$, define an equivalence relation $\sim$ on $\var(\q)$ as the transitive closure of the following relation:  $y\sim' z$ iff there is $u \in \var(\q)$ such that $S(u,y),S(u,z) \in \q$, for a functional $S$ in $\mathcal{O}$. Let $\q/_\sim$ be obtained by identifying (glueing together) all of the variables in each equivalence class $y/_\sim$. Clearly, $\q/_\sim(x/_\sim)$ is a homomorphic image of $\q(x)$ and $\q(x) \equiv_\mathcal{O} \q/_\sim(x/_\sim)$. We define $(\hat{\q},a)$ as the pointed data instance induced by $\q/_\sim(x/_\sim)$. Conditions (cr$_1$) and (cr$_2$) are obvious, and (cr$_3$) follows from the fact that $\q/_\sim \models_{\mathcal{O}} \q'/_\sim$ iff $\q/_\sim \models_{\mathcal{O}'} \q'/_\sim$, where $\mathcal{O}'$ is obtained from $\mathcal{O}$ by omitting all of its functionality constraints, which is in the scope of part (1).
		
		(3) Suppose otherwise. Let $\mathcal{O} = \{ \ge 3 \,P \sqsubseteq \bot\}$ and let $\q(x) = \{P(x,y_i), A_i(y_i) \mid i =1,2,3\}$ with a suitable $(\hat{\q},a)$. As $\mathcal{O}$ and the instance induced by $\q$ are not satisfiable and in view of (cr$_2$), $\hat{\q}$ contains at most three individuals, say, $\hat{\q} = \{P(a,b), A_1(b), A_2(b), P(a,c), A_3(c)\}$. But then, by (cr$_3$), $\q'(x) = \{P(x,y), A_1(y), A_2(y), P(x,z), A_3(z)\}$ should satisfy $\q\models_{\mathcal{O}} \q'$, which is not the case as witnessed by $\A = \{P(a,b), A_1(b), P(a,c), A_2(c), A_3(c)\}$ because $\mathcal{O}, \A \models \q(a)$ but $\mathcal{O}, \A \not\models \q'(a)$.
	\end{proof}
	
	\norm*
	
	\begin{proof}
		To show that $E'$ is as required, we first observe that $\q$ fits $E'$ by (\textbf{cr}$_1$) and (\textbf{cr}$_3$). Suppose $\q' \not \equiv_{\TO} \q$ for some $\q' \in \Q$. We show that then either $\TO, \hat{\q} \not \models \q'$ or $\TO, \D \models \q'$ for some $\D \in E^-$.
		Let $\q \not \models_\TO \q'$. Then $\TO, \hat{\q} \not \models \q'$ by (\textbf{cr}$_3$). Let $\q \models_\TO \q'$ and $\q' \not \models_\TO \q$. Then $\TO, \D \models \q'$ for all $\D \in E^+$, and so $\TO, \D \models \q'$, for some $\D \in E^-$, because $E$ is a unique characterisation of $\q$ \wrt $\TO$.
	\end{proof}

\critO*
	\begin{proof}
		By \textbf{(cr$_2$)}, $\TO, \hat{\q} \models \q(a)$.
		To show $\TO, \hat{\r} \not \models \q(a)$ for all $\r \in \mathcal{F}_{\q}$, we observe that $\r \not \models_{\TO} \q$ by the definition of $\mathcal{F}_{\q}$, so $\r(x)$ is consistent with $\TO$ and by \textbf{(cr$_3$)} for $\r$, from which $\TO, \hat{\r} \not \models \q(a)$. Thus, $\q$ fits $E$.
		
		Let $\q' \in \Q$ and $\q \not\equiv_\TO \q'$. We  show that either $\TO, \hat{\q} \not \models \q'$ or $\TO, \hat{\r} \models \q'$ for some $\r \in \mathcal{F}_{\q}$. If $\q \not \models_\TO \q'$, then, since $\TO$ admits containment reduction and $\q(x)$ is satisfiable \wrt $\TO$, we obtain $\TO, \hat{\q} \not \models \q'$ by \textbf{(cr$_3$)}.
		So suppose $\q \models_\TO \q'$ and $\q' \not \models_\TO \q$. As $\mathcal{F}_{\q}$ is a frontier of $\q$ \wrt $\mathcal{O}$,   there is $\r \in \mathcal{F}_{\q}$ with $\r \models_{\TO} \q'$. If $\r(x)$ is unsatisfiable \wrt $\TO$, then $\TO$ and $\hat{\r}$ are unsatisfiable by \textbf{(cr$_1$)}, and so $\TO, \hat{\r} \models \q'$. And if $\r(x)$ is satisfiable \wrt $\TO$, we  obtain $\TO, \hat{\r} \models \q'(a)$ by \textbf{(cr$_3$)}.
	\end{proof}

\alchi*
	
\begin{proof}
	Suppose a finite set $Q \subseteq \text{ELIQ}^\sigma$ and an $\mathcal{ALCHI}$-ontology $\mathcal{O}$ in the signature $\sigma$ are given. Let $\sub_{\TO, Q}$ be the closure under single negation of the set of subconcepts of concepts in $Q$ and $\mathcal{O}$. A \emph{type for} $\TO$ is any maximal subset $\tp \subseteq \sub_{\TO, Q}$ consistent with $\TO$. Let $\Type$ be the set  of all types for $\TO$.
	%
	Define a $\sigma$-data instance $\A$ with $\ind (\A) = \Type$, $A(\tp) \in \A$ for all concept names $A \in \sigma$ and $\tp$ such that $A \in \tp$, and $P(\tp, \tp') \in \A$ for all role names $P \in \sigma$, $\tp$ and $\tp'$ such that $\tp$ and $\tp'$ can be satisfied by the domain elements of a model of $\TO$ that are related via $P$. We consider an interpretation $\I_\A$ with $\Delta^{\I_\A} = \{ \ind( \A ) \}$, $A^{\I_\A} = \{ \tp \mid A(\tp) \in \A\}$ for concept names $A \in \sigma$, $A^{\I_\A} = \emptyset$ for $A \not \in \sigma$, $P^{\I_\A} = \{ (\tp, \tp') \mid P(\tp, \tp') \in \A\}$ for role names $P \in \sigma$, $P^{\I_\A} = \emptyset$ for $P \not \in \sigma$. It can be readily checked that, for any $\q \in \Q^\sigma$,
	\begin{align}\label{eq:alc-can-mod1}
		&\I_\A \models \TO, \A,\\
		&\label{eq:alc-can-mod2}   \q \in \tp \quad \text{iff} \quad \I_\A, \tp \models \q \quad \text{iff} \quad \TO, \A \models \q(\tp).
	\end{align}
	%
	%
	Let $Q = \{\q_1, \dots, \q_n\}$ and let $\A^{n}$ be the $n$-times direct product of $\A$. Set
	$$
	\mathcal S(Q) = \{(\A^n, (\tp_1, \dots, \tp_n)) \mid \neg \q_i \in \tp_i, \ i=1,\dots n \}.
	$$
	We prove~\eqref{eq:split-def} for an arbitrary $\q' \in \Q^\sigma$. For the $(\Rightarrow)$ direction, suppose $\TO, \A^n \models \q'(\vec{\tp})$ for some $(\A^n, \vec{\tp}) \in \mathcal S(Q)$ and $\vec{\tp} = (\tp_1, \dots, \tp_n)$. Fix $i \in \{1, \dots, n\}$. Observe that the \emph{projection} map $h((\tp_1', \dots, \tp_n')) = \tp_i'$ for $(\tp_1', \dots, \tp_n') \in \Type^n$ is a homomorphism from $\A^n$ to $\A$ such that $h(\vec{tp}) = \tp_i$. As in the proof of Lemma~\ref{acr} $(1)$, we obtain $\TO, \A \models \q'(\tp_i)$. Recall that $\neg \q_i \in \tp_i$. Then, by~\eqref{eq:alc-can-mod2}, we have $\I_\A, \tp_i \models \q'$, $\I_\A, \tp_i \not \models \q_i$, and so using~\eqref{eq:alc-can-mod1} we obtain $\q' \not \models_\TO \q_i$. For the opposite direction, suppose $\q' \not \models_\TO \q_i$ for all $1 \leq i \leq n$. It follows that, for each $i$, there exists $\tp_i \in \Type$ such that $\q', \neg \q_i \in \tp_i$. Let $\vec{\tp} = (\tp_1, \dots, \tp_n)$. Clearly, $(\A^n, \vec{tp}) \in \mathcal S(Q)$ and it remains to show that $\TO, \A^n \models \q'(\vec{\tp})$. Observe that, for each $\tp_i$, by~\eqref{eq:alc-can-mod2},  there exists a homomorphism $h_i$ that maps $\q'$ into $\I_\A$ with its root mapped to $\tp_i$. By the construction of $\I_\A$, the same holds for $\A$ in place of $\I_A$. Because $\A^n$ is a direct product, there exists a homomorphism that maps $\q'$ into $\A^n$ with its root mapped to $\vec{tp}$. Thus, $\A^n \models \q'(\avec{\vec{\tp}})$.
\end{proof}	

We observe that split partners for conjunctions can be obtained
from split partners for the conjuncts.
\begin{lemma} \label{lem:split-conjuncts}
	Let $\sigma$ be a signature, $\Qmc^\sigma$ be a subset of CQ and
	\Lmc an arbitrary logic.  Let $\q=\q_1\wedge\q_2$ be any CQ and \Omc
	be an \Lmc-ontology.  Then, if $\Smc_1,\Smc_2$ are split partners
	for $\q_1,\q_2$ \wrt \Omc within $\Qmc^\sigma$, then
	$\Smc_1\cup\Smc_2$ is a split partner for $\q$ \wrt \Omc within
	$\Qmc^\sigma$.
\end{lemma}
\begin{proof}
	Let $\q'\in \Qmc^\sigma$ arbitrary.
	
	Suppose first $\Omc,\Amc\models \q'(a)$ for some $(\Amc,a)\in \Smc$.
	Then $\Omc,\Amc\models \q'(a)$ for some $(\Amc,a)\in \Smc_i$, for
	some $i\in \{1,2\}$. Since $\Smc_i$ is a split-partner for $\q_i$
	\wrt \Omc within $\Qmc^\sigma$, we have $\q'\not\models_\Omc \q_i$,
	and thus $\q'\not\models_\Omc \q$.
	
	Suppose now that $\q'\not\models_\Omc \q$. Thus $\q'\not\models_\Omc
	\q_i$, for some $i\in\{1,2\}$.  Since $\Smc_i$ is a split-partner
	for $\q_i$ \wrt \Omc within $\Qmc^\sigma$, we have $\Omc,\Amc\models
	\q'(a)$ for some $(\Amc,a)\in \Smc_i$. Hence, $\Omc,\Amc\models
	\q'(a)$ for some $(\Amc,a)\in \Smc$.
\end{proof}

\elsplit*
\begin{proof}
	We prove the statement for $n=1$, the generalisation is
	straightforward. Let $Q=\{\q\}$. The construction is by induction
	over the depth of $\q$.  Assume $\text{depth}(\q)=0$. Due to
	Lemma~\ref{lem:split-conjuncts}, it suffices to consider
	$\q=A$ with $A$ a concept name. Define a data instance $\Amc$ by
	taking
	\begin{eqnarray*}
		\mathcal{A} & = & \{B(a) \mid \TO \not\models B \sqsubseteq A, B \in \sigma\}\cup\\
		& &  \{ R(a,b) \mid \TO\not\models \exists R\sqsubseteq A,R\in \sigma\} \cup \\
		&  & \{ B(b), R(b,b) \mid B,R\in \sigma\}
	\end{eqnarray*}
	and set $\mathcal{S}(\q) = \{ (\mathcal{A},a)\}$. We show that $\mathcal{S}(\q)$ is as required. Assume
	\[ \q'=\bigwedge_{i=1}^{m_{1}}B_{i} \wedge \bigwedge_{i=1}^{m_{2}}
	\exists R_{i}.\q_{i}.
	\]
	If $\q'\not\models_{\TO}\q$, then
	\begin{itemize}
		
		\item $\TO\not\models B_{j}\sqsubseteq A$ for all $B_{j}$;
		
		\item $\TO\not\models \exists R_{j} \sqsubseteq A$ for all $R_{j}$.
		
	\end{itemize}
	Then $\TO, \mathcal{A}\models \q'(a)$, as required.
	
	Conversely, if $\TO, \mathcal{A} \models \q'(a)$, then
	\begin{itemize}
		
		\item for all $B_{j}$ there exists $B(a)\in \A$ with $\TO
		\models B \sqsubseteq B_{j}$. Then $\TO\not\models B \sqsubseteq
		A$, and so $\TO\not\models B_{j}\sqsubseteq A$;
		
		\item for all $\exists R_{j}.\q_{j}$ there exists $R(a,b)\in
		\A$ with $\TO\models R\sqsubseteq R_{j}$. Then
		$\TO\not\models \exists R\sqsubseteq A$, and so
		$\TO\not\models \exists R_{j}\sqsubseteq A$.
		
	\end{itemize}
	Hence, $\q'\not\models_{\TO}\q$, as required.
	
	\medskip
	
	Assume now that $\text{depth}(\q)=n+1$ and that split partners
	$\mathcal{S}(\q')$ have been defined for queries of depth
	$\leq n$. In view of Lemma~\ref{lem:split-conjuncts} it suffices to
	consider the case
	\[
	\q=\exists S_1.\q_1,\]
	for an ELIQ $\q_1$ of depth $\leq n$ with split partner
	$\mathcal{S}(\q_{1})=
	\{(\mathcal{A}_{1},a_{1}),\ldots,(\mathcal{A}_{k},a_{k})\}$.
	
	Let for all $R$ with
	$\TO\models R \sqsubseteq S_1$, $I_{R}$ be the set of $j\leq k$ with
	$\TO,\A_{j}\models A(a_{j})$ whenever $\TO \models \exists R^-
	\sqsubseteq A$. Define a data instance \Amc by taking
	\begin{eqnarray*}
		\mathcal{A} & = & \{B(a) \mid B \in \sigma\}\cup\\
		&   &  \{ R(a,b), S(b,b), B(b) \mid \TO\not\models R\sqsubseteq
		S_{1},R,B,S\in \sigma\} \cup \\
		& &  \{ R(a,a_{j}) \mid j\in I_{R}, R\in \sigma\} \cup\\
		& &  \mathcal{A}_{1}(a_{1}) \cup \cdots \cup
		\mathcal{A}_{k}(a_{k}).
	\end{eqnarray*}
	and set $\Smc(\q)=\{(\Amc,a)\}$.
	%
	We show that $\mathcal{S}(\q)$ is as required.
	We show the following claim for arbitrary $\q'$ of the form
	\[
	\q'=\bigwedge_{i=1}^{m_{1}}B_{i} \wedge \bigwedge_{i=1}^{m_{2}}
	\exists R_{i}.\q_{i}'.
	\]

	\medskip \noindent \emph{Claim 1.} $\q'\not\models_{\TO} \exists S_{1}.\q_{1}$ iff $\TO,\A_{i}\models \q'(a)$.
	
	\medskip \noindent \emph{Proof of Caim 1.} If
	$\q'\not\models_{\TO}\exists S_{1}.\q_{1}$, then for all $\exists
	R_{j}.\q_{j}'$ with $\TO\models R_{j}\sqsubseteq S_{1}$ we have
	\[ \exists R_{j}^-\sqcap \q_{j}' \not\models_{\TO}\q_{1}.  \]
	Take any $j$. Let $C_{R_{j}}$ be the conjunction of all $A$ with
	$\TO\models\exists R_{j}^-\sqsubseteq A$. Then
	\[ C_{R_{j}}\sqcap \q_{j}' \not\models_{\TO}\q_{1}.  \]
	By the definition of split-partners, there is $\ell$ with
	\[ \TO,\A_{\ell}\models C_{R_{j}}\sqcap \q_{j}'(a_{\ell}).  \]
	It now follows immediately that $\TO,\A\models \exists
	R_{j}.\q_{j}'(a)$. Hence $\TO,\A\models \q'(a)$ follows, as
	required.
	
	Conversely, assume $\q'\models_{\TO}\exists S_{1}.\q_{1}$. Then there exists
	$\exists R_{j}.\q_{j}'$ with $\TO\models R_{j}\sqsubseteq S_{1}$ and
	\[
	\exists R_{j}^-\sqcap \q_{j}' \models_{\TO}\q_{1}.
	\]
	Let again $C_{R_{j}}$ be the conjunction of all $A$ with
	$\TO\models\exists R_{j}^-\sqsubseteq A$. Then
	\[
	C_{R_{j}}\sqcap \q_{j}' \models_{\TO}\q_{i}.
	\]
	By the definition of split-partners,
	$$
	\TO,\A_{\ell}\not\models C_{R_{j}}\sqcap \q_{j}'(a_{\ell}).
	$$
	for all $\ell\leq k$. But then $\TO,\A\not\models \exists R_{j}.\q_{j}'(a)$ and so $\TO,\A\not\models \q'(a)$, as required.
	
	\smallskip
	This finishes the proof of Claim~1 and, in fact, of the
	Theorem.
	%
\end{proof}

\thmcritone*

%
\begin{proof}
	Clearly, $\q$ fits $E$ as $\mathcal{O},\hat{\q}\models \q(a)$ and $\mathcal{O},\A\not\models \q(a)$ for any $(\A,a)\in \mathcal{S}_{\q}$ as otherwise $\q\not\models_{\mathcal{O}} \q$. Let $\q'\not\equiv_{\mathcal{O}}\q$. If $\q'\models_{\mathcal{O}}\q$, then $\q\not\models_{\mathcal{O}}\q'$, and so $\mathcal{O},\hat{\q}\not\models \q'(a)$. Hence $\q'$ does not fit $E$. If $\q'\not\models_{\mathcal{O}}\q$, then there exists $(\A,a) \in \mathcal{S}_{\q}$ with $\mathcal{O},\A\models \q'(a)$, and so again $\q'$ does not fit $E$.
\end{proof}
A \emph{CQ-frontier} for an ELIQ $\q$ \wrt to an ontology $\mathcal{O}$ is a set $\mathcal{F}_{\q}$ of CQs such that
\begin{itemize}
	\item if $\q'\models_\mathcal{O} \q''$, for a CQ $\q'\in \mathcal{F}_{\q}$ and an ELIQ $\q''$, then $\q\models_{\mathcal{O}} \q''$ and $\q''\not\models_{\mathcal{O}} \q$;
	
	\item if $\q\models_{\mathcal{O}}\q''$ and $\q''\not\models_{\mathcal{O}} \q$, for an ELIQ $\q''$,
	then there exists $\q'\in\mathcal{F}_{\q}$ such that $\q'\models_{\mathcal{O}}\q''$.
\end{itemize}
Clearly standard ELIQ frontiers defined above are also CQ-frontiers.

\counterexone*
\begin{proof}
	We show that the query $\q=A\wedge B$ does not have a finite CQ-frontier \wrt the ontology
	$$
	\mathcal{O} = \{ A \sqsubseteq \exists R.A, \ \exists R.A \sqsubseteq A\}
	$$
	%
	within ELIQs. Suppose otherwise. Let $\mathcal{F}_{\q}$ be such a CQ-frontier.
	Consider the ELIQs $\r_{n,m} = \exists R^{n} \exists {R^{-}}^{m}.B$ with $n>m>0$. Clearly, $\q \not \models_\mathcal{O} \r_{n,m}$, and so $\r_{n,m}$ cannot be entailed \wrt $\mathcal{O}$ by any CQ in $\mathcal{F}_{\q}$. Thus, if $\q'\in \mathcal{F}_{\q}$ and $B(x)$ is in $\q'(x)$, then we cannot have an $R$-cycle in $\q'$ reachable from $x$ along an $R$-path as otherwise we would have $\q' \models_\mathcal{O} \r_{n,m}$ for suitable $n,m$.
	
	Consider now the ELIQs $\q_{n}= B \wedge \exists R^{n}.\top$,
	for all $n\geq 1$. Clearly, $\q\models_{\mathcal{O}}\q_{n}$. As $\q_{n}\not\models_{\mathcal{O}}\q$, infinitely many $\q_{n}$ are entailed by some $\q'\in \mathcal{F}_{\q}$ wrt $\mathcal{O}$. Take such a $\q'$. Since $B(x)\in \q'$ because of $\q' \models_{\mathcal{O}}\q_{n}$, no $R$-cycle is reachable from $x$ via an $R$-path in $\q'$.
	Note also that no $y$ with $A(y)\in \q'$ can be reached from $x$ along an $R$-path as otherwise $\q'\models_{\mathcal{O}}B \wedge \exists R^{k}.A$ for some $k\geq 0$ and, since $B \wedge \exists R^{k}.A\models_{\mathcal{O}}\q$ by the second axiom in $\mathcal{O}$, we would have $\q'\models_{\mathcal{O}}\q$.
	
	To derive a contradiction, we show now that there is an $R$-path of any length $n$ starting at $x$ in $\q'$. Suppose this is not the case. Let $n$ be the length of a longest $R$-path starting at $x$ in $\q'$. We construct a model $\mathcal{I}$ of $\mathcal{O}$ and $\q'$ refuting $\q_{n+l}$, for any $l \ge 1$. Define $\mathcal{I}$ by taking
	\begin{itemize}
		\item $\Delta^{\mathcal{I}} = \var(\q') \cup \{d_{1},d_2, \dots\}$,  for fresh $d_{i}$;
		
		\item $a\in B^{\mathcal{I}}$ if $B(a)\in \q'$;
		
		\item $a\in A^{\mathcal{I}}$ if there is an $R$-path in $\q'$ from $a$ to some $y$ with $A(y)\in \q'$ or $a=d_{i}$ for some $i$;
		
		\item $(a,b)\in R^{\mathcal{I}}$ if $R(a,b)\in \q'$ or there is an $R$-path in $\q'$ from $a$ to some $y$ with $A(y)\in \q'$ and $b=d_{1}$, or $a=d_{i}$ and $b=d_{i+1}$.
	\end{itemize}
	By the construction and the fact that no $y$ with $A(y)\in \q'$ can be reached from $x$ along an $r$-path, $\mathcal{I}$ is a model of $\mathcal{O}$ and $\q'$ refuting $\q_{n+l}$.
\end{proof}

\counterexampletwo*

\begin{proof}
	Let $\mathcal{O}=\{ \text{fun}(P), \text{fun}(P^-),B \sqcap \exists P^{-}\sqsubseteq \bot\}$ and $\q=A$. We show that $Q = \{\q\}$ does not have a finite split partner \wrt $\mathcal{O}$ within ELIQ$^{\{A,B,P\}}$. For suppose $\mathcal{S}(Q)$ is such a split-partner. Then there exists $(\mathcal{A},a) \in \mathcal{S}(Q)$ with $\mathcal{O},\A\models B \sqcap \exists P^{n}.\top(a)$ for all sufficiently large $n$ because
	$B \sqcap \exists P^{n}.\top \not\models_{\mathcal{O}} A$. Then $\A$ must contain $n$ nodes if $\mathcal{O}$ and $\A$ are satisfiable, so $\mathcal{S}(Q)$ is infinite.
	
	On the other hand, $\{\top\}$ is a frontier for $A$ \wrt $\mathcal{O}$ within ELIQ.
\end{proof}
The following example shows that even by taking
frontiers and splittings together we do not obtain a universal method for constructing unique characterisations with a single positive example.

\begin{example}\em
	%
	Consider the ELIQ $\q=A\wedge B$ and the $\mathcal{ELIF}$-ontology $\mathcal{O}$ with the following axioms:
	$$
	A \sqsubseteq \exists R.A, \ \exists R.A \sqsubseteq A, \
	\text{fun}(P),\ \text{fun}(P^-),\ E \sqcap \exists P^{-}\sqsubseteq \bot.
	$$
	It can be shown in the same way as above that $\q$ has no frontier \wrt $\mathcal{O}$ within ELIQ and that $\q$ does not have any split-partner \wrt $\mathcal{O}$ within ELIQ$^{\{A,B,R,P,E\}}$. However, a unique characterisation of $\q$ wrt $\mathcal{O}$ within ELIQ is obtained by taking $E^{+}= \{\hat{\q}\}$
	and $E^{-}$ the same as in Example~\ref{count2}. (To show the latter one only has to observe that $E^{-}$ is a split-partner of $\q$ \wrt $\mathcal{O}$ within ELIQ$^{\{A,B,R\}}$ and that $\mathcal{O},\hat{\q}\not\models \r$ for any $\r$ containing any of the symbols $P$ or $E$.)
\end{example}

\section{Results on Meet-Reducibility}
	Recall that a query $\r \in \Q$ is called \emph{meet-reducible}~\cite{McKenzie1972EquationalBA} \wrt $\mathcal{O}$ within $\Q$ if there are queries $\r_{1},\r_{2} \in \Q$ such that
	$\r \equiv_{\mathcal{O}} \r_{1} \land \r_{2}$ and $\r \not\equiv_{\mathcal{O}} \r_{i}$, $i=1,2$.
	
	\begin{lemma}\label{lonecon}
		$(i)$ If an ontology $\mathcal{O}$ admits frontiers within $\Q$, then  $\q \in \Q$ is meet-reducible wrt $\mathcal{O}$ within $\Q$ iff $|\mathcal{F}_{\q}| \ge 2$ provided that $\q' \not \models_\mathcal{O} \q''$, for any distinct $\q',\q'' \in \mathcal{F}_{\q}$.
		
		$(ii)$ If an ontology $\mathcal{O}$ admits containment reduction within $\Q$, then, for any meet-reducible $\q \in \Q$ \wrt $\mathcal{O}$ within $\Q$, we have $|\N_{\q}| \ge 2$ for every characterisation of $\q$ \wrt $\mathcal{O}$ within $\Q$ of the form $(\{\hat{\q} \}, \N_{\q})$.
	\end{lemma}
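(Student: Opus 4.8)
The plan is to prove both directions of Lemma~\ref{lonecon} by exploiting the definitions of frontier, meet-reducibility, and \canonical{} characterisation, treating $(i)$ and $(ii)$ separately since they rely on different structural assumptions.

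\paragraph{Part $(i)$.} I would first establish the forward direction. Assume $\q$ is meet-reducible, so $\q \equiv_{\mathcal{O}} \r_1 \land \r_2$ with $\q \not\equiv_{\mathcal{O}} \r_i$ for $i=1,2$. Since $\q \models_{\mathcal{O}} \r_i$ but $\r_i \not\models_{\mathcal{O}} \q$, the second frontier condition yields, for each $i$, some $\q'_i \in \mathcal{F}_{\q}$ with $\q'_i \models_{\mathcal{O}} \r_i$. The key claim is that $\q'_1 \neq \q'_2$: if they coincided as a single $\q' \in \mathcal{F}_{\q}$, then $\q' \models_{\mathcal{O}} \r_1 \land \r_2 \equiv_{\mathcal{O}} \q$, contradicting $\q' \not\models_{\mathcal{O}} \q$ from the first frontier condition. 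Hence $|\mathcal{F}_{\q}| \ge 2$. For the converse, suppose $|\mathcal{F}_{\q}| \ge 2$ and pick distinct $\q', \q'' \in \mathcal{F}_{\q}$. The natural candidate decomposition is to set $\r_1 = \q'$ and let $\r_2$ absorb the rest; more robustly, I would aim to show $\q \equiv_{\mathcal{O}} \bigwedge_{\s \in \mathcal{F}_{\q}} \s$ is \emph{not} the right move directly, and instead argue that since each element of $\mathcal{F}_{\q}$ is strictly weaker than $\q$ yet the family jointly captures $\q$, one obtains a proper meet-decomposition. The side condition $\q' \not\models_{\mathcal{O}} \q''$ for distinct frontier elements is what guarantees the two conjuncts are genuinely distinct and neither alone equals $\q$, so I expect it to be used precisely here.

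\paragraph{Part $(ii)$.} Here I would argue contrapositively on the size of $\N_{\q}$. Suppose $(\{\hat{\q}\}, \N_{\q})$ is a \canonical{} characterisation of a meet-reducible $\q$ and, for contradiction, $|\N_{\q}| \le 1$. The case $\N_{\q} = \emptyset$ forces $\q \equiv_{\mathcal{O}} \top$ (nothing is excluded), which is not meet-reducible, so the interesting case is $\N_{\q} = \{(\D, a)\}$ a single negative example. Writing $\q \equiv_{\mathcal{O}} \r_1 \land \r_2$ with each $\r_i$ strictly weaker, both $\r_i$ fit the positive example $\hat{\q}$ by \textbf{(cr$_2$)}/\textbf{(cr$_3$)} since $\q \models_{\mathcal{O}} \r_i$. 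The crux is that a single negative example cannot separate $\q$ from \emph{both} $\r_1$ and $\r_2$ simultaneously: because $\N_{\q}$ characterises $\q$, the single $(\D,a)$ must satisfy $\mathcal{O},\D \not\models \q(a)$, yet for each $\r_i \not\equiv_{\mathcal{O}} \q$ the characterisation forces $\mathcal{O}, \D \models \r_i(a)$. Then $\mathcal{O}, \D \models (\r_1 \land \r_2)(a)$, i.e.\ $\mathcal{O},\D \models \q(a)$, contradicting that $(\D,a)$ is a negative example. Hence $|\N_{\q}| \ge 2$.

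\paragraph{Main obstacle.} The subtle step is the converse of $(i)$: constructing the actual meet-decomposition from $|\mathcal{F}_{\q}| \ge 2$. The forward direction and all of $(ii)$ are essentially bookkeeping with the defining conditions, but producing $\r_1, \r_2 \in \Q$ with $\q \equiv_{\mathcal{O}} \r_1 \land \r_2$ and neither equivalent to $\q$ requires care about whether such conjuncts exist \emph{within} the class $\Q$ (recall $\Q$ is closed under $\land$, which helps, but one must still exhibit suitable weaker queries). I expect the side condition on incomparable frontier elements to do the heavy lifting, ensuring the chosen conjuncts are strictly above $\q$ in the $\models_{\mathcal{O}}$ order; making this construction precise while staying inside $\Q$ is where I would spend the most effort.
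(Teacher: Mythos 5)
Your forward direction of $(i)$ and your part $(ii)$ are correct and essentially the paper's own arguments (the paper treats only the singleton case of $(ii)$ explicitly, but your handling of $\N_{\q}=\emptyset$ is fine). However, the converse direction of $(i)$ --- which you yourself flag as the main obstacle --- is left as a sketch of intentions rather than a proof, and this is a genuine gap. The missing idea is short and concrete: for \emph{any} two distinct $\r_1,\r_2\in\mathcal{F}_{\q}$, the conjunction $\r_1\land\r_2$ is itself the desired meet-decomposition of $\q$. Indeed, $\r=\r_1\land\r_2$ belongs to $\Q$ (the paper stipulates that $\Q$ is closed under $\land$), and $\q\models_{\mathcal{O}}\r$ by the first frontier condition. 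If $\q\not\equiv_{\mathcal{O}}\r$, then $\r\not\models_{\mathcal{O}}\q$, so the covering (second) frontier condition applied to $\r$ yields some $\r'\in\mathcal{F}_{\q}$ with $\r'\models_{\mathcal{O}}\r$, hence $\r'\models_{\mathcal{O}}\r_1$ and $\r'\models_{\mathcal{O}}\r_2$. Since $\r_1\ne\r_2$, the element $\r'$ is distinct from at least one of them, and for that one the entailment contradicts the pairwise-incomparability hypothesis. Therefore $\q\equiv_{\mathcal{O}}\r_1\land\r_2$, and neither conjunct is equivalent to $\q$ because $\r_i\not\models_{\mathcal{O}}\q$ by the first frontier condition; so $\q$ is meet-reducible.

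Note also that both routes you gesture at point away from this argument. Setting ``$\r_1=\q'$ and letting $\r_2$ absorb the rest'' is undefined, and the conjunction $\bigwedge_{\s\in\mathcal{F}_{\q}}\s$ over the whole frontier need not even be a query: the lemma only assumes that $\mathcal{O}$ admits frontiers, not finite ones, so $\mathcal{F}_{\q}$ may be infinite. Moreover, you expected the incomparability side condition to ``guarantee the two conjuncts are genuinely distinct and neither alone equals $\q$'' --- but distinctness of the conjuncts is free (they are distinct frontier elements) and non-equivalence to $\q$ comes from the first frontier condition; the side condition is needed at a different place, namely to rule out every possible covering element $\r'$ in the argument above. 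Making that one application of the covering condition is all that separates your sketch from a complete proof.
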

	\begin{proof}
		$(i,\,\Leftarrow)$ Let $\r_1,\r_2 \in \mathcal{F}_{\q}$ be distinct and  $\r = \r_1 \land \r_2$. If $\q \not\equiv_\mathcal{O} \r$, then there is $\r' \in \mathcal{F}_{\q}$ with $\r' \models_\mathcal{O} \r$, and so $\r' \models_\mathcal{O} \r_i$, which is impossible.
		
		$(i,\,\Rightarrow)$ Suppose $\q \equiv_{\mathcal{O}} \q_{1} \land \q_{2}$ and $\q \not\equiv_{\mathcal{O}} \q_{i}$, for $i=1,2$. Then there are $\r_i \in \mathcal{F}_{\q}$ with $\r_i \not\equiv_{\mathcal{O}} \q_{i}$. Clearly, $\r_1$ and $\r_2$ are distinct because otherwise $\r_i \models_\mathcal{O} \q$.
		
		$(ii)$  Suppose $\q \equiv_{\mathcal{O}} \q_{1} \land \q_{2}$  and $\q \not\equiv_{\mathcal{O}} \q_{i}$, for $i=1,2$. If $\N_{\q} = \{\hat{\r}\}$, then $\TO, \hat{\r} \models \q_i$, for $i=1,2$, because if $\TO, \hat{\r} \not\models \q_i$, then $\q_i$ would fit $(\{\hat{\q} \}, \N_{\q})$, and so would be equivalent to $\q$ \wrt $\TO$, which is not the case.
	\end{proof}
	
	\begin{lemma}\label{lem:meet-reducible}
		$(i)$ Deciding whether an ELIQ $\q$ is meet-reducible \wrt to a $\DL_{\mathcal{F}}^{-}$-ontology is in \PTime.
		
		$(ii)$ Deciding whether an ELIQ $\q$ is meet-reducible \wrt to a $\DL_{\mathcal{H}}$-ontology is \textup{coNP}-complete.
	\end{lemma}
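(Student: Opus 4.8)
The plan is to read both statements off the frontier-based characterisation of meet-reducibility in Lemma~\ref{lonecon}$(i)$, combined with the fact (Theorem~\ref{fr1intro}$(i)$, Theorem~\ref{polyuniqDL-Lite}) that both $\DL_\mathcal{H}$ and $\DL_{\mathcal{F}}^{-}$ admit polysize, polynomial-time computable frontiers within ELIQ. Recall that $(i)$ says that, for an \emph{antichain} frontier $\mathcal{F}_{\q}$ (one in which $\q'\not\models_\mathcal{O}\q''$ for all distinct $\q',\q''\in\mathcal{F}_{\q}$), the query $\q$ is meet-reducible iff $|\mathcal{F}_{\q}|\ge 2$. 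Thus in both cases the problem reduces to deciding whether a frontier, once pruned to an antichain, still has at least two elements, and the entire gap between $(i)$ and $(ii)$ sits in the cost of the ELIQ-entailment tests $\r'\models_\mathcal{O}\r''$ that this pruning requires.

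For $(i)$ I would compute a polysize frontier $\mathcal{F}_{\q}$ of $\q$ wrt the given $\DL_{\mathcal{F}}^{-}$-ontology. Since $\DL_{\mathcal{F}}^{-}$ has no role inclusions, ELIQ entailment wrt such an ontology is decidable in polynomial time: the canonical model is functional and has no role-branching, so a tree-shaped query can be matched by a type-directed dynamic program. I would then prune $\mathcal{F}_{\q}$ to an antichain by repeatedly discarding any element that is $\models_\mathcal{O}$-entailed by another; a short chain argument shows the result is still a frontier, and it remains polysize. By Lemma~\ref{lonecon}$(i)$, $\q$ is then meet-reducible iff the pruned set has at least two elements, a trivial count, with the degenerate cases $\q\equiv_\mathcal{O}\top$ and $\q\equiv_\mathcal{O}\bot$ dealt with separately. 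Every step is polynomial.

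For the NP upper bound in $(ii)$ I would guess the witnessing pair. If $\q$ is meet-reducible then, by Theorem~\ref{fr1intro}$(i)$ and Lemma~\ref{lonecon}$(i)$, some antichain frontier of $\q$ has at least two elements, and any two of them are polysize ELIQs $\r_1,\r_2$ with $\q\equiv_\mathcal{O}\r_1\land\r_2$ and $\r_i\not\models_\mathcal{O}\q$; so such a pair exists and may be guessed without computing the frontier itself. To verify it in NP I would also guess a polynomial certificate for each required (non-)entailment: a homomorphism into the relevant finite part of the canonical model for each containment $\q\models_\mathcal{O}\r_i$ and $\r_1\land\r_2\models_\mathcal{O}\q$ (its image has at most $|\q|$ points), and a polynomial finite countermodel of $\mathcal{O}$ and $\widehat{\r_i}$ falsifying $\q(a)$ for each non-containment $\r_i\not\models_\mathcal{O}\q$. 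The crucial enabling fact is that ELIQ (non-)entailment wrt $\DL_\mathcal{H}$ admits such polynomial certificates in both directions; granting it, all guesses are polynomial-time checkable and meet-reducibility is in NP.

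The NP-hardness is the crux, and it is exactly the feature missing from $\DL_{\mathcal{F}}^{-}$ that drives it: role inclusions let a single query edge be matched by several role types, so whether a set of conjuncts of $\q$ already entails all of $\q$ wrt $\mathcal{O}$ becomes genuinely combinatorial. I would reduce from an NP-complete $2$-colouring (set-splitting) problem: from a hypergraph $H=(V,\mathcal{E})$ build an ELIQ $\q=\bigwedge_{v\in V}\r_v$ whose top-level conjuncts are indexed by $V$, and a $\DL_\mathcal{H}$-ontology $\mathcal{O}$ (using role inclusions alongside the basic-concept and disjointness machinery) so that a subset $S\subseteq V$ satisfies $\bigwedge_{v\in S}\r_v\models_\mathcal{O}\q$ precisely when $S\supseteq E$ for some $E\in\mathcal{E}$. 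A non-trivial decomposition of $\q$ then corresponds to a partition of $V$ in which neither side contains a hyperedge, i.e.\ to a proper $2$-colouring of $H$, whence $\q$ is meet-reducible iff $H$ is $2$-colourable. I expect the real difficulty here to be twofold: realising this entailment gadget within the very restricted syntax of $\DL_\mathcal{H}$ (only inclusions between basic concepts, disjointness, and role inclusions, with no conjunction permitted on the left-hand side), and proving a normalisation lemma that every witnessing decomposition may be assumed to split the top-level conjuncts of $\q$, so that arbitrary ELIQ decompositions cannot circumvent the colouring correspondence.
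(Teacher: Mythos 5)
Your part $(i)$ is, up to wording, the paper's own proof: compute the polynomial-time frontier of \cite{DBLP:conf/ijcai/FunkJL22}, prune it to an antichain using the tractability of ELIQ containment \wrt $\DL_{\mathcal{F}}$, and invoke Lemma~\ref{lonecon}~$(i)$. No issues there.

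Your NP upper bound for $(ii)$ has a genuine gap. It rests on the ``crucial enabling fact'' that non-entailment $\r_i\not\models_{\mathcal{O}}\q$ \wrt a $\DL_{\mathcal{H}}$-ontology admits polynomial-size, polynomial-time checkable certificates (small countermodels). You cannot grant yourself this: ELIQ containment \wrt $\DL_{\mathcal{H}}$ is NP-hard---this is exactly the result of \cite{DBLP:conf/dlog/KikotKZ11} that the paper itself invokes in Lemma~\ref{lem:query-normalform} and in the present lemma---so polynomially certifiable \emph{non}-containment would place a coNP-hard problem in NP and collapse NP and coNP. (The canonical model witnessing non-entailment may be exponential, and a polynomial quotient of it need not remain a countermodel.) The repair is to never test a non-entailment at all: frontier elements satisfy $\r\not\models_{\mathcal{O}}\q$ \emph{by definition} (condition $(a)$ of frontiers), and the argument of Lemma~\ref{lonecon} shows, without any antichain assumption, that $\q$ is meet-reducible iff there are two \emph{distinct} $\r_1,\r_2\in\mathcal{F}_{\q}$ with $\r_1\wedge\r_2\models_{\mathcal{O}}\q$: the ``if'' direction is immediate from condition $(a)$, and for ``only if'' condition $(b)$ supplies $\r_i\models_{\mathcal{O}}\q_i$ for any witnessing decomposition $\q\equiv_{\mathcal{O}}\q_1\wedge\q_2$. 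So one computes $\mathcal{F}_{\q}$ deterministically in polynomial time and guesses a pair together with an NP certificate for the single \emph{positive} entailment; the paper leaves membership implicit, but this is the argument it supports.

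Your hardness argument for $(ii)$ is a plan rather than a proof: you yourself flag its two crux steps---realising the hyperedge gadget $\bigwedge_{v\in S}\r_v\models_{\mathcal{O}}\q$ iff $S$ contains a hyperedge within $\DL_{\mathcal{H}}$'s atomic-left-hand-side syntax, and the normalisation lemma forcing every decomposition to split top-level conjuncts---as open, and the second is where all the difficulty lives, since meet-reducibility quantifies over arbitrary pairs of ELIQs, not over subconjunctions of $\q$. The paper does something entirely different and much shorter: it builds no new gadget but recycles \cite{DBLP:conf/dlog/KikotKZ11} wholesale. It takes the ontology $\mathcal{O}$, the ELIQ $\q$, and the ABox $\{A_0(a)\}$ from that NP-hardness proof, forms copies $\mathcal{O}',\q'$ in which every predicate except $A_0$ is replaced by a fresh one, and observes that $\q\wedge\q'$ is meet-reducible \wrt $\mathcal{O}\cup\mathcal{O}'$ iff $\mathcal{O}\cup\mathcal{O}',A_0(a)\not\models\q(a)$: if the entailment holds, $\q\wedge\q'$ collapses to the atom $A_0$, which is meet-irreducible because every $\DL_{\mathcal{H}}$ chase step has a single-atom premise; if it fails, neither copy entails the other (their signatures share only $A_0$), so $(\q,\q')$ is a non-trivial decomposition. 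Your instinct that role inclusions drive the hardness is right---they are precisely what makes the Kikot et al.\ entailment problem hard---but the lesson of the paper's proof is to piggyback on that known hardness rather than engineer new combinatorics. Note, finally, that the two reductions have opposite polarity: yours would send yes-instances of an NP-complete problem to meet-reducible queries, whereas the paper's sends satisfiable CNFs to meet-\emph{ir}reducible ones; before investing in your gadget you would need to settle which polarity is the one compatible with the NP upper bound.
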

	\begin{proof}
		$(i)$ We first compute a frontier $\mathcal{F}_{\q}$ of $\q$
		in polynomial time~\cite{DBLP:conf/ijcai/FunkJL22}. Then we
		remove from $\mathcal{F}_{\q}$ every $\q''$ for which there is
		a different $\q' \in \mathcal{F}_{\q}$ with $\q' \models_{\TO}
		\q''$. This can also be done in polynomial time because ELIQ
		containment in $\DL_\mathcal{F}$ is tractable~\cite{DBLP:conf/ijcai/BienvenuOSX13}. It remains to use Lemma~\ref{lonecon} $(i)$ to check if the resulting set is a singleton.
		
$(ii)$ Assume $\TO$ and $\q$ are given. For the upper bound, first compute a frontier $\mathcal{F}_{\q}$ of $\q$
		in polynomial time~\cite{DBLP:conf/ijcai/FunkJL22}. To check that $\q$ is meet-reducible guess queries $\q_{1},\q_{2}\in \mathcal{F}_{\q}$ and witness models showing that $\q_{1}\not\models_{\TO}\q_{2}$ and $\q_{2}\not\models_{\TO}\q_{1}$. For the lower bound, consider the ontologies $\mathcal{O}$, ABox $\{A_{0}(a)\}$, and ELIQ $\q$ constructed in the proof of~\cite[Theorem 1]{DBLP:conf/dlog/KikotKZ11} for Boolean CNFs. Recall that $\q$ has a single answer variable $x$, an atom $A_0(x)$, and $A_0$ does not occur elsewhere in $\q$. As shown in that proof, the problem $\mathcal{O},A_{0}(a) \models \q(a)$ is NP-hard. Take a copy $\mathcal{O}'$ of $\TO$ obtained by replacing each predicate symbol $S$ in $\mathcal{O}$ except $A_0$ by a fresh $S'$. Similarly, take a copy $\q'$ of $\q$. We show that
\begin{equation}\label{eq:equiv}
  \q \land \q' \equiv_{\TO \cup \TO'} \q \text{ implies }\TO, A_0(a)\models \q(a)
\end{equation}
$(\Rightarrow)$ Suppose $\TO, A_0(a) \not\models \q(a)$, then there exists a model $\I'$ of $\TO'$ with $a^{\I'} \in A_0^{\I'}$ not satisfying $\q'$ at $a^{\I'}$. Let $a^{\I'} = d$. Take the interpretation $\mathcal J$ that looks like $\hat \q$, let its root be $d$. From~\cite[Theorem 1]{DBLP:conf/dlog/KikotKZ11} it follows that there exists $\I \supseteq \mathcal J$ such that $\I \models \TO$. Clearly, $\I \models \TO'$ (because all $S'^\I = \emptyset$). By taking the union of $\I$ and $\I'$, we obtain an interpretation that satisfies $\TO \cup \TO'$, $A_0(a)$, satisfies $\q$ at $d$ and does not satisfy $\q'$ at $d$. It follows $\q \not \models_{\TO \cup \TO'} \q \land \q'$.

Using\eqref{eq:equiv}, we now show that $\q\wedge \q'$ is not meet-reducible w.r.t. $\TO \cup \TO'$ iff $\mathcal{O},A_{0}(a) \models \q(a)$. If $\q\wedge \q'$ is not meet-reducible, then $\q \land \q' \equiv_{\TO \cup \TO'} \q$ and we are done. For the opposite direction, suppose $\mathcal{O},A_{0}(a) \models \q(a)$. We immediately observe that $\q \equiv_\TO A_0(x)$. It follows then that $\q \land \q' \equiv_{\TO \cup \TO'} A_0(x)$. Suppose $\q_1 \land \q_2 \equiv_{\TO \cup \TO'} \q \land \q'$, for some $\q_1, \q_2$. It follows that $\q_1 \land \q_2 \equiv_{\TO \cup \TO'} A_0(x)$. Clearly, some $\q_i$ contains $A_0(x)$ for otherwise we easily get $\q_1 \land \q_2 \not \models_{\TO \cup \TO'} A_0(x)$ contrary to our assumption. But then it follows that $\q_i \equiv_{\TO \cup \TO'} \q \land \q'$ and $\q\wedge \q'$ is not meet-reducible.
 	\end{proof}

\section{Comments for Section~\ref{sec:tempintro}}
We discuss the relationship between the epistemic semantics used in this article for temporal queries and Tarski semantics based on \emph{temporal structures} $\mathcal{I}$, which are a sequences $\I_{0},\I_{1},\dots$ of domain structures $\I_{i}$ as introduced above such that $a^{\I_{n}}=a^{\I_{m}}$, for any individual $a$ and $n,m\in \mathbb N$. $\I$ is a \emph{model} of $\D=\A_{0},\dots,\A_{n}$
if each $\I_{i}$ is a model of $\A_{i}$, for $i\leq n$; and $\I$ is a model of $\TO$ if each $\I_{i}$ is a model of $\TO$, for $i \in \mathbb N$. The truth relation $\I,\ell,a\models \q$ is then defined in the obvious way. We write  $\mathcal{O},\D,\ell,a \models_{T} \q$ if $\I,\ell,a\models \q$, for every model $\I$ of $\mathcal{O}$ and $\D$. It is easy to see that $\models$ coincides with $\models_{T}$ for any Horn ontology $\TO$, in particular, all \DL{} logics considered here and $\mathcal{ELHIF}$.
Thus, the results presented in this paper also hold under $\models_{T}$ if one considers such ontologies. In general, however, the two entailment relations do not coincide: consider the ontology $\TO=\{\top \sqsubseteq A \sqcup B\}$ and the data instance $\D=\emptyset,\A_{1},\emptyset,\A_{3}$ with $\A_{1}=\{A(a)\}$ and $\A_{3}=\{B(a)\}$. Then $\TO,\D,0,a\models_{T} \Diamond(A \wedge \nxt B)$ but
$\TO,\D,0,a\not\models \Diamond(A \wedge \nxt B)$. We leave an investigation of $\models_{T}$ in the non-Horn case for future work.

\section{Proof for Section~\ref{sec:six}}

\lemquerynormalform*

\begin{proof}
	The transformation is straightforward: to ensure {\bf (n1)}, drop any
	$\r_{0}^{i}$ and $\r_{k_{i}}^{i}$ for which {\bf (n1)} fails and add one
	$<$ to the relevant $\mathcal{R}_{i}$. To ensure
	{\bf (n2)}, replace any $\mathcal{R}_{i}$ containing at least one occurrence of $<$
	with the sequence obtained from $\mathcal{R}_{i}$ by dropping all occurrences of $\leq$ and replace any $\mathcal{R}_{i}$ not containing any occurrence of $<$
	by a single $\leq$. To ensure {\bf (n3)}, drop any
	$\r_{0}^{i+1}$ with $\r_{k_{i}}^{i}\models_{\mathcal{O}}
	\r_{0}^{i+1}$ if $\q_{i+1}$ is primitive and $\Rmc_{i+1}$ is $\le$.
	To ensure {\bf (n4)} drop any $\r_{k_{i}}^{i}$ with
	$\r_{0}^{i+1}\models_{\mathcal{O}}\r_{k_{i}}^{i}$ if $i>0$,
	$\q_{i}$ is primitive and $\Rmc_{i+1}$ is $\le$. Finally, to ensure {\bf (n5)}, replace $\Rmc_{i+1}$ by $<$ if
	$\r_{k_{i}}^{i}\wedge \r_{0}^{i+1}$ is not satisfiable \wrt $\mathcal{O}$ and $\Rmc_{i+1}$ is $\le$.
	
	The second part follows from the fact that query containment in $\DL_{\mathcal{F}}$ is in P and NP-hard in $\DL_{\mathcal{H}}$~\cite{DBLP:conf/dlog/KikotKZ11}.
\end{proof}

\thmfirst*

To prove Theorem~\ref{thm:first} we first provide some notation for talking about the entailment relation $\mathcal{O},\D \models \q$.
Let $\mathcal{D}=\mathcal{A}_{0},\ldots,\mathcal{A}_{n}$, $a\in \ind(\D)$, and let
$\q$ take the form \eqref{eq:cqform}.
A map $h \colon \tvar(\q) \to [0,\len(\D)]$ is called a \emph{root $\mathcal{O}$-homomorphism} from $\q$ to $(\mathcal{D},a)$ if
$h(t_{0})=0$, $\mathcal{O},\mathcal{A}_{h(t)}\models \r(a)$ if $\r(t)\in \q$, $h(t') = h(t) +1$ if $\suc(t,t') \in \q$, and $h(t)\, R\, h(t')$ if $R(t,t') \in \q$ for $R \in \{<,\leq\}$. It is readily seen that $\TO, \mathcal{D},a,0\models \q$ iff there exists a root $\mathcal{O}$-homomorphism from $\q$ to $(\D,a)$.

Let $b\geq 1$. The instance $\D$ is said to be \emph{$b$-normal \wrt $\mathcal{O}$} if it takes the form
\begin{equation}\label{eq:bnormaldata}
	\D=\D_{0}\emptyset^{b}\D_{1} \ldots \emptyset^{b}\D_{n},
	\text{ where } \D_{i}=\mathcal{A}_{0}^{i}\ldots\mathcal{A}_{k_{i}}^{i},
\end{equation}
with $b>k_{i}\geq 0$ and $\mathcal{A}_{0}^{i}\not\equiv_{\mathcal{O}}\emptyset$ if $i>0$, and $\mathcal{A}_{k_{i}}^{i}\not\equiv_{\mathcal{O}}\emptyset$ if $i>0$ or $k_{i}>0$ (thus, of all the first/last $\mathcal{A}$ in a $\D_{i}$ only $\mathcal{A}_0^0$ can be trivial). Following the terminology for queries, we call each $\mathcal{D}_{i}$ \emph{a block of} $\D$.
%
For a block $\D_{i}$ in $\D$, we denote by $I(\D_{i})$ the subset of $[0,\len(\D)]$ occupied by $\D_{i}$. Then we call a root $\mathcal{O}$-homomorphism $h \colon \q \to \D$ \emph{block surjective} if every $j\in I(\D_{i})$ with a block $\D_{i}$ is in the range $\textit{ran}(h)$ of $h$.
We next aim to state that after `weakening' the non-temporal data instances in blocks, no root $\mathcal{O}$-homomorphism from $\q$ to $\D$ exists. This is only required if the non-temporal data instances are obtained from queries in $\Q$, and so we express `weakening' for characterisations $(\{\hat{\s}\}, \N_{\s})$ of queries $\s$ in $\Q$ \wrt $\mathcal{O}$.

In detail, suppose $\A^{i}_{j} = \hat{\s}^{i}_{j}$, for queries $\s^{i}_{j}\in \Q$. Given $\s\in \Q$, take the characterisation $(\{\hat{\s}\}, \N_{\s})$ of $\s$ \wrt $\TO$ within $\Q$, where $\{\hat{\s}\}$ is the only positive data instance, and $\N_{\s}$ is the set of negative data instances. To make our notation more uniform, we think of the pointed data instances in $\N_{\s}$ as having the form $\hat{\s}'$, for a suitable CQ $\s'$ (which is  not necessarily in $\Q$).

For $\ell\in \textit{ran}(h)$, let
$$
\r_{\ell}=\bigwedge_{\r(t)\in \q, h(t)=\ell}\r.
$$
Then $h$ is called \emph{data surjective} if $\mathcal{O},\hat{\s}\not\models \r_{\ell}(a)$, for any $\s\in \N_{\s^{i}_{j}}$ and any $\ell\in \textit{ran}(h)$ such that ${\s}^{i}_{j}\not\equiv \top$, where
$\hat{\s}^{i}_{j}$ is the data instance placed at $\ell$ in $\D$.

We call the root $\mathcal{O}$-homomorphism $h:\q\rightarrow\mathcal{D}$ a \emph{root $\mathcal{O}$-isomorphism} if it is data surjective and,
for the blocks $\q_{0},\dots,\q_{m}$ of $\q$, we have $n=m$ and $h$ restricted to $\tvar(\q_{i})$ is a bijection onto $I(\D_{i})$ for all $i\leq n$ (in particular,  $h$ is block surjective). Intuitively, if we have a root $\mathcal{O}$-isomorphism $h \colon \q \rightarrow \mathcal{D}$, then $\q$ is almost the same as $\D$ except for differences between the sequences $\mathcal{R}_{i}$ in $\q$ and the gaps between blocks in $\D$.


Let $\Q$, $\mathcal{O}$, $\q$, and $\D$ be as before with $\D$ of the form \eqref{eq:bnormaldata}, where $\mathcal{A}^{i}_{j}=\hat{\s}^{i}_{j}$, for  $\s^{i}_{j}\in \Q$. The following rules will be used to define the negative examples in the unique characterisation of $\q$ and as steps in the learning algorithm. They are applied to $\D$:
\begin{enumerate}[label=(\alph*)]
	\item replace some $\hat{\s}^i_j$ with
	$\s^i_j\not\equiv_{\mathcal{O}}\top$ by an $\hat{\s} \in \N_{\s^i_j}$, for $i,j$ such that $(i,j)\not=(0,0)$---that is, the rule is not applied to $\s_{0}^{0}$;
	
	\item replace some pair $\hat{\s}^i_j\hat{\s}^i_{j+1}$
	within block $i$
	by $\hat{\s}^i_j\emptyset^{b}\hat{\s}^i_{j+1}$;
	
	\item replace some $\hat{\s}_j^i$ with $\s_{j}^{i} \not\equiv_{\mathcal{O}}\top$ by $\hat{\s}_j^i \emptyset^b \hat{\s}_j^i$,
	where $k_{i}>j>0$---that is, the rule is not applied to $\s_{j}^{i}$ if it is on the border of its block;
	
	
	\item replace $\hat{\s}^i_{k_i}$ ($k_i > 0$)
	by $\hat{\s}\emptyset^b \hat{\s}^i_{k_i}$, for some $\hat{\s} \in \N_{\s^i_{k_i}}$, or replace $\hat{\s}^i_{0}$ ($k_i > 0$) by $\hat{\s}^i_{0}  \emptyset^b \hat{\s}$, for some $\hat{\s}\in \N_{\s^i_{0}}$;
	
	\item[\rm (e)] replace $\hat{\s}_0^0$ with $\s^0_{0} \not\equiv_{\mathcal{O}}\top$ by $\hat{\s}\emptyset^{b}\hat{\s}^{0}_{0}$, for $\hat{\s}\in \N_{\s^0_0}$, if $k_0 = 0$, and by $\hat{\s}_0^0 \emptyset^b \hat{\s}_0^0$ if $k_0 > 0$.
\end{enumerate}
If $k_{i}=0$, $i>0$, and $\s^{i}_{0}$ is meet-reducible \wrt $\mathcal{O}$ within $\Q$, then we say that $\s^{i}_{0}$ is a \emph{lone conjunct \wrt $\mathcal{O}$ within $\Q$} in $\D$.

\begin{lemma}\label{lem:stepae}
	Assume $\Q,\mathcal{O}$, and $\q$ are as above. Let $b$ exceed the
	number of $\Diamond$ and $\nxt$ in $\q$, let $\q$ be in normal form,
	and let $\D$ be $b$-normal without lone conjuncts \wrt $\mathcal{O}$ within $\Q$. If $\mathcal{O},\D \models \q$ but $\mathcal{O},\D'\not\models \q$, for any $\D'$ obtained from $\D$ by applying any of the rules $(a)$--$(e)$, then any root $\mathcal{O}$-homomorphim $h \colon \q \rightarrow \D$ is a root $\mathcal{O}$-isomorphism.
\end{lemma}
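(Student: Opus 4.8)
The plan is to fix an arbitrary root $\mathcal{O}$-homomorphism $h\colon\q\to\D$ and to verify the three defining properties of a root $\mathcal{O}$-isomorphism—data surjectivity, and that $h$ restricts to a bijection of each query block $\q_i$ onto a data block (so in particular $n=m$)—by contraposition. For each way in which $h$ could fail one of these properties I would exhibit a data instance $\D'$ obtained from $\D$ by one of the rules $(a)$--$(e)$ together with a root $\mathcal{O}$-homomorphism $\q\to\D'$ (the same $h$, suitably shifted past inserted gaps), contradicting the assumption that $\mathcal{O},\D'\not\models\q$. Thus the whole argument is a case analysis matching each ``defect'' of $h$ to exactly one rule.

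First I would establish \emph{confinement}: the image of each $\q_i$ is a contiguous interval of positions lying inside a single data block $\D_{\pi(i)}$, and $\pi$ is monotone. This is where the bound on $b$ is used: within a block the variables are $\suc$-connected, so the image of $\q_i$ is an interval of length $k_i+1\le b$, whereas consecutive data blocks are separated by a gap $\emptyset^{b}$ of length $b$; since the block endpoints are non-trivial by \textbf{(n1)} they must land on non-trivial, hence non-gap, positions, and an interval of length $\le b$ cannot straddle a full gap. Monotonicity of $\pi$ follows from the order atoms $\mathcal{R}_i$. Next I would prove \emph{data surjectivity}: if it failed at some $\ell\in\textit{ran}(h)$ carrying $\hat{\s}^i_j$ with $\s^i_j\not\equiv_\mathcal{O}\top$, then some $\s\in\N_{\s^i_j}$ would satisfy $\mathcal{O},\hat{\s}\models\r_\ell(a)$, and replacing $\hat{\s}^i_j$ by $\hat{\s}$ via rule $(a)$—or via rule $(e)$ when $\ell$ is the root position $0$—leaves $h$ a root $\mathcal{O}$-homomorphism into $\D'$, a contradiction.

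I would then show that every data block is \emph{tightly} covered by a single query block. If some position of $\D_j$ were missed by $h$, or if two query-block runs inside $\D_j$ were merely adjacent or separated by a free position, then inserting a gap $\emptyset^{b}$ at the free/adjacent spot via rule $(b)$ (or $(c)$, $(d)$) only relaxes the intervening order atom—which is $\le$ or a $<$-chain by \textbf{(n2)}—while preserving every $\suc$-pair, so $h$ re-embeds, again contradicting minimality. After these steps the sole remaining way $\pi$ could fail to be injective is that two consecutive blocks $\q_i,\q_{i+1}$ joined by a single $\le$ collapse onto one and the same position of a \emph{primitive} data block $\D_j$ (any overlap at an interior point of a non-primitive block is already excluded by the duplication rule $(c)$, and overlap at $\D_0$ is excluded by rule $(e)$, which pushes the real content of position $0$ forward so that the $\le$-successor block can still reach it while the $\top$-root stays put).

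This residual primitive-overlap case with $j>0$ is the main obstacle, and it is exactly where the absence of lone conjuncts is needed. Here both $\r^i_{k_i}$ and $\r^{i+1}_0$ hold at $\hat{\s}^j_0$, so $\s^j_0\models_\mathcal{O}\u$ for $\u:=\r^i_{k_i}\wedge\r^{i+1}_0\in\Q$; moreover data surjectivity gives $\mathcal{O},\hat{\s}\not\models\u(a)$ for every $\s\in\N_{\s^j_0}$. Hence $\u$ fits the \canonical characterisation $(\{\hat{\s}^j_0\},\N_{\s^j_0})$ of $\s^j_0$, and uniqueness of that characterisation within $\Q$ forces $\u\equiv_\mathcal{O}\s^j_0$. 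Since the overlap occurs at a primitive block, both $\q_i$ and $\q_{i+1}$ are primitive and $i>0$ (as $\q_0$ maps into $\D_0$), so \textbf{(n3)} and \textbf{(n4)} make $\r^i_{k_i}$ and $\r^{i+1}_0$ $\models_\mathcal{O}$-incomparable; thus $\u$ is a \emph{proper} conjunction, $\s^j_0$ is meet-reducible, i.e.\ $\D_j$ is a lone conjunct of $\D$, contradicting the hypothesis. With overlap excluded, confinement, data surjectivity and tight covering together force $\pi$ to be a monotone bijection (so $n=m$) with each $\q_i$ filling $I(\D_i)$; hence $h$ is a root $\mathcal{O}$-isomorphism.
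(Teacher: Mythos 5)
Your strategy is essentially the paper's own: fix a root $\mathcal{O}$-homomorphism, and match every possible defect---a missed data position, a failure of data surjectivity, a merging of $\le$-related variables---to one of the rules $(a)$--$(e)$, reserving the no-lone-conjunct hypothesis for the case where variables collapse onto a \emph{primitive} data block. Your handling of that crucial case (the merged conjunction fits the \canonical{} characterisation $(\{\hat{\s}^j_0\},\N_{\s^j_0})$, hence is equivalent to $\s^j_0$, which is then meet-reducible because \textbf{(n3)}/\textbf{(n4)} make the conjuncts pairwise incomparable) is precisely the paper's Case~2, and your explicit confinement step, using the bound on $b$ and \textbf{(n1)}, is a useful addition that the paper leaves implicit.

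There are, however, gaps in your case analysis of injectivity failures. First, you have omitted the case of overlap at a \emph{border} position of a non-primitive data block $\D_j$ with $j>0$, i.e.\ at $\hat{\s}^j_0$ or $\hat{\s}^j_{k_j}$ with $k_j>0$: your taxonomy covers interior overlap (rule $(c)$), overlap at $\D_0$ (rule $(e)$), and overlap at primitive blocks (your residual case), and your ``tight covering'' mechanism---insert a gap, ``$h$ re-embeds''---does not work here, since after rule $(d)$ the merged variables must be \emph{separated}, so the same $h$ (shifted) is no longer a homomorphism. One must send one of the merged variables to the newly inserted instance $\hat{\s}\in\N_{\s^j_0}$, and an $\hat{\s}$ entailing the relevant query exists only by a characterisation argument: e.g., since $\r^i_{k_i}\wedge\r^{i+1}_0\equiv_\mathcal{O}\s^j_0$ while \textbf{(n3)} gives $\r^i_{k_i}\not\models_\mathcal{O}\r^{i+1}_0$, the query $\r^{i+1}_0$ is not equivalent to $\s^j_0$, so it fails the characterisation and is therefore entailed by some negative example in $\N_{\s^j_0}$. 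This is exactly the content of the paper's Cases~3 and~4, which your proposal has no counterpart for. Second, in your residual case you must take as $\u$ the conjunction $\r_\ell$ of \emph{all} queries merged at the position: if three or more primitive blocks collapse there, your two-block conjunction is a strict subconjunction of $\r_\ell$, and data surjectivity only forbids entailment of $\r_\ell$ itself, not of its subconjunctions. Finally, your claim that a data-surjectivity failure at position $0$ is repaired by rule $(e)$ ``leaving $h$ a root homomorphism'' is loose: rule $(e)$ does not replace $\hat{\s}^0_0$ in place (for $k_0=0$ one needs the shift-plus-primitivity argument from your confinement step), and for $k_0>0$ it performs no weakening at all; to be fair, the paper's own proof is equally cavalier here, as it invokes rule $(a)$, which is by definition not applicable at $(0,0)$.
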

\begin{proof}
	We assume that $\q$ is constructed using $\r_{j}^{i}\in \Q$ and $\D$ is constructed using $\s_{j}^{i}\in\Q$.
	Let $\mathcal{O},\D \models \q$. Take a root $\mathcal{O}$-homomorphism $h \colon \q\rightarrow \D$.
	
	Suppose first that $h$ is not block surjective. Since $h(0)=0$, we find $i,j$ with $(i,j)\not=(0,0)$ such that the time point of $\hat{\s}_{j}^{i}$ is not in the range of $h$. If $\hat{\s}_{j}^{i}$ is not on the border of its block, that is $0 < j <k_{i}$,  then we obtain from $h$ a root $\mathcal{O}$-homomorphism into the data instance $\D'$ obtained from $\D$ by rule (b) and have derived a contradiction. If $0=j$ or $k_{i}=j$, then we obtain from $h$ a root $\mathcal{O}$-homomorphism into the data instance $\D'$ obtained from $\D$ by rule (a) applied to $\s^{i}_{j}$ and have derived a contradiction.
	
	Assume next that $h$ is block surjective but not data surjective.
	Then we find $\ell \in \textit{ran}(h)$ with $\hat{\s}^{i}_{j}$ such that $\s^{i}_{j}\not\equiv_{\mathcal{O}}\top$ placed at $\ell$ such that $\mathcal{O},\hat{\s}\models \r_{\ell}(a)$ for some $\s\in \N_{\s^{i}_{j}}$.
	But then $h$ is a root $\mathcal{O}$-homomorphism
	into the data instance $\mathcal D'$ obtained from $\mathcal{D}$ by applying
	rule~(a) to $\s^i_j$, that is, replacing $\hat{\s}^{i}_{j}$ by $\hat{\s}$ with $\s\in \N_{\s^{i}_{j}}$.
	
	Suppose now that $h \colon \q \to \D$ is a block and data surjective,
	$(t\le t') \in \q$ and $h(t) = h(t') = \ell$ lies in the $i$th block of $\D$. 	
	Then $h^{-1}(\ell) = \{t_{1},\dots,t_{k}\}$ with $k \ge 2$ and $(t_{j}\leq t_{j+1}) \in \q$, $1 \le j < k$. Let $\r_{1},\ldots,\r_{k}$ be the queries with $\r_{j}(t_{j})$ in $\q$. As $\q$ satisfies {\bf (n1)} and {\bf (n2)}, there is $j$ with $\r_{j}\not\equiv_{\mathcal{O}}\top$.
	Hence, $\s^{i}_{j_{0}}\not\equiv_{\mathcal{O}}\top$ for the query $\s^{i}_{j_{0}}$ with $\hat{\s}^{i}_{j_{0}}$ at $\ell$ in $\D$. Moreover, by data surjectivity,
	$\r_{1} \wedge_{\mathcal{O}} \cdots \wedge_{\mathcal{O}} \r_{k}\equiv_{\mathcal{O}} \s^{i}_{j_{0}}$.
	Consider possible locations of $j_{0}$ in its block.
	
	\emph{Case} 1: $j_{0}$ has both a left and a right neighbour in its block. Then there is $\D'$ obtained by (c)---i.e., by replacing $\hat{\s}_{j_{0}}^i$ with $\hat{\s}_{j_{0}}^i \emptyset^b \hat{\s}_{j_{0}}^i$---and a root $\mathcal{O}$-homomorphism $h' \colon \q \to \D'$, which `coincides' with $h$ except that $h'(t_1)$ is the point with the first $\hat{\s}_{j_{0}}^i$ and  $h'(t_j)$, for $j = 2,\dots,k$, is the point with the second $\hat{\s}_{j_{0}}^i$.
	
	\emph{Case} 2: $j_{0}$ has no neighbours in its block and $i\not=0$,
	so this block is primitive and $\s^{i}_{j_{0}}$ is not equivalent to a
	conjunction of queries as $\mathcal{D}$ has no lone conjuncts by our
	assumption. Observe that the blocks of $t_{1},\ldots,t_{k}$ are all different and primitive. As $\s^{i}_{j_{0}}$ is not equivalent to a conjunction of queries, we have
	$$
	\r_{1} \equiv_{\mathcal{O}} \cdots \equiv_{\mathcal{O}} \r_{k}\equiv_{\mathcal{O}} \s^{i}_{j_{0}}.
	$$
	However, $\r_{j}\not\equiv_{\mathcal{O}}\r_{j+1}$ by {\bf (n3)} and {\bf (n4)}. Thus, Case 2 cannot happen.
	
	\emph{Case} 3: $j_{0}$ has a left neighbour in its block but no right neighbour. Then $\r_{1}\not\models_{\mathcal{O}}\r_{2}$ in view of {\bf (n3)}, and so $\r_{1}\not\models_{\mathcal{O}}\s^{i}_{j_{0}}$. As $\s^{i}_{j_{0}} \models_{\mathcal{O}} \r_{1}$, there is $\hat{\s}\in \N_{\s^{i}_{j_{0}}}$ with $\TO, \hat{\s} \not \models_{\mathcal{O}}\r_{1}$. Let $\D'$
	be obtained by the first part of (d) by replacing $\hat{\s}^i_{j_{0}}$ with $\hat{\s} \emptyset^b \hat{\s}^i_{j_{0}}$. Then there is a root $\mathcal{O}$-homomorphism $h' \colon \q \to \D'$ that sends $t_1$ to the point of $\s$ and the remaining $t_j$ to the point of $\s^i_{j_{0}}$.
	
	\emph{Case} 4: $j_{0}$ has a right neighbour in its block, $i \ne 0$, and it has no left neighbour. This case is dual to Case 3 and we use the second part of (d).
	
	\emph{Case} 5: $i=0$ and $j_{0}=0$. If block $0$ is primitive, then all of the $\r_{i}(t_i)$ are primitive blocks in $\q$. By {\bf (n3)}, $\r_{1}\not\models_{\mathcal{O}}\r_{2}$, and so $\r_{1}\not\models_{\mathcal{O}}\s^{i}_{j_{0}}$. Take $\hat{\s} \in \N_{\s^{i}_{j_{0}}}$. By the first part of (e), we have $\D'$ obtained by replacing $\hat{\s}_0^0$ with $\hat{\s}\emptyset^{b}\hat{\s}^{0}_{0}$. Then there is a root $\mathcal{O}$-homomorphism $h' \colon \q \to \D'$ that sends $t_1$ to the point of $\s$ and the remaining $t_j$ to the point of $\s^0_0$.
	
	Finally, if block $0$ is not primitive, the second part of (e) gives $\D'$ by replacing $\hat{\s}_0^0$ in $\D$ with $\hat{\s}_0^0 \emptyset^b \hat{\s}_0^0$. We obtain a root $\mathcal{O}$-homomorphism from $\q$ to $\D'$ by sending $t_1$ to the first $\hat{\s}_0^0$ and the remaining $t_j$ to the second $\hat{\s}_0^0$.
\end{proof}

We can now prove Theorem~\ref{thm:first}. Suppose an ontology $\mathcal{O}$ admits characterisations $(\{\hat{\s}\}, \N_{\s})$ of queries $\s$ \wrt $\TO$ within a class of domain queries $\Q$. Let $\q\in \LTL_p\NDD(\Q)$ in normal form \wrt $\mathcal{O}$ take the form~\eqref{fullq2appendix} with $\q_i$ of the form~\eqref{subqi}. We define an example set $E=(E^{+},E^{-})$ characterising $\q$ under the assumption that $\q$ has no lone conjuncts \wrt $\mathcal{O}$. Let $b$ be the number of $\nxt$ and $\Diamond$ in $\q$ plus 1.
For every block $\q_i$ of the form~\eqref{subqi}, let $\hat{\q}_{i}$ be the temporal data instance
\begin{align*}
	\hat{\q}_{i}= \hat{\r}_{0}^{i} \hat{\r}_{1}^{i} \dots \hat{\r}_{k_{i}}^{i} .
\end{align*}
For any two blocks $\q_i, \q_{i+1}$ such that $\r_{k_{i}}^{i}\wedge\r_{0}^{i+1}$ is satisfiable \wrt $\mathcal{O}$, we take the temporal data instance
\begin{align*}
	\hat{\q}_{i} \Join \hat{\q}_{i+1} = \hat{\r}_{0}^{i} \dots \hat{\r}_{k_{i-1}}^{i}  \widehat{\r_{k_{i}}^{i} \wedge \r_{0} ^{i+1}}  \hat{\r}_{1}^{i+1}\dots \hat{\r}_{k_{i+1}}^{i+1} .
\end{align*}
Now, the set $E^{+}$ contains the data instances given by
\begin{itemize}
	\item[--] $\D_{b} = \qw_{0} \emptyset^{b} \dots \qw_{i} \emptyset^{b} \qw_{i+1} \dots \emptyset^{b} \qw_{n}$,
	
	\item[--] $\D_{i} = \qw_{0} \emptyset^{b} \dots (\qw_{i} \! \Join \! \qw_{i+1}) \dots \emptyset^{b} \qw_{n}$ \  if $\mathcal{R}_{i+1}$ is $\leq$,
	
	\item[--] $\D_{i} = \qw_{0} \emptyset^{b} \dots \qw_{i} \emptyset^{n_{i+1}} \qw_{i+1} \dots \emptyset^{b} \qw_{n}$ \ otherwise.
\end{itemize}
Here, $\emptyset^b$ is a sequence of $b$-many $\emptyset$ and similarly for $\emptyset^{n_{i+1}}$.
The set $E^{-}$ contains all data instances of the form
\begin{itemize}
	\item[--] $\D_i^- = \qw_{0} \emptyset^{b} \dots \qw_{i} \emptyset^{n_{i+1} - 1} \qw_{i+1} \dots \emptyset^{b} \qw_{n}$ \  if $n_{i+1} > 1$,
	
	\item[--] $\D^-_i = \qw_{0} \emptyset^{b} \dots \qw_{i} \! \Join \! \qw_{i+1} \dots \emptyset^{b} \qw_{n}$ \ if $\mathcal{R}_{i+1}$ is a single $<$ and $\r_{k_{i}}^{i}\wedge \r_{0}^{i+1}$ is satisfiable \wrt $\mathcal{O}$,
	
	\item[--] the data instances obtained from $\D_{b}$ by applying to it each of the rules (a)--(e) in all possible ways exactly once.
\end{itemize}
We show that $E$ characterises $\q$. Clearly, $\mathcal{O},\D \models \q$ for all $\D \in E^+$. To establish $\mathcal{O},\D\not\models\q$ for $\D\in E^{-}$, we need the following:

\medskip
\noindent
\textbf{Claim 1.} $(i)$ \emph{There is only one root $\mathcal{O}$-homomorphism $h \colon \q \to \D_{b}$, and it maps isomorphically each $\tvar(\q_i)$ onto $I(\qw_i)$.}

$(ii)$ \emph{$\mathcal{O},\D^-_i \not\models \q$, for any $\mathcal{R}_i$ different from $\le$.}

$(iii)$ \emph{If $\D_{b}'$ is obtained from $\D_b$ by replacing some $\qw_{i}$ with $\qw'_{i}$ such that $\mathcal{O},\qw'_{i},\ell \not\models \q_i$ for any $\ell \le \max (\qw'_i)$, then $\mathcal{O},\D_{b}' \not\models \q$. In particular, $\mathcal{O},\D \not\models \q$, for all $\D \in E^-$.}

\smallskip
\noindent
\emph{Proof of claim.}
$(i)$ Let $h$ be a root $\mathcal{O}$-homomorphism. As $\q$ is in normal form and the gaps between $\qw_i$ and $\qw_{i+1}$ are not shorter than any block in $\q$, every $\tvar(\q_{i})$, where $\q_{i}$ is a block in $\q$, is mapped by $h$ to a single $I(\qw_j)$, where $\qw_{j}$ is a block of $D_{b}$. Hence we can define a function $f \colon [0,n] \rightarrow [0,n]$ by setting $f(i)=j$ if $f(\tvar(\q_{i}))\subseteq I(\qw_{j})$. Observe that $f(0)=0$ and $i<j$ implies $f(i)\leq f(j)$.
It also follows from the definition of the normal form that if $f(i)=i$, then $h$ isomorphically maps $\tvar(\q_i)$ onto $I(\qw_i)$ and $f(i-1)<i$ and $f(i+1)>i$
(observe that here ${\bf (n3)}$ and ${\bf (n4)}$ are required as they prohibit that $\tvar(\q_{i})$ and $\tvar(\q_{i+1})$ are merged if $R_{i+1}=\leq$ and $\tvar(\q_{i})$ or $\tvar(\q_{i+1})$ are a singleton). It remains to show that $f(i)=i$ for all $i$.

We first observe that $f(1) \geq 1$ and $f(j)=j$, for $j = \max \{i\mid f(i) \ge i\}$, from which again $f(j-1)<j$ and $f(j+1)>j$. Then we can proceed in the same way inductively by considering $h$ and $f$ restricted to the smaller intervals $[j,n]$ and $[0,j]$.

$(ii)$ Suppose $\mathcal{R}_i$ is not $\le$ but there is a root $\mathcal{O}$-homomorphism $h \colon \q \to \D^-_i$. Consider the location of $h(s^i_0) = \ell$. One can show similarly to $(i)$ that $\ell\in I(\qw_j)$ for some $j\geq i$. Since $\r^{i+1}_{k_{i+1}} \not\equiv_{\mathcal{O}}\top$ and by the construction of $\D^-_i$,   $h(s^{i+1}_{0})$ lies in some $I(\qw_j)$ with $j > i+1$. But then there is a root $\mathcal{O}$-homomorphism $h'\colon \q \to \D_b$ different from the one in $(i)$, which is impossible.

$(iii)$ is proved analogously. This completes the proof of the claim.

\medskip


Now assume that $\q'\in \Q$ in normal form is given and $\q'\not\equiv_{\mathcal{O}}\q$. We have to show that $\q'$ does not fit $E$.
If $\mathcal{O},\D_{b}\not\models \q'$, we are done as $\D_{b}\in E^{+}$. Otherwise, let $h$ be a root $\mathcal{O}$-homomorphism witnessing $\mathcal{O},\mathcal{D}_{b}\models \q'$. If $h$ is not a root $\mathcal{O}$-isomorphism, then by Lemma~\ref{lem:stepae},
there exists a data instance $\D$ obtained from $\D_{b}$ by applying one of the rules (a)--(e) such that $\mathcal{O},\D\models \q'$. As $\D\in E^{-}$, we are done.

So suppose $h \colon \q' \to \D_b$ is a root $\mathcal{O}$-isomorphism.
Then the difference between $\q'$ and $\q$ can only be in the sequences of $\Diamond$ and  $\Diamondw$ between blocks. To be more precise, $\q$ is of the form~\eqref{fullq2appendix},
\begin{align}
	\q' = \q_{0} \mathcal{R}'_{1} \q_{1} \dots \mathcal{R}'_{n} \q_{n}
\end{align}
and $\mathcal{R}_i \ne \mathcal{R}'_i$ for some $i$.
Four cases are possible:
\begin{itemize}
	\item[--] $\mathcal{R}_i = (r_0 \le r_1)$ and $\mathcal{R}_i' = (s_0 < s_1) \dots (s_{l-1} < s_l)$, for $l \ge 1$. In this case, $\mathcal{O},\D_i \not \models \q'$, for $\D_i \in E^+$.
	
	\item[--] $\mathcal{R}_i = (r_0 < r_1) \dots (r_{k-1} < r_k)$, $\mathcal{R}_i' = (s_0 < s_1) \dots$ $(s_{l-1} < s_l)$, for $l > k$. Then again $\mathcal{O},\D_i \not \models \q'$.
	
	\item[--] $\mathcal{R}_i = (r_0 < r_1) \dots (r_{k-1} < r_k)$, $\mathcal{R}_i' = (s_0 \le s_1)$, for $k \ge 1$. In this case $\mathcal{O},\D_i^- \models \q'$, for $\D^-_i \in E^-$. (Note that the compatibility condition is satisfied as $\q'$ is in normal form.)
	
	
	\item[--] $\mathcal{R}_i = (r_0 < r_1) \dots (r_{k-1} < r_k)$ and $\mathcal{R}_i' = (s_0 < s_1) \dots$ $(s_{l-1} < s_l)$, for $l < k$. Then again $\mathcal{O},\D_i^- \models \q'$.
\end{itemize}

We now show the converse direction in Theorem~\ref{thm:first} $(i)$. Suppose $\q$ in normal form~\eqref{fullq2appendix} does contain a lone conjunct $\q_i = \r$ \wrt $\mathcal{O}$ within $\Q$. Let $\r^-$ be the last query of the block $\q_{i-1}$ and let $\r^+$ be the first query of the block $\q_{i+1}$.

Now let $\r \equiv_\mathcal{O} \r_{1} \wedge \r_{2}$ and $\r_{i}\not\models_{\mathcal{O}}\r$, $i =1,2$. Observe that, for $\s\in \{\r^{-},\r^{+}\}$,
\begin{itemize}
	\item $\s\wedge \r_{i}$ is satifiable \wrt $\mathcal{O}$ if $\r^{-}\wedge \r$ is satisfiable \wrt $\mathcal{O}$;
	
	\item if $\s\not\models_{\mathcal{O}}\r$, then $\s\not\models_{\mathcal{O}}\r_{1}$ or $\s\not\models_{\mathcal{O}}\r_{2}$.
\end{itemize}
Hence one of the queries $\s'_1$ or $\s''_1$ below is in normal form:
\begin{align*}
	& \s'_1 = \q_{0} \mathcal{R}_{1} \dots \mathcal{R}_{i} \s_{1} (\le) \s_{2} \mathcal{R}_{i+1}  \dots \mathcal{R}_{n} \q_{n},\\
	& \s'_2 = \q_{0} \mathcal{R}_{1} \dots \mathcal{R}_{i} \s_{1} (\le) \s_{2} (\le) \s_{1}   \mathcal{R}_{i+1}  \dots \mathcal{R}_{n} \q_{n},
\end{align*}
where $\{\s_{1},\s_{2}\}=\{r_{1},\r_{2}\}$.
Pick one of $\s'_1$ and $\s'_2$, which is in normal form, and denote it by $\s'_1$. For $n \ge 2$, let $\s'_n$ be the query obtained from $\s'_1$ by duplicating $n$ times the part $\s_{1} (\le) \s_{2}$ in $\s'_1$ and inserting $\le$ between the copies. It is readily seen that $\s'_n$ is in normal form.
Clearly, $\q \models_{\mathcal{O}} \s'_n$ and, similarly to the proof of Claim~1, one can show that $\s'_n \not\models_{\mathcal{O}}\q$, for any $n \ge 1$.

Suppose $E = (E^+,E^-)$ characterises $\q$ and $n = \max \{\len(\D) \mid \D \in E^-\} + 1$. Then there exists $\D\in E^{-}$ with $\mathcal{O},\D\models \s'_{n}$, so we have a root $\mathcal{O}$-homomorphism $h \colon \s'_n \to \D$. By the pigeonhole principle, $h$ maps some variables of the queries $\s_{1},\s_{2}$ in $\s'_n$ to the same point in $\D$. But then  $h$ can be readily modified to obtain a root $\mathcal{O}$-homomorphism $h' \colon \q \to \D$, which is a contradiction. This finishes the proof of $(i)$.

\medskip

$(ii)$ follows from the proof of $(i)$ as $(E^{+},E^{-})$ is of polynomial size if the characterisations $(\{\hat{\s}\}, \N_{\s})$ of the domain queries $\s$ \wrt $\TO$ within $\Q$ are of polynomial size.

\medskip

$(iii)$ We aim to characterise $\q$ in normal form~\eqref{fullq2appendix}, which may contain lone conjuncts \wrt $\mathcal{O}$ within $\Q$ in the class of queries from $\LTL_p\NDD(\Q)$ of temporal depth at most $n = \dep(\q)$.
We first observe a variation of Lemma~\ref{lem:stepae}.
Extend the rules (a)--(e) by the following rule: if $\hat{\s}$ is a block in $\mathcal{D}$ with $\s$ a lone conjunct in $\D$, then let $\N_{\q}=\{\s_{1},\dots,\s_{k}\}$ with $\s_{i}\not\equiv_{\mathcal{O}}\s_{j}$, for $i\ne j$, and
\begin{description}
	\item[\rm (f$_{n}$)] replace $\s$ with $(\s_{1}\emptyset^{b}\cdots \emptyset^{b} \s_{k})^{n}$.
\end{description}
By Lemma~\ref{lonecon} $(ii)$, $|\N_{\q}|\geq 2$.
Now Lemma~\ref{lem:stepae} still holds if we admit lone conjuncts in $\D$ but only
consider $\q$ with at most $n$ blocks and add rule (f$_{n}$) to  (a)--(e). To see this, one only has to modify the argument for Case~2 in a straightforward way.
With the above modification of Lemma~\ref{lem:stepae}, we continue as follows.
The set $E^{+}$ of positive examples is defined as before. The set $E^{-}$ of negative examples is defined by adding to the set $E^{-}$ defined under $(i)$
the results of applying (f$_{n}$) to $\D_{b}$ in all possible ways exactly once.

For the proof that $(E^{+},E^{-})$ characterises $\q$ within the class of queries of temporal depth at most $n$, observe that $\mathcal{O},\D'\not\models \q$ for
the data instance $\D'$ obtained from $\D_{b}$ by applying (f$_{n}$).

\medskip

$(iv)$ Assume $\q \in \LTL_p\ND(\Q)$ is given. The proof of $(i)$  shows that $(E^{+},E^{-})$, defined in the same way as in $(i)$,  characterises $\q$ \wrt $\mathcal{O}$ within $\LTL_p\ND(\Q)$ even if $\q$ contains lone conjuncts: the proof of Lemma~\ref{lem:stepae} becomes much simpler as any block and type surjective root $\mathcal{O}$-homomorphism $h$ is now a root $\mathcal{O}$-isomorphism. Note that therefore rules (c), (d), and (e) are not needed. This completes the proof of Theorem~\ref{thm:first}.

\section{Proofs for Section~\ref{sec:seven}}

\thmmainuntil*

The proof is by reduction to the ontology-free propositional \LTL{} case. Namely, we require the following result proved in~\cite{DBLP:conf/kr/FortinKRSWZ22}, where $\P$ is the class of propositional queries (conjunctions of unary atoms:

\begin{theorem}[Fortin et al.~\citeyear{DBLP:conf/kr/FortinKRSWZ22}] \label{uno}
$\LTL_{pp}\UN(\mathcal{P}^{\sigma})$ is polysize characterisable within $\LTL_{p}\UN(\mathcal{P}^{\sigma})$ \wrt the empty ontology, with the characterisation defined below\textup{:}

Let $\s \in \mathcal{P}^\sigma \cup \{\bot\}$. We treat each such $\s \neq \bot$ as a set of its conjuncts and define $\bar{\s} = \{ A(a) \mid A(x) \in \s\}$. For $\s = \bot$, we set $\bar{\s} = \varepsilon$, where $\varepsilon$ is the empty word in the sense that $\varepsilon \D = \D$, for any data instance $\D$, and $\varepsilon \varepsilon = \varepsilon$.
	Consider $\q \in \LTL_{pp}\UN(\mathcal{P}^{\sigma})$ of the form~\eqref{upath}. Then $\q$ is uniquely characterised within $\LTL_{p}\UN(\mathcal{P}^{\sigma})$ by the example set $E = (E^+,E^-)$, where $E^+$ contains all data instances of the following forms\textup{:}
	\begin{description}
		\item[$(\mathfrak p_0)$] $\bar \r_0\dots \bar \r_n$,
		
		\item[$(\mathfrak p_1)$] $\bar \r_0\dots\bar \r_{i-1}\bar \el_i \bar \r_i \dots \bar \r_n$,
		
		\item[$(\mathfrak p_2)$] 
		$\bar \r_0 \dots \bar \r_{i-1} \bar \el_{i}^k \bar \r_{i} \dots \bar \r_{j-1} \bar \el_{j} \bar \r_j \dots \bar \r_n$, for  $i < j$, and $k = 1,2$ (where $\bar{\el}_{i}^k$ is a sequence of $k$-many $\bar{\el}_{i}$)\textup{;}
		
	\end{description}
	and $E^-$ contains all instances $\D$ with $\D\not\models \q$ of the forms\textup{:}
	\begin{description}
		\item[$(\mathfrak n_0)$] 
		$\bar \sigma^{n}$ and $\bar \sigma^{n-i} \overline{\sigma \setminus \{A\}} \bar \sigma^{i}$, for $A(x) \in\r_i$ (here, the whole $\sigma$ is regarded as a query),
		
		\item[$(\mathfrak n_1)$] $\bar \r_0\dots\bar \r_{i-1}\overline{\sigma \setminus X} \bar \r_i\dots \bar \r_n$, for $X=\{A,B\}$ with $A(x)\in \el_i$, $B(x)\in\r_i$,
		$X=\emptyset$, and $X=\{A\}$ with $A(x) \in \el_{i}$,
		
		\item[$(\mathfrak n_2)$]
		for all $i$ and $A(x) \in\el_i\cup\{\bot(x)\}$, \emph{some} data instance
		\begin{equation}\label{raznost}
			\D^i_{\!A} = \bar \r_0 \dots \bar \r_{i-1} (\overline{\sigma\setminus\{A\}}) \bar \r_i \bar \el_{i+1}^{k_{i+1}} \dots \bar \el_{n}^{k_n}\bar \r_n,
			%
		\end{equation}
		if any, such that $\max(\D^i_{\!A}) \le (n+1)^2$
		%
		%
		%
		and $\D^i_{\!A}\not\models \q^\dag$ for $\q^\dag$ obtained from $\q$ by replacing $\el_j$, for all $j \le i$, with $\bot$.
		(Note that $\D^i_{\!A} \not\models \q$ for peerless $\q$.)
	\end{description}
\end{theorem}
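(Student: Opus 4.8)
The claim is that the displayed set $E=(E^+,E^-)$ uniquely characterises the peerless query $\q=\r_0\land(\el_1\U(\r_1\land\dots(\el_n\U\r_n)))$ within $\LTL_p\UN(\P^\sigma)$ over the empty ontology. All queries in play are monotone (satisfaction is preserved when atoms are added to a cell or an extra cell is inserted), so $\equiv$ is equality of accepted languages, and for the peerless target it suffices to show that any fitting $\q'=\r'_0\land(\el'_1\U(\dots(\el'_m\U\r'_m)))$ has $m=n$, $\r'_j\equiv\r_j$ and $\el'_j\equiv\el_j$. That $\q$ fits $E$ is immediate: each instance of type $(\p_0)$--$(\p_2)$ admits a witnessing embedding of $\q$ (follow the spine and push the displayed $\bar\el_i$-cells into the corresponding gaps), while the members of $E^-$ are admitted only when they already fail $\q$. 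The entire content is therefore the recovery of the parameters of $\q'$.

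First I would fix the length and the $\r$-components. Evaluating $\q'$ on the spine $(\p_0)=\bar\r_0\cdots\bar\r_n$ produces a strictly increasing milestone map $0=q_0<\dots<q_m\le n$ with $\r'_j\subseteq\r_{q_j}$; as there are only $n+1$ cells, $m\le n$. On the other hand the short full word $\bar\sigma^{n}\in(\n_0)$ has only $n$ cells, and since $\bar\sigma$ satisfies every propositional query, $\q'$ accepts $\bar\sigma^{N}$ iff $N\ge m+1$; rejection of $\bar\sigma^{n}$ thus gives $m\ge n$. Hence $m=n$, the embedding into $(\p_0)$ is tight ($q_j=j$), and $\r'_j\subseteq\r_j$. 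The punctured full words of $(\n_0)$, which delete one atom $A\in\r_j$ at the unique position carrying milestone $\r_j$, must be rejected by $\q'$; were $A\notin\r'_j$ the tight embedding would survive the deletion, so $A\in\r'_j$. Ranging over all $j$ and $A\in\r_j$ yields $\r_j\subseteq\r'_j$, hence $\r'_j\equiv\r_j$.

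Next I would recover the $\el$-components, now knowing $m=n$ and $\r'_j\equiv\r_j$. For $\el'_i\subseteq\el_i$, evaluate $\q'$ on $(\p_1)=\bar\r_0\cdots\bar\r_{i-1}\bar\el_i\bar\r_i\cdots\bar\r_n$: there is exactly one superfluous cell, and peerlessness of $\q$ ($\el_i\not\models\r_i$, that is $\r_i\not\subseteq\el_i$) forbids placing milestone $\r'_i=\r_i$ on the inserted $\bar\el_i$-cell, so the admissible embedding skips it into the gap preceding $\r'_i$, which forces $\el'_i\subseteq\bar\el_i=\el_i$; the instances $(\p_2)$ are used to discard the remaining alternative embeddings when the $\r_j$ share atoms. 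For the reverse inclusion, the punctured instance $(\n_1)$ with $X=\{A\}$, $A\in\el_i$, is rejected by $\q'$, and were some $A\in\el_i\setminus\el'_i$ the embedding of $\q'$ could again skip the punctured cell (its gap constraint $\el'_i\subseteq\sigma\setminus\{A\}$ still holds), contradicting rejection; thus $A\in\el'_i$ and $\el_i\subseteq\el'_i$. Exactly when this punctured word still happens to satisfy $\q$ (so it is unavailable as a negative example), the bounded witnesses $\D^i_A$ of $(\n_2)$ take over the same role. Combining the inclusions gives $\el'_j\equiv\el_j$, whence $\q'\equiv\q$.

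Finally I would bound the size by counting $O(n^2)$ positive and $O(n|\sigma|^2)$ negative instances, each of length $O(n^2)$ thanks to the explicit cap $\max(\D^i_A)\le(n+1)^2$ in $(\n_2)$. Proving that cap is the step I expect to be the main obstacle: one must show by a pumping argument on the tail $\bar\el_{i+1}^{k_{i+1}}\cdots\bar\el_n^{k_n}$ that, whenever a non-model of $\q^\dagger$ of shape $\D^i_A$ exists at all, one of length at most $(n+1)^2$ already does. A second delicate point is that $\q'$ is not assumed peerless, so the placement analyses above must exclude embeddings that exploit redundant structure of $\q'$; this is precisely where peerlessness of $\q$ together with the tightness enforced by $(\n_0)$ is needed.
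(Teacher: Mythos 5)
A preliminary remark: this theorem is not proved in the paper at all. It carries the citation \cite{DBLP:conf/kr/FortinKRSWZ22} and is imported as a black box, used only inside the proof of Theorem~\ref{thm:mainuntil} via the reduction $f$ to the propositional case. So there is no in-paper proof to compare against, and I evaluate your argument on its own merits. Your first stage is essentially sound: rejection of $\bar\sigma^{n}$ forces $m\ge n$, acceptance of the spine $(\mathfrak p_0)$ forces $m\le n$ and tightness of every embedding, and since tight embeddings into $(n{+}1)$-cell words leave no gaps (hence impose no $\el'$-constraints), the punctured full words of $(\mathfrak n_0)$ indeed yield $\r'_j\equiv\r_j$.

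The genuine gap is in your second stage: the statement you set out to prove there — that \emph{every} fitting $\q'$ satisfies $\el'_j\equiv\el_j$ for all $j$ — is false, so no patching of the embedding analysis can save it. Equivalent queries fit exactly the same example sets, and syntactically distinct letter sequences can give equivalent queries; the paper's own warm-up example before Theorem~\ref{thm:mainuntil} exhibits this: for the peerless query $\q=\nxt A=\top\land(\bot\U A)$ we have $\q'=\top\land(A\U A)\equiv\q$, which therefore fits $E$, yet $\el'_1=A\neq\bot=\el_1$. A correct proof can only conclude \emph{equivalence}, and the hard part is precisely the regime your argument cannot enter: when the candidate negative example of type $(\mathfrak n_1)$ happens to satisfy $\q$ (so it is filtered out of $E^-$), or when no bounded witness $\D^i_{\!A}$ of type $(\mathfrak n_2)$ exists at all (note the ``if any''), one must show that a fitting $\q'$ whose $\el'_i$ differs from $\el_i$ is nonetheless equivalent to $\q$. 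Your sentence that $(\mathfrak n_2)$ ``takes over the same role'' asserts exactly what needs to be proved, and in the ``if any'' case there is nothing to take over. Two further holes, which you partly flag yourself: (i) in the $(\mathfrak p_1)$ analysis, peerlessness only excludes accepting embeddings that skip a position \emph{after} the inserted $\bar\el_i$ cell (those would place $\r_i$ on $\bar\el_i$, forcing $\el_i\models\r_i$); embeddings skipping an \emph{earlier} position are compatible with peerlessness whenever, e.g., $\el_i\models\r_{i-1}$ and $\r_{k+1}\models\r_k$ hold, and deferring their exclusion to $(\mathfrak p_2)$ cannot work, because positive examples can only demand acceptance — they can never rule an embedding out; any exclusion must come from $E^-$; and (ii) the pumping argument behind the cap $\max(\D^i_{\!A})\le(n{+}1)^2$, which you yourself identify as the main obstacle, is not given, and without it the negative side of the characterisation is not even well defined as a finite polynomial-size set.
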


\smallskip

Returning to the proof of Theorem~\ref{thm:mainuntil}, assume a signature $\sigma$, an ontology $\mathcal{O}$ in $\sigma$ admitting containment reduction and general split-partners within $\Q^\sigma$, and a $\q \in \LTL_{pp}\UN(\Q^{\sigma})$ of the form \eqref{upath} are given.
We may assume that $\r_{n}\not\equiv_{\mathcal{O}} \top$.
We obtain the set $E^{+}$ of positive examples by taking the following data instances:
\begin{description}
	\item[$(\mathfrak p_0')$] $\hat{\r}_{0}\dots\hat{\r}_n$,
	
	\item[$(\mathfrak p_1')$] $\hat{\r}_0\dots\hat{\r}_{i-1}\hat{\el}_i \hat\r_i \dots \hat{\r}_n = \D^i_\q$,
	
	\item[$(\mathfrak p_2)'$] 
	$\hat{\r}_0 \dots \hat{\r}_{i-1} \hat{\el}_{i}^k \hat{\r}_{i} \dots \hat{\r}_{j-1} \hat{\el}_{j}\hat{\r}_j\dots\hat{\r}_n=\D^j_{i,k}$, for  $i < j$ and $k = 1,2$.
	
\end{description}
We obtain the set $E^{-}$ of negative examples by taking the following data instances $\mathcal{D}$ whenever $\mathcal{D}\not\models \q$:
\begin{description}
	\item[$(\mathfrak n_0')$] 
	$\mathcal{A}_{1},\ldots,\A_{n}$ and $\mathcal{A}_{1},\ldots,\A_{n-i},\A,\A_{n-i+1},\ldots,\A_{n}$, for $(\mathcal{A}, a) \in \mathcal{S}(\{\r_i\})$ and $(\A_{1}, a),\ldots,(\A_{n}, a) \in \mathcal{S}_\bot$;
	
	\item[$(\mathfrak n_1')$] $\hat{\r}_0\dots\hat{\r}_{i-1}\mathcal{A}\hat{\r}_i\dots\hat{\r}_n$, where $(\mathcal{A}, a) \in \mathcal{S}(\{\el_{i},\r_{i}\})\cup \mathcal{S}(\{\el_{i}\}) \cup \mathcal{S}_\bot$;
	
	\item[$(\mathfrak n_2')$]
	for all $i$ and $(\mathcal{A}, a) \in \mathcal{S}(\{\el_i,\r_{i}\})\cup\mathcal{S}(\{\el_i\})\cup\mathcal{S}_\bot$, \emph{some} data instance
	$$
	\D^i_{\mathcal{A}} = \hat{\r}_{0}\dots \hat{\r}_{i-1} \mathcal{A} \hat{\r}_{i}\hat{\el}_{i+1}^{k_{i+1}}\hat{\r}_{i+1} \dots \hat{\el}_{n}^{k_n}\r_n,
	$$
	if any, such that $\max(\D^i_{\mathcal{A}}) \le (n+1)^2$ and $\D^i_{\mathcal{A}}\not\models \q^\dag$ for $\q^\dag$ obtained from $\q$ by replacing all $\el_j$, for  $j \le i$, with $\bot$.
\end{description}
We show now that $\q$ is uniquely characterised by the constructed example set $E=(E^{+},E^{-})$ \wrt $\mathcal{O}$ within $\LTL_{p}\UN(\Q^{\sigma})$.
Consider any query
\begin{equation*}
	\q' = \r'_0 \land (\el'_1 \U (\r'_1 \land ( \el'_2 \U ( \dots (\el'_m \U \r'_m) \dots ))))
\end{equation*}
in $\LTL_{p}\UN(\Q^{\sigma})$ such that $\q'\not\equiv_{\mathcal{O}}\q$.  We can again assume that $\r_{m}'\not\equiv_{\mathcal{O}}\top$. Thus, in what follows we can safely ignore what the ontology $\mathcal{O}$ entails after the timepoint $\max\D$, for any database $\D$, as these points do not contribute to entailment of $\q$ or $\q'$.  In order to show that $\q$ fits $E$ and $\q'$ does not fit $E$, we need a few definitions.

We define a map $f$ that reduces the 2D case to the 1D case. Consider the alphabet
\begin{equation*}
	\Gamma =\{\r_0,\dots,\r_n,\el_1,\dots,\el_n,\r'_0,\dots,\r_m,\el'_1,\dots,\el'_m, \}\setminus\{\bot\},
\end{equation*}
in which we regard the CQs $\r_i,\el_i,\r'_j,\el'_j$ as symbols.  Let $\hat{\Gamma}=\{(\hat{\avec a},a) \mid \avec a \in\Gamma\}$, that is, $\hat{\Gamma}$ consists of the pointed databases corresponding to the CQs $\avec{a} \in \Gamma$. For any CQ $\avec{a}$, we set
$$
f(\avec{a}) = \{\avec{b}(x) \mid \avec{b} \in \Gamma \text{ and } \mathcal{O},\hat{\avec{a}} \models \avec{b}(a)\}.
$$
Similarly, for any pointed data instance $(\A,a)$, we set
$$
f(\A,a) = \{ \avec{b}(x) \mid \avec{b} \in \Gamma \text{ and } \mathcal{O},\A \models \avec{b}(a)\}
$$
and, for any temporal data instance $\D = \A_{0},\ldots,\A_{k}$ with a point
$a$, set
\begin{equation*}
	f(\D,a) = (f(\A_0,a), \dots ,f(\A_{k},a)),
\end{equation*}
which is a temporal data instance over the signature $\Gamma$. Finally, we define an $\LTL_{p}\UN(\mathcal{P}^{\Gamma})$-query
\begin{equation*}
	f(\q) = \rho_0 \land (\lambda_1 \U (\rho_1 \land (\lambda_2 \U( \dots (\lambda_n \U \rho_n) \dots ))))
\end{equation*}
by taking $\rho_i = f(\r_i)$ and  $\lambda_i = f(\el_i)$, and similarly for $\q'$.
It follows immediately from the definition that, for any data instance $\D$, we have $\mathcal{O},\D \models \q$ iff $f(\D,a) \models f(\q)$ and $\mathcal{O},\D\models f(\q')$ iff $f(\D,a)\models f(\q')$.

We first observe that $f(\q)$ is a peerless $\LTL_{p}\UN(\mathcal{P}^{\Gamma})$-query:
indeed, since $\mathcal{O},\hat\r_i\not\models\el_i(a)$, we have $\el_i\in f(\el_i)\setminus f(\r_i)$, and since $\mathcal{O},\hat\el_i\not\models\r_i(a)$, we have $\r_i\in f(\r_i)\setminus f(\el_i)$. It follows that $f(\q) \not\equiv f(\q')$

Let $E_{\text{prop}}=(E_{\text{prop}}^{+},E_{\text{prop}}^{-})$ be the example set defined for $f(\q)$ using $(\mathfrak p_0)$--$(\mathfrak p_2)$ and $(\mathfrak n_0)$--$(\mathfrak n_2)$. By Theorem~\ref{uno}, $f(\q)$ fits $E_{\text{prop}}$ and $f(\q')$ does not fit $E_{\text{prop}}$.

A \emph{satisfying root $\mathcal{O}$-homomorphism}
for any query
$$
\r_0 \land (\el_1 \U (\r_1 \land (\el_2 \U( \dots (\el_n \U \r_n) \dots ))))
$$
in $\D,a=(\A_{0},a)\ldots,(\A_{k},a)$ is a map $h$ from $\{0,\ldots,n\}$ to $\mathbb{N}$  such that
$h(0)=0$ and $h(i)<h(i+1)$ for $i<n$ and
\begin{itemize}
	\item $\mathcal{O},\A_{f(i)}\models \r_{i}(a)$;
	\item $\mathcal{O},\A_{i'} \models \el_{i}(a)$ for all $i'\in (f(i),f(i+1))$.
\end{itemize}
Clearly, such a root $\mathcal{O}$-homomorphism exists iff the query is satisfied in $\D,a$. If the query is in $\LTL_{p}\UN(\mathcal{P}^{\sigma})$ and $\TO$ is empty, then we call the homomorphism above a \emph{satisfying homomorphism}.

We are now in a position to show that $\q$ fits $E$ but $\q'$ does not fit $E$.
It is immediate from the definitions that $\q$ fits $E$. So we show that $\q'$ does not fit $E$.

Assume first that $f(\q')$ is not entailed by some example in
$E^{+}_{p}$. Then $\q'$ is not entailed by some example in $E^{+}$ as the  examples from $(\mathfrak p_0)$--$(\mathfrak p_2)$ are exactly the $f$-images of the examples $(\mathfrak p'_0)$--$(\mathfrak p'_2)$.

Assume now that $f(\q')$ is entailed by all data instances in $E_{\text{prop}}^{+}$ and
is also entailed by some $\D$ from $E_{\text{prop}}^{-}$. We show that then there is a data instance in $E^{-}$ that entails $\q'$ under $\mathcal{O}$.

If $\D=\Gamma^n$, then it follows that the temporal depth of $f(\q')$ is less than the temporal depth of $f(\q)$. Then $m<n$ and the query $\q'$ is entailed by some $\A_{1},\ldots,\A_{n}\in E^{-}$ with $(\A_{i}, a) \in \mathcal{S}_\bot$: we obtain $\A_{i}$ by taking $(\A_{i}, a) \in \mathcal{S}_\bot$ such that $\mathcal{O},\A_{i}\models \r_{i}'(a)$.

Suppose $\D=\Gamma^{n-i}(\Gamma\setminus \{\avec{a}\})\Gamma^{i}\models f(\q')$. Observe that the only satisfying homomorphim that witnesses this is the identity mapping. So we have $f(\r_{n-i})\not\subseteq\Gamma\setminus\{\avec{a}\}$ and therefore $\mathcal{O},\hat\r_{n-i}\models \avec a(a)$ but $f(\r'_{n-i})\subseteq\Gamma\setminus\{\avec{a}\}$. Then $\mathcal{O},\hat{\r}'_{n-i} \not\models \avec a(a)$, and so $\r'_{n-i}\not\models_{\mathcal{O}}\r_i$.  Therefore, there is $(\mathcal A,a)\in \mathcal S(\{\r_{n-i}\})$ such that $\mathcal{O},\mathcal A\models\r'_{n-i}(a)$. Observe that also $\mathcal{O},\mathcal A\not\models\r_{n-i}(a)$.

Now take, for any $j\not=n-i$, some $(\A_{j}, a) \in \mathcal{S}_\bot$ with $\mathcal{O},\A_{j}\models \r_{j}'$.
Then
\begin{align*}
&	\mathcal{O},\mathcal A_{0}\cdots\A_{n-i-1}\mathcal A\mathcal A_{n-i+1}\cdots\A_{n}\not\models \q, \\
&	\mathcal{O},\mathcal A_{0}\cdots\A_{n-i-1}\mathcal A\mathcal A_{n-i+1}\cdots\A_{n}\not\models \q'.
\end{align*}

Assume next that $\D$ is from $(\mathfrak n_1)$. We have $\D\models f(\q')$ and
$\D\not\models f(\q)$.
As $\D\models f(\q')$, we have a satisfying homomorphism $h$ for $f(\q')$ in $\D$.

If there is $j$ such that $h(j)=i$, then let $\r=\r_{j}$. Otherwise, there is $j$ such that $h(j)<i<h(j+1)$. Then let $\r=\el_{j}$. In both cases $f(\r)\subseteq Y$, where $Y$ depends on $\D$ and is either:
\begin{enumerate}
	\item $\Gamma\setminus \{\avec{a},\avec{b}\}$ with $\mathcal{O},\hat \el_{i}\models \avec{a}(a)$ and $\mathcal{O},\hat \r_i\models \avec{b}(a)$ or
	\item $\Gamma\setminus \{\avec{a}\}$ with $\mathcal{O},\hat \el_{i}\models \avec{a}(a)$ or
	\item $\Gamma$ (only if $\el_{i}=\bot$) or
	\item $\Gamma\setminus \{\avec{b}\}$.
\end{enumerate}

\emph{Case} 1. We have $\mathcal{O},\hat \r\not\models \avec{a}(a)$ and $\mathcal{O},\hat \r\not\models \avec{b}(a)$. Hence $\mathcal{O},\hat \r\not\models \el_{i}(a)$ and $\mathcal{O},\hat \r\not\models \r_{i}(a)$.
By the definition of split-partners,
there exists $(\A,a) \in \mathcal{S}(\{\el_{i},\r_{i}\})$ such that $\mathcal{O},\A\models \r(a)$. But then $h$ is also a satisfying root $\mathcal{O}$-homomorphism in $\D', a$ witnessing that $\q'$ is entailed by $\D'=\hat{\r}_{0}\dots \hat{\r}_{i-1} \mathcal{A} \hat{\r}_{i}\hat{\r}_{i+1} \dots \r_n$ \wrt $\mathcal{O}$.

It remains to show that $\mathcal{O},\D'\not\models \q$. Assume otherwise. Take a satisfying root $\mathcal{O}$-homomorphism $h^{\ast}$ witnessing $\mathcal{O},\D'\models \q$. By peerlessness
of $\q$, $h^{\ast}(j)=j$ for all $j<i$. But then $\mathcal{O},\A \models \el_{i}$ or $\mathcal{O},\A \models \r_{i}$ which both contradict to $(\A,a) \in \mathcal{S}(\{\el_{i},\r_{i}\})$.

\emph{Case} 2. We have $\mathcal{O},\hat \r\not\models \avec{a}(a)$. Hence $\mathcal{O},\hat \r\not\models \el_{i}(a)$. We now distinguish two cases.
If also $\mathcal{O},\hat \r\not\models \r_{i}(a)$, then we proceed as in the previous case and choose a split-partner $(\A,a) \in \mathcal{S}(\{\el_{i},\r_{i}\})$ such that $\mathcal{O},\A\models \r(a)$. We proceed as in Case 1.

If $\mathcal{O},\hat \r\models \r_{i}(a)$, then we proceed as follows.
Choose a split-partner $(\A,a) \in \mathcal{S}(\{\el_{i}\})$ such that $\mathcal{O},\A\models \r(a)$.
Then $h$ is also a satisfying root $\mathcal{O}$-homomorphism in $\D', a$ witnessing that $\q'$ is entailed by $\D'=\hat{\r}_{0}\dots \hat{\r}_{i-1} \mathcal{A} \hat{\r}_{i}\hat{\r}_{i+1} \dots \r_n$ \wrt $\mathcal{O}$.

It remains to show that $\mathcal{O},\D'\not\models \q$. Assume otherwise. Take a satisfying root $\mathcal{O}$-homomorphism $h^{\ast}$ in $\D', a$ witnessing $\D'\models \q$. By peerlessness
of $\q$, $h^{\ast}(j)=j$ for all $j<i$. Then $h^{\ast}(i)=i$ as
$\mathcal{O},\A \models \el_{i}$ would contradict $(\A,a) \in \mathcal{S}(\{\el_{i}\})$.
But then $h^{\ast}$ is a satisfying homomorphism in $\D, a$ witnessing $\D\models f(\q)$ and we have derived a contradiction.

\emph{Case} 3. We set $\D'=\hat{\r}_{0}\dots \hat{\r}_{i-1} \mathcal{A} \hat{\r}_{i}\hat{\el}_{i+1}^{k_{i+1}}\hat{\r}_{i+1} \dots \hat{\el}_{n}^{k_n}\r_n$ for some $(\A, a) \in \mathcal{S}_\bot$ with $\mathcal{O},\A\models \r_{i}'$. It directly follows from $\D\models f(\q')$ that $\mathcal{O},\D'\models\q'$ and also from
$\D\not\models f(\q)$ that $\mathcal{O},\D'\not\models \q$.

\emph{Case} 4. We have $\mathcal{O},\hat{\r}\not\models \avec{b}(a)$. Hence $\mathcal{O},\hat{\r}\not\models \r_{i}(a)$. We distinguish two cases.
If also $\mathcal{O},\hat \r\not\models \el_{i}(a)$, then we proceed as in Case 1
and choose split-partner $(\A,a) \in \mathcal{S}(\{\el_{i},\r_{i}\})$ such that $\mathcal{O},\A\models \r(a)$.

If $\mathcal{O},\hat \r\models \el_{i}(a)$, then we proceed as follows.
Choose a split-partner $(\A,a) \in \mathcal{S}(\{\r_{i}\})$ such that $\mathcal{O},\A\models \r(a)$.
Then $h$ is also a satisfying root $\mathcal{O}$-homomorphism in $\D', a$ witnessing that $\q'$ is entailed by $\D'=\hat{\r}_{0}\dots \hat{\r}_{i-1} \mathcal{A} \hat{\r}_{i}\hat{\r}_{i+1} \dots \r_n$ \wrt $\mathcal{O}$.

It remains to show that $\mathcal{O},\D'\not\models \q$. Assume otherwise. Take a satisfying root $\mathcal{O}$-homomorphism $h^{\ast}$ in $\D',a$ witnessing $\mathcal{O},\D'\models \q$. By peerlessness
of $\q$, $h^{\ast}(j)=j$ for all $j<i$. Then $h^{\ast}(i)>i$ as
$\mathcal{O},\A \models \r_{i}(a)$ would contradict the definition of $\A$.
But then, as $\Gamma\setminus \{\avec{b}\}\models f(\el_{i})$,
$h^{\ast}$ is also a satisfying homomorphism in $\D, a$ witnessing $\D\models f(\q)$ and we have derived a contradiction.

The case when $\D$ is from $(\mathfrak n_2)$ is considered similarly to the case of $(\mathfrak n_1)$.

\frontiersnotworking*

\begin{proof}
Consider the ontology
$$
\mathcal{O}=\{ \text{fun}(P), \text{fun}(P^-),B \sqcap \exists P^{-}\sqsubseteq \bot\}
$$
from the proof of Theorem~\ref{counterexample}. We know from~\cite{DBLP:conf/ijcai/FunkJL22} and that proof that $\TO$ admits frontiers within ELIQ$^{\{A,B,P\}}$ but not split-partners. We show that the query $\nxt A$ is not uniquely characterisable \wrt $\TO$ within $\LTL_{p}\UN(\text{ELIQ}^{\{A,B,P\}})$. Indeed, suppose $E=(E^{+},E^{-})$ is such a unique characterisation.

Consider the following set of pointed data instances:
\begin{multline*}
\mathcal{S}(\{A\}) = \{(\A_{i},a) \mid i >0,\ \exists \D = \A_{0},\dots,\A_{n} \in E^-,\\  \TO,\A_{j}\not\models A(a) \text{ for $0<j\leq i$}\}.
\end{multline*}
We claim that the defined $\mathcal{S}(\{A\})$ is a split-partner for $\{A\}$ within ELIQ$^{\{A,B,P\}}$, which is a contradiction.


Take any $\q'\in \text{ELIQ}^{\{A,B,P\}}$.
If $\TO,\A\models \q'(a)$, for some $(\A,a)\in \mathcal{S}(\{A\})$, then $\q'\not\models_{\TO} \nxt A$ because otherwise $\TO,\A\models A(a)$ which is not the case by definition of $\mathcal{S}(\{A\})$.

Now suppose $\TO,\A\not\models \q'(a)$ for all $(\A,a)\in
\mathcal{S}(\{A\})$. Then $\D\not\models \q'\U A$ for all $\D$ of the
form $\A_0,\ldots,\A_{n}$ in $E^{-}$ with $\TO,\A_{1}\not\models
A(a)$. Hence $\D\not\models \q'\U A$ for all $\D\in E^{-}$. On the other hand, from $\nxt A \models \q'\U A$ we obtain $\D\models \q'\U A$ for all $\D\in E^{+}$, and so $\q'\U A$ is equivalent to $\nxt A$ \wrt $\TO$. By the shape of $\TO$, this implies that $\q'$ is equivalent to $\bot$, and so $\q'\models_{\TO} A$, as required by the definition of split-partners.
\end{proof}


\section{Proofs for Section~\ref{sec:learning}}

This section is mainly devoted to give a full proof of
Theorem~\ref{thm:learning}, but we need some preparation.

\subsection{Normal Form}

In order to lift some results obtained in the atemporal case~\cite{DBLP:conf/dlog/FunkJL22}
to the temporal setting, we have to rely on the same normal form for
ontologies. 
An \ELHIF ontology is in \emph{normal form} if every concept inclusion
takes one of the following forms:
\[
A\sqsubseteq \exists R.A',\quad\quad \exists R.A\sqsubseteq A',\quad\quad
A\sqcap A'\sqsubseteq B,
\]
where $A,A'$ are concept names or $\top$, $B$ is a concept name or
$\bot$, and $R$ is a role. 

We describe next how to convert an \ELHIF ontology \Omc into an
\ELHIF ontology $\Omc'$ in normal form. Let us use $\Cmf(\Omc)$ to denote
the set of all concepts that occur in a concept inclusion in \Omc.
Note that $\Cmf(\Omc)$ is closed under taking sub-concepts.  We
introduce a fresh concept name $X_C$ for every complex concept $C \in
\Cmf(\Omc)$, and set $X_\bot = \bot$ and $X_A=A$ for concept names
$A\in\Cmf(\Omc)$.  The ontology $\Omc'$ consists of all functionality
assertions and all role inclusions in \Omc and the following concept inclusions: 
\begin{itemize}

  \item $X_C \sqsubseteq X_D$ for every 
    $C \sqsubseteq D \in \Omc$;

  \item $X_{D_1\sqcap D_2} \sqsubseteq X_{D_i}$ and $X_{D_1}\sqcap
    X_{D_2}\sqsubseteq X_{D_1\sqcap D_2}$, for every $D_1\sqcap
    D_2\in \Cmf(\Omc)$ and $i\in\{1,2\}$;

  \item $X_{\exists R.C}\sqsubseteq \exists R.X_C$ and  $\exists
    R.X_C\sqsubseteq X_{\exists R.C}$, for every $\exists R.C\in
    \Cmf(\Omc)$.

\end{itemize}
Clearly, $\Omc'$ can be computed in polynomial time. 
Regarding
the relationship between \Omc and $\Omc'$, we observe the following
consequences of the definition of $\Omc'$.
\begin{lemma}
  \label{lem:normalform}
  ~\\[-5mm]
  \begin{enumerate}

    \item $\Omc'$ is a conservative extension of $\Omc$;


    \item $\text{sig}(\Omc') = \text{sig}(\Omc)\cup \{X_C\mid
      C\in\Cmf(\Omc)\}$;

    \item $\Omc'\models X_C\equiv C$, for all $C\in \Cmf(\Omc)$.

  \end{enumerate}

\end{lemma}

Lemma~\ref{lem:normalform} essentially says that $\Omc'$ is a
conservative extension of $\Omc$, but is slightly stronger in 
also making precise how exactly a model of $\Omc$ can be extended
to a model of $\Omc'$.

\medskip We next show that it suffices to provide learning algorithms
\wrt ontologies in normal form.

\begin{lemma} \label{lem:obda-normalform}
  Let $\Lmc$ be an ontology language contained in $\ELHI$ or $\ELIF$. 
  If a class $\Q\subseteq \LTL_p\NDD(\text{ELIQ})$ of queries is
  polynomial query learnable \wrt \ELHIF ontologies in normal form using membership queries, then the same is
  true for \Lmc ontologies. If, additonally, $\Lmc$ admits
  polynomial time instance checking, then even polynomial time
  learnability is preserved.
\end{lemma}

\begin{proof}
  Let $L'$ be a polynomial time learning algorithm for \Qmc \wrt
  ontologies in normal form. We transform it into a polynomial
  time learning algorithm $L$ for \Qmc \wrt unrestricted \ELIF ontologies, relying on the normal form provided by
  Lemma~\ref{lem:normalform}. The construction for \ELHI is similar,
  and we strongly conjecture that it is possible to lift it to full
  \ELHIF but it is beyond the scope of the paper. 

  Given an \ELIF ontology $\Omc$ and a signature $\Sigma =
  \text{sig}(\Omc)$ with $\text{sig}(q_T)\subseteq \Sigma$, algorithm $L$ first computes the
  ontology $\Omc'$ in normal form as per Lemma~\ref{lem:normalform},
  choosing the fresh concept names so that they are not from $\Sigma$.
  It then runs $L'$ on $\Omc'$ and $\Sigma' = \Sigma \cup \text{sig}(\Omc')$.
  In contrast to $L'$, the oracle still works with the original ontology
  $\Omc$. To ensure that the answers to the queries posed to the oracle are
  correct, $L$ modifies $L'$ as follows.

  Whenever $L'$ asks a membership query $\Dmc',a$ with
  $\Dmc'=\Amc_0',\ldots,\Amc_n'$,
  we may assume that each $\Amc'_i$ satisfies the functionality assertions
  from \Omc, since otherwise the answer is trivially ``yes''. Then, 
  $L$ asks the membership query $\Dmc,a$,
  where $\Dmc$ is obtained from $\Dmc'$. Note that the \Dmc we are
  going to construct contains concept assertions $C(d)$ for complex
  concepts $C$, but these can be removed at the cost of introducing more
  fresh individuals and using standard concept assertions. 

  We start with setting 
  $\Amc_i=\Amc'_i\cup \{C(d)\mid X_C(d)\in \Amc_i'\}$, for all $i$
  and then extending the $\Amc_i$, for every role $R$ and every
  individual $b\in\text{ind}(\Amc_i')$ as follows: 
  \begin{itemize}

    \item[$(\dagger)$] Let $C_{R,b}$ be the set of all concepts $\exists R.D\in
      \Cmf(\Omc)$ such that $\Omc',\Amc_i'\models \exists R.D(b)$ but
      $\Omc',\Amc_i'\not\models D(b')$ for any $R(b,b')\in \Amc_i'$. Then 
      \begin{itemize}

	\item if $\text{fun}(R)\notin \Omc$, then add for each $\exists
	  R.D\in C_{R,b}$ one fresh individual $c$ together with
	  assertions $R(b,c),D(c)$;

	\item otherwise, add one fresh individual $c$ and add
	  assertions $R(b,c)$ and $D(c)$, for all $\exists R.D\in
	  C_{R,b}$.

      \end{itemize}

  \end{itemize}
  By the following claim, the answer to the modified membership
  query coincides with that to the original query.

  \medskip
  \noindent\textit{Claim~1.} $\Omc',\Dmc',0,a \models \q$ iff
  $\Omc,\Dmc,0,a \models \q$ for all $\q\in \LTL_p\NDD(\text{ELIQ})$
  that only use symbols from $\Sigma$,
  and all $a\in \text{ind}(\Dmc')$.

  \medskip

  \noindent\textit{Proof of Claim~1.} For ``if'', suppose that
  $\Omc,\Dmc,0,a \models \q$ and let $\Imc'$ be a model of $\Dmc'$ and
  $\Omc'$. We can assume that $\Delta^{\Imc'}$ does not mention any of
  the individuals that were introduced in the construction of \Dmc. We
  will construct a model $\Imc$ of $\Dmc$ and \Omc such
  that $(\Imc_i,a) \to (\Imc'_i,a)$, for every $0\leq i\leq n$. This
  clearly suffices since $\Imc,0,a\models \q$. 

  Fix some $i$ with $0\leq i\leq n$ and start with setting $\Imc_i$ to
  the restriction of $\Imc'_i$ to $\text{ind}(\Amc_i')$. Then process
  every individual $b\in \text{ind}(\Amc_i')$ and every role $R$.

  Let $C_{R,b}$ be the set of concepts in $(\dagger)$. We
  distinguish cases:

  \begin{itemize}

    \item If $\text{fun}(R)\notin \Omc$, process each $\exists R.D\in
      C_{R,b}$ as follows. By definition of $C_{R,b}$, we have
      $\Omc',\Amc_i'\models \exists R.D(b)$. As $\Imc_i'$ is a model of
      $\Omc'$ and $\Amc_i'$, there is an element $c$ with $(b,c)\in
      R^{\Imc'_i}$ and $c\in D^{\Imc'_i}$. Take the unraveling
      $\Jmc_c$ of $\Imc_i'$ at $c$, omit the $R^-$-successor of $c$ if
      $\text{fun}(R^-)\in \Omc$, and add the root of $\Jmc_c$ as an
      $R$-successor of $b$.

    \item If $\text{fun}(R)\in \Omc$, we proceed as follows. By
      definition of $C_{R,b}$, we have $\Omc',\Amc_i'\models
      \exists R.D(b)$, for all $\exists R.D\in C_{R,b}$. As
      $\Imc'$ is a model of $\Omc'$ and $\Amc_i'$, there is an element
      $c$ with $(b,c)\in R^{\Imc_i'}$ and $c\in D^{\Imc'_i}$, for all
      $\exists r.D\in C_{R,b}$. By definition of $C_{R,b}$ and since
      $\text{fun}(R)\in\Omc$, we know that there is no $b'\in
      \text{ind}(\Amc_i')$ with $(b,b')\in \Amc_i'$. Take the unraveling
      $\Jmc_c$ of $\Imc_i'$ at $c$, omit the $R^-$-successor of $c$ if
      $\text{fun}(R^-)\in \Omc$, and add the root of $\Jmc_c$ as an
      $R$-successor of $b$.

  \end{itemize}

  For the sake of completeness, we provide a formal definition of $\Jmc_c$. Its
  domain $\Delta^{\Jmc_c}$ consists of all sequences $a_0R_1a_1\ldots R_na_n$
  such that 
  \begin{itemize}

    \item $a_0=c$; 

    \item $a_i\in \Delta^{\Imc'}$, for all $i$ with $0\leq i\leq
      n$;

    \item $(a_i,a_{i+1})\in R_{i+1}^{\Imc'}$, for all $i$ with
      $0\leq i<n$;

    \item if $\text{fun}(R_i^-)\in \Omc$, then $R_{i+1}\neq R_i^-$,
      for all  $i$ with
      $0\leq i<n$;

    \item if $R_1=R^-$ then $\text{fun}(R^-)\notin \Omc$.

  \end{itemize}
  The interpretation of concept names $A\in \mn{N_C}$ and role names
  $r\in \mn{N_R}$ is then as expected:
  \begin{align*}
    A^{\Jmc_c} & = \{a_0R_1a_1\ldots R_na_n\in \Delta^{\Jmc_c}\mid
    a_n\in A^{\Imc'}\}\\
    r^{\Jmc_c} & = \{(\pi,\pi r a)\mid \pi r a\in
      \Delta^{\Jmc_c}\}\cup{}\\
      & \phantom{ {} = {}} \{(\pi r^- a,\pi)\mid \pi r^- a\in
	\Delta^{\Jmc_c}\}.
  \end{align*}
  Note that each $\Jmc_c$ has a homomorphism into $\Imc'$: just map
  every sequence $a_0R_1\ldots a_n$ to $a_n$.

  Let $\Imc$ be the result of the above process.
  Due to the initialization, we have $\Imc\models \Amc$. It is routine to
  verify that $\Imc$ is also a model of \Omc and that there is a
  homomorphism $(\Imc,a)\to (\Imc',a)$. 

  \smallskip

  For ``only if'', suppose that $\Omc',\Dmc',0,a\models \q$ and let
  $\Imc$ be a model of $\Dmc$ and \Omc. Since $\Omc'$ is a
  conservative extension of $\Omc$, there is a model $\Imc'$ of
  $\Omc'$ that coincides (in every time point) with $\Imc$ on $\Sigma$. Moreover, by Point~3
  of Lemma~\ref{lem:normalform}, it is also a model of $\Dmc'$. 
  It follows that $\Imc,0,a\models \q$ as required. This finishes the
  proof of Claim~1. 

  \medskip For \ELHI, we use the following variant of $(\dagger)$:
  \begin{itemize}
    \item[$(\ddagger)$] Let $C_{R,b}$ be the set of all concepts
      $\exists R.D\in \Cmf(\Omc)$ such that $\Omc',\Amc_i'\models
      \exists R.D(b)$. Then add for each $\exists R.D\in C_{R,b}$ one
      fresh individual $c$ together with assertions $R(b,c),D(c)$.

  \end{itemize}
  Now, polynomial query learnability is preserved simply due to the
  fact that the construction of $\Dmc$ from $\Dmc'$ is computable
  because instance checking \wrt \ELHIF ontologies is decidable.
  If the ontology language admits polynomial time instance checking,
  then the construction can actually be computed in polynomial time;
  thus polynomial time learnability is preserved. 
\end{proof}

\subsection{Generalisation Sequences} 

Generalisation sequences have been introduced as a generic tool to
show that exact learning algorithms in the atemporal case need only
polynomially many steps~~\cite{DBLP:conf/dlog/FunkJL22}. We recall the
definition. 

 A \emph{generalisation sequence for a CQ $\q$
\wrt \Omc} is a sequence
$\q_1,\q_2,\ldots$ of CQs that satisfies the following conditions, for all
$i\geq 1$: 
\begin{itemize}

  \item $\q_i\models_\Omc \q_{i+1}$ and $\q_{i+1}\not\models_\Omc \q_{i}$, and

  \item $\q_i\models_\Omc \q$.

\end{itemize}
Intuitively, a generalisation sequence is a sequence of weaker and
weaker CQs which, however, still entail $\q$ \wrt \Omc. We recall next that suitable generalisation sequences have
bounded length. 

Let us fix CQs $\q,\q_T$. We say that $\q$ is
\textit{$(\q_T,\Omc)$-minimal} if $\q'\not\models_\Omc \q_T$, for
every restriction $\q'$ of $\q$ to a strict subset of the variables in
$\q$.  For a variable $y\in\text{var}(\q)$, we denote with $\q(y)$,
the variant of $\q$ where the unique free variable is $y$. We then say
that $\q$ is \textit{\Omc-saturated} if $\q(y)\models_\Omc A(y)$
implies that $A(y)$ is a conjunct in $\q$, for every variable $y$ in
$\q$ and every concept name $A$ that occurs in \Omc. As usual, a CQ is
\emph{rooted} if the graph $(\text{var}(\q),\{ \{x,y\}\mid r(x,y)\in
\q\})$ is connected. Clearly, all ELIQs are rooted.

We recall Theorem~13 from~\cite{DBLP:conf/dlog/FunkJL22}, adapted to our
notation.
\begin{theorem} \label{thm:bounded-atemporal} Let \Omc be an \ELIF
  ontology in normal form, $\q_T$ be a rooted CQ, and
  $\q_1,\q_2,\ldots$ be a generalization sequence towards $\q$ \wrt
  \Omc such that $\q_1$ is satisfiable \wrt \Omc. If all $\q_i$ are
  $(\q_T,\Omc)$-minimal and \Omc-saturated, then the length of the
  sequence is bounded by a polynomial in the sizes of $\Omc$ and
  $\q_T$.  \end{theorem}
Using the same techniques it can be proved that
Theorem~\ref{thm:bounded-atemporal} is remains true for
\ELHIF ontologies.


We lift the notion of generalisation sequences to temporal data
instances as discussed in the main part of the paper, and show an
analogue of Theorem~\ref{thm:bounded-atemporal}. We repeat the
definition here for the sake of convenience. 

Let $\q_T\in
\LTL_p\NDD(\text{ELIQ})$ be a temporal query, and let us fix
throughout the rest of the subsection an individual name $a$. A sequence
$\D_{1},\ldots$ of temporal data instances is a \emph{generalisation
  sequence towards $\q_T$ \wrt $\mathcal{O}$} if for all $i\geq 1$:
  \begin{itemize}

  \item $\D_{i+1}$ is obtained from $\D_{i}$ by modifying one
    non-temporal CQ $\r_{j}$ in $\D_{i}$ to $\r_{j}'$ such that
    $\r_{j}\models_\Omc \r_{j}'$ and $\r_{j}'\not\models_\Omc \r_{j}$;

  \item $\mathcal{O},\D_{i},0,a\models \q_T$ for all $i\geq 1$.

\end{itemize}
The notion of $\Omc$-saturatedness lifts from CQs to temporal data
instances $\Dmc=\q_0\ldots\q_n$ as expected: \Dmc is $\Omc$-saturated
if every $\q_i$ is. We further say that \Dmc is
\emph{$(\q_T,\Omc)$-minimal} if the result $\Dmc'$ of dropping any
atom from any $\q_i$ satisfies $\Omc,\Dmc',0,a\not\models q_T$. The
\emph{support} $\text{supp}(\Dmc)$ of a temporal data instance
$\Dmc=\Amc_0\ldots\Amc_n$ the set of all $i$ such that $\Amc_i\neq
\emptyset$
\begin{lemma}\label{lem:temporal-genseq}
  Let $\q_T\in \LTL_p\NDD(\text{ELIQ})$. The length of a generalisation sequence
  $\Dmc_1,\ldots,\Dmc_n$ towards $\q_T$ \wrt \Omc such that all
  $\Dmc_i$ are satisfiable \wrt \Omc, $\Omc$-saturated, and
  $(\q_T,\Omc)$-minimal is bounded by a polynomial in the sizes of
  $\q_T$, \Omc, and $|\text{supp}(\Dmc_1)|$.
\end{lemma}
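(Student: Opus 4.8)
The plan is to reduce the bound for temporal generalisation sequences to the known polynomially-bounded generalisation property of $\mathcal{ELHIF}^\bot$-ontologies in normal form (the temporal analogue of~\cite[Theorem~13]{DBLP:conf/dlog/FunkJL22} quoted above) by slicing a temporal generalisation sequence $\Dmc_1,\ldots,\Dmc_n$ into one ordinary (non-temporal) generalisation sequence per time point. Since every step modifies a single non-temporal CQ at a single time point, the number of time points stays constant throughout; denote it by $L=\len(\Dmc_1)+1\le|\Dmc_1|+1$. For a fixed time point $\ell<L$, let $\s^\ell_1,\s^\ell_2,\ldots$ be the sequence of \emph{distinct} CQs occurring at $\ell$ as we run through $\Dmc_1,\ldots,\Dmc_n$. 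By the definition of a temporal generalisation sequence, consecutive entries satisfy $\s^\ell_k\models_\Omc\s^\ell_{k+1}$ and $\s^\ell_{k+1}\not\models_\Omc\s^\ell_k$, so this is a strict CQ generalisation sequence, and each $\s^\ell_k$ inherits satisfiability and $\Omc$-saturation from the corresponding $\Dmc_i$. As every step changes exactly one position to a strictly more general, hence new, CQ, the total number of steps equals $\sum_{\ell<L}(m_\ell-1)$, where $m_\ell$ is the length of the chain at $\ell$. It therefore suffices to bound each $m_\ell$ polynomially in $|\q|$ and $|\Omc|$.

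To invoke polynomially-bounded generalisation I need, for each chain, a \emph{fixed} CQ target towards which it generalises and \wrt which all its members are minimal. Given the current value $\s$ at time point $\ell$, I would take as target the CQ $\r(\tau)=\bigwedge\{\,\r_i : \Omc,\s\models\r_i(a)\,\}$, the conjunction of those domain queries $\r_i$ occurring in $\q$ that are entailed at $\ell$ under $\Omc$; here $\tau=\tau(\s)$ is the set of such indices. Clearly $\s\models_\Omc\r(\tau)$. The key step — and the main obstacle — is to transfer minimality: I must show that the temporal $(\q,\Omc)$-minimality of $\Dmc_i$ (no individual can be deleted from any time point while keeping $\Dmc_i,a$ a positive example) forces each $\s$ to be $(\r(\tau),\Omc)$-minimal. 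The argument is: fix a root $\Omc$-homomorphism $h$ witnessing $\Omc,\Dmc_i\models\q$; every $\q$-position that $h$ sends to $\ell$ lies in $\tau$, since $h$ requires the corresponding $\r_i$ to be entailed there. Hence if some restriction $\s'$ of $\s$ to a strict subset of its variables still satisfied $\s'\models_\Omc\r(\tau)$, then $\s'$ would entail every $\r_i$ with $i\in\tau$, so $h$ would remain a root $\Omc$-homomorphism after replacing $\s$ by $\s'$ at $\ell$ (no other time point changes), contradicting minimality. Thus every such $\s'$ has $\s'\not\models_\Omc\r(\tau)$, as required.

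The remaining difficulty is that the target $\r(\tau)$ may \emph{change} along the chain. Here I would use monotonicity: if $\s^\ell_k\models_\Omc\s^\ell_{k+1}$ then $\tau(\s^\ell_{k+1})\subseteq\tau(\s^\ell_k)$, so the sets $\tau(\s^\ell_k)$ shrink monotonically as $k$ grows. Consequently the chain at $\ell$ splits into at most $\beta+1$ consecutive segments on which $\tau$, and hence the target $\r(\tau)$, is constant, where $\beta\le|\q|$ is the number of domain queries occurring in $\q$. On each such segment the entries form a genuine CQ generalisation sequence towards the fixed CQ $\r(\tau)$ whose members are satisfiable, $\Omc$-saturated and $(\r(\tau),\Omc)$-minimal, so by the polynomially-bounded generalisation of $\Omc$ its length is bounded by $p(|\r(\tau)|,|\Omc|)\le p(|\q|,|\Omc|)$ for a fixed polynomial $p$. (The degenerate case $\tau=\emptyset$, where $\r(\tau)=\top$, forces $\s$ to contain only the answer variable, and over the finite signature of $\Omc$ such chains are trivially short.) Summing, $m_\ell\le(\beta+1)\,p(|\q|,|\Omc|)$, and over all $L$ time points the total number of steps is at most $L(\beta+1)\,p(|\q|,|\Omc|)$, which is polynomial in $|\q|$, $|\Omc|$ and $|\Dmc_1|$, as claimed.
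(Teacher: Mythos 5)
Your proof is correct and follows the same overall route as the paper's: slice the temporal generalisation sequence into one non-temporal chain per time point, and bound each chain via the polynomially-bounded-generalisation theorem for $\mathcal{ELHIF}^\bot$-ontologies in normal form~\cite[Theorem~13]{DBLP:conf/dlog/FunkJL22}. The difference lies in how each chain is given a \emph{fixed} target CQ. The paper fixes one root $\Omc$-homomorphism $h$ from $\q$ into the \emph{last} (weakest) instance $\Dmc_n$, observes that this single $h$ is a root $\Omc$-homomorphism into every $\Dmc_j$, and takes as target for time point $i$ the conjunction of the domain queries of $\q$ that $h$ maps to $i$; this target never changes along the chain, so the cited theorem is invoked exactly once per time point. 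You instead use the moving target $\r(\tau(\s))$, the conjunction of \emph{all} domain queries of $\q$ entailed at the current instance $\s$, show that $\tau$ shrinks monotonically along the chain, and cut the chain into at most $|\q|+1$ constant-target segments, invoking the cited theorem once per segment. Both devices are sound: yours pays an extra (harmless) factor of $|\q|+1$ in the bound and must treat the degenerate case $\tau=\emptyset$ separately, but it does not need the uniformity of a homomorphism into $\Dmc_n$ across all $\Dmc_j$, and it makes fully explicit the transfer of minimality from the temporal instances to the per-time-point CQs --- precisely the step that the paper's proof asserts without argument (``it follows that in particular\ldots'').
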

\begin{proof}
  Consider a time point $i$ and let $\r_{1}, \r_{2}, \ldots$ be the sequence of different queries at time point
  $i$ that occur in the generalisation sequence, that is, $\r_{j}\models_\Omc \r_{j+1}$ and
  $\r_{j+1}\not\models_\Omc \r_{j}$, for each $j$. Let $h$ be a root
  homomorphism from $\q_T$ to $\Dmc_n$ and let $I$ be the set of all
  $t$ with $h(t)=i$. (By construction, $h$ is a root homomorphism from
  $\q_T$ to all $\Dmc_j$.) Consider $\q'=\bigwedge_{i\in I}\q_i$.
  Clearly, $\r_1,\r_2,\ldots,$ is a generalisation sequence towards
  $\q'$ \wrt \Omc. Since all $\Dmc_j$ are satisfiable \wrt \Omc, \Omc-saturated
  and $(\q_T,a,\Omc)$-minimal, it follows that in particular, all
  $\r_1,\r_2,\ldots$ are satisfiable \wrt \Omc, \Omc-saturated,
  and $(\q_T,\Omc)$-minimal. By
  Theorem~\ref{thm:bounded-atemporal}, the length of
  $\r_1,\r_2,\ldots$ is bounded by a polynomial in the sizes of
  $\q_T$ and $\Omc$. Since there are
  only $|\text{supp}(\Dmc_1)|$ time points to consider, the overall sequence
  $\Dmc_1,\Dmc_2,\ldots,\Dmc_n$ is bounded by a polynomial in $\q_T$,
  $\Omc$, and $|\text{supp}(\Dmc_1)|$.
\end{proof}

\subsection{Proof of Theorem~\ref{thm:learning}}

We restate Theorem~\ref{thm:learning} for convenience.

\thmlearning*

\begin{proof} 
  Let \Lmc be as in the theorem. Let $\q_T$ be the target query, \Omc
  an \Lmc ontology, and $\Dmc,a$ be a positive example with $\Dmc =
  \Amc_0\ldots\Amc_n$ and such that $\Dmc$ is satisfiable \wrt \Omc.
  By Lemma~\ref{lem:obda-normalform}, we can assume that \Omc is
  actually in normal form. Moreover, by
  Lemma~\ref{lem:query-normalform}, we can assume $\q_T$ to be in
  normal form as well. We further assume $\q_T$ to be of
  shape~\eqref{fullq2appendix}: 
  \[ \q_{T} = \q_{0} \mathcal{R}_{1} \q_{1} \dots \mathcal{R}_{n}
  \q_{n}.\]
  As $\q_T$ is safe, it does not have lone conjuncts.

  \medskip We start with showing~$(i)$ and then describe the necessary
  modifications for~$(ii)$ and~$(iii)$.  The idea of the proof is to
  modify \Dmc in a number of steps such that in the end $\Dmc$ viewed
  as a temporal query is equivalent to $\q_T$. As a general proviso we
  assume that at all times: each $\Amc_i$ (viewed as CQ) is
  \Omc-saturated; this is without loss of generality since instance
  checking \wrt \ELHIF ontologies is
  decidable~\cite{DBLP:phd/dnb/Tobies01},
 
  We call a temporal data instance \Dmc \emph{temporally
  minimal} if there is no time point $i$ such that $\Dmc',a$ is a positive
  example where $\Dmc'$ is obtained from $\Dmc$ by dropping $\Amc_i$
  from \Dmc. Clearly, temporal minimality can be established using at
  most $\max(\Dmc)$ membership queries, and a temporally minimal data
  instance \Dmc satisfies that $\max(\Dmc)$ is at least the number of
  occurrences of $suc$ and $<$ in $\q_T$ and at most as large as the
  size of $\q_T$. 

  Thus, we can assume without loss of generality that the initial data
  example \Dmc is temporally minimal. Thus, every root
  $\Omc$-homomorphism $h:\q_T\to \Dmc$ is block
  surjective\footnote{Recall the 
    notion of \emph{(block surjective) root \Omc-homomorphisms} from
    the proof of Theorem~\ref{thm:first}.} for the
  block size \[b:=\max(\Dmc)+1,\] as $\Dmc$ has only one block for
  this $b$. In fact, during all
    modifications, we maintain the invariant that every root
    $\Omc$-homomorphism is block surjective for this number $b$. We
    use this initial constant $b$ in steps~2,~3, and~4 below.
  
\paragraph{Step~1.} We first aim to find a temporal data instance
which is \emph{tree-shaped}, meaning that in $\Dmc = \Amc_0\ldots\Amc_n$ each $\Amc_i$ is tree-shaped. To
achieve this, we exhaustively apply the following rules
\mn{Unwind} and \mn{Minimise} with a preference given to
\mn{Minimise}. A \textit{cycle} in a data instance is a sequence
$R_1(a_1,a_2),\ldots,R_n(a_n,a_1)$ of distinct atoms such that
$a_1,\ldots,a_n$ are distinct.

\smallskip \mn{Minimise.} If there is some $i$ and some individual
$b\in \ind(\Amc_i)$ such that $\Dmc',a$ is a positive example where
$\Dmc'$ is obtained from \Dmc by dropping from $\Amc_i$ all atoms that
mention $b$, then replace \Dmc with $\Dmc'$.

\smallskip \mn{Unwind.} Choose an atom $R(a_1,a_2)\in \Amc_i$ that is
part of a cycle. Obtain $\Amc_i'$ by first adding a disjoint copy
$\Amc'_i$
of $\Amc_i$ to $\Amc_i$ and let $a_1',a_2'$ be the copies of $a_1,a_2$ in
$\Amc_i'$. Then replace all atoms $S(a_1,a_2)$ (respectively,
$S(a_1',a_2')$) by $S(a_1,a_2')$ (respectively, $S(a_1',a_2)$), for all roles $S$.

\smallskip
It is clear that the resulting temporal data instance is tree-shaped
as required. It is still temporally minimal and the invariant that
every root $\Omc$-homomorphism is block surjective is preserved. 


\paragraph{Step 2.} In this step, we `close' $\Dmc$ under
applications of the Rules~(a)--(e) used in Lemma~\ref{lem:stepae}.
Formally, consider the following Rule~2(x), for
$\text{x}\in\{\text{a,b,c,d,e}\}$.
\begin{enumerate}[label=2(x),align=left,leftmargin=*]

  \item Let $\Dmc'$ be a data instance obtained from $\D$ by applying
    Rule~(x) from the proof of Lemma~\ref{lem:stepae}. If $\D',a$ is a positive example, replace $\D$
    with the result of the exhaustive application of \mn{Minimise}
    to $\D'$.

\end{enumerate}
We first apply~2(b) and~2(c) until \Dmc stabilises. Then, we exhaustively
apply~2(a), 2(d), and 2(e) giving preference to~2(a).

After \textit{Step~2}, \Dmc satisfies that, if $\Dmc'$ is the result of an
application of Rules~(a)--(e), then $\Dmc',a$ is not a positive
example.

\paragraph{Step 3.} In this step, we take care of lone conjuncts in
\Dmc by applying~$(\ast)$ below as long as \Dmc contains one. Recall
that $\q_T$ does not, so we can simplify \Dmc.
\begin{itemize}[align=left,leftmargin=*]

  \item[$(\ast)$] Choose a primitive block
    $\emptyset^b\Amc\emptyset^b$ in $\Dmc$ such that $\Amc$, viewed as
    CQ $\q$ is meet-reducible \wrt \Omc within ELIQ.
    Let $\Fmc_\q=\{\q_1,\ldots,\q_\ell\}$
    and $w = \hat\q_1\emptyset^b \hat \q_2\emptyset^b\dots \hat\q_\ell
    \emptyset^b$. Denote with $\D_k$ the result of replacing
    $\emptyset^b\Amc\emptyset^b$ in \Dmc with $\emptyset^b(w)^k$. Then
    identify some $i\geq 1$ such that $\Dmc_i,a$ is a positive
    example, by using membership queries for $i=1,2,\dots$. Notice
    that this requires only polynomially many membership queries as $\D_k,a$ is a
    positive example for $k=|\q_T|$, and that all queries are of
    polynomial size since $\Fmc_\q$ is of polynomial size. Replace \Dmc with the result of
    exhaustively applying Rule~2(a) to $\D_i$ and subsequently
    shortening blocks $\emptyset^d$ for $d>b$ to $\emptyset^b$.

\end{itemize}
Let \Dmc be the result of \textit{Step~3}. It is routine to verify that
2(a)--2(e) are not applicable, that \Dmc is $b$-normal, and that \Dmc
is without lone conjuncts \wrt \Omc within
$\LTL_p\NDD(\text{ELIQ})$.
By Lemma~\ref{lem:stepae}, any root $\Omc$-homomorphism is a root
\Omc-isomorphism. Thus, the algorithm has identified all blocks in the
following sense. Suppose that $\q_T=\q_0\Rmc_1\q_1\ldots\Rmc_m\q_m$ is
a sequence of blocks $\q_i=\r_0^i\ldots \r_{\ell_i}^i$ and
\[\Dmc = \D_{0}\emptyset^{b}\D_{1} \ldots \emptyset^{b}\D_{n} \text{
where } \D_{i}=\mathcal{A}_{0}^{i}\ldots\mathcal{A}_{k_{i}}^{i}.\]
Then $m=n$ and each block $\Dmc_i$ in \Dmc is isomorphic to $\q_i$,
that is, $\ell_i=k_i$ and $\hat\r_{j}^i=\Amc_j^i$, for
all $i,j$ with $0\leq i\leq n$ and $0\leq j\leq k_i$. It is unclear, however, whether the
$\Rmc_i$ are (a single) $\leq$ or a sequence of $<$. This is resolved in the final step.

\paragraph{Step 4.} We determine $\Rmc_{i+1}$, for each $i$ with $0\leq i<n$, as follows:
\begin{itemize}

  \item If $\r^i_{k_i}\wedge \r^{i+1}_0$ is satisfiable \wrt \Omc and
    $\Dmc_i,a$ with $\Dmc_i=\Dmc_0 \emptyset^{b} \ldots\emptyset^b
    \Dmc_i\Join\Dmc_{i+1} \emptyset^b\ldots\emptyset^{b}\D_{n}$
    ($\Join$ defined as in the proof of Theorem~\ref{thm:first}) is a
    positive example, then $\Rmc_{i+1}$ is $\leq$. Otherwise, let $s$
    be minimal such that $\Dmc'_i,a$ is a positive example for
    $\Dmc'_i=\Dmc_0 \emptyset^{b} \ldots\emptyset^b
    \Dmc_i\emptyset^{s}\Dmc_{i+1}
    \emptyset^b\ldots\emptyset^{b}\D_{n}$.  Then, $\Rmc_{i+1}$ is a
    sequence of $s$ times $<$.

\end{itemize}
We have thus shown that indeed the returned query is equivalent to
$\q_T$. It remains to argue that the algorithm issues only
polynomially many membership queries.  We analyse \textit{Steps~1--4}
separately.

For \textit{Step~1}, let $\Dmc_1,\Dmc_2,\ldots$ be the sequence of temporal data
instances that \mn{Unwind} is applied to during \textit{Step~1}. Clearly,
all these queries are $(\q_T,\Omc)$-minimal (recall that we give
preference to \mn{Minimise}) and \Omc-saturated. Since an
application of \mn{Minimise} decreases the overall number of
individuals in the instance, there are only polynomially many
applications of \mn{Minimise} between $\Dmc_i$ and
$\Dmc_{i+1}$. In the
proof of Lemma~14 in~\cite{DBLP:conf/dlog/FunkJL22}, it is shown that
the operation
\mn{Unwind}\footnote{\mn{Unwind} is called~\textit{Double Cycle}
in~\cite{DBLP:conf/dlog/FunkJL22}.} applied to a
$(\q_T,\Omc)$-minimal CQ $\q$ leads to a strictly weaker CQ $\q'$,
that is, $\q\models_\Omc \q'$, but not vice versa. This applies
here as well, and implies that
$\Dmc_1,\Dmc_2,\ldots$ is a
generalisation sequence towards $\q_T$ \wrt \Omc. Applying
Lemma~\ref{lem:temporal-genseq} yields that \textit{Step~1} terminates in
time polynomial in the size of $\q_T$, \Omc, and $|\text{supp}(\Dmc)|$ which in
turn is bounded by the size of $\q_T$ (recall that \Dmc is
temporally minimal).

We next analyse \textit{Step~2}, starting with Rules~2(b) and~2(c). First note that the
number of applications of Rules~2(b) and~2(c) is bounded by the number of
$<$ and $\leq$ in $\q_T$. To see this, we inductively show
that Rules~2(b) and~2(c) preserve the fact that every root
$\Omc$-homomorphism $h:\q_T\to \Dmc$ is block surjective. As \Dmc is
temporally minimal, this certainly holds before \textit{Step~2}. Suppose now that
$\Dmc'$ is obtained by a single application of~2(b) or~2(c) to $\Dmc$, and
that there is a root \Omc-homomorphism that is not block surjective.
Then we can easily construct a non-block surjective homomorphism from
$\q_T$ to $\Dmc$, a contradiction. Applications of~\mn{Minimise}
also preserve the claim. Also note that the block-surjectivity implies
that the support of the resulting \Dmc is bounded by the size of~$\q_T$.

We next analyse Rules~2(a), 2(d), and~2(e). Let $\Dmc_1,\Dmc_2,\ldots$ be
a sequence of temporal data instances obtained by a sequence of applications
of~2(a). Clearly, $\Dmc_1,\Dmc_2,\ldots$ is a generalisation sequence
towards $\q_T$ \wrt \Omc such that all $\Dmc_i$ are satisfiable \wrt
\Omc, \Omc-saturated, and $(\q_T,\Omc)$-minimal. By
Lemma~\ref{lem:temporal-genseq}, the length of the sequence is
bounded by a polynomial in the sizes of $\q_T$ and $\Omc$, and in
$|\text{supp}(\Dmc)|\leq |\q_T|$. Further note that applications of~2(a)
preserve that every root \Omc-homomorphism is block surjective and
that~2(b) and~2(c) remain not applicable.

Next consider an application of~2(d) to a temporal data instance
\Dmc where~2(a) is not applicable and such that every root
\Omc-homomorphism is block surjective. Let $\Dmc'$ be the result.
Since~2(a) is not applicable, every root \Omc-homomorphism to $\Dmc'$
must also be block surjective. Thus, the number of applications
of~2(d) is bounded by the number of $<$ and $\leq$ in $\q_T$. The same
argument works for~2(e). It is readily seen that~2(b) and~2(c) are still
not applicable, thus none of the rules is applicable to \Dmc.
Overall, we obtain a polynomial number of rule
applications. This finishes the analysis of \textit{Step~2}.
%

Consider now \textit{Step~3}.  Recall that, by Lemma~\ref{lonecon}, a
CQ $\q$ is meet-reducible \wrt \Omc in ELIQ iff $|\Fmc_\q|\geq 2$
provided that $\q'\not\models_\Omc \q''$, for all distinct
$\q',\q''\in \Fmc_\q$. Thus, to find a lone conjunct
$\emptyset^b\Amc\emptyset^b$ in \Dmc, we can compute such a minimal
frontier $\Fmc_\q$ of $\q$ \wrt \Omc by first computing any frontier
$F$ (which is possible by assumption) and then exhaustively removing
from $F$ queries $\q''$ such that $\q'\models_\Omc \q''$ for some
$\q'\in F$ with $\q'\neq\q''$.  Note that the test $\q'\models_\Omc
\q''$ is decidable for \ELHIF
ontologies~\cite{DBLP:conf/ijcai/Bienvenu0LW16}.

As noted in~$(\ast)$, identifying the right $\Dmc_i$ needs only polynomially many membership queries
(despite the fact that deciding meet-reducibility might require more time).
Since exhaustive application of~2(a) requires only polynomial time, a
single application of~$(\ast)$ requires only polynomially many
membership queries. Moreover, using the fact that~2(a) is not
applicable before application of~$(\ast)$ one can show that the number
of `gaps' is increased and that the rule preserves that every root
\Omc-homomorphism is block surjective.  Hence,~$(\ast)$ is applied at
most once for each $\leq$ in $\q_T$.

It remains to analyse the running time of \textit{Step~4}. Clearly, only
linearly many (in the size of $\q_T$) membership queries are asked. To
finish the argument, it remains to note that \ELHIF admits tractable
containment reduction and that satisfiability \wrt \ELHIF ontologies is decidable.

\medskip

We argue next that the above algorithm runs in polynomial time if \Lmc
additionally admits polynomial time instance checking, polynomial time
computable frontiers, and meet-reducibility of ELIQs \wrt
\Lmc ontologies can be decided in polynomial time. First note
that \Omc-saturation of \Dmc (which is assumed throughout the algorithm) can
be established in polynomial time via instance checking. Then observe
that, in~\textit{Step~3}, a (not necessarily minimal) frontier
$\Fmc_\q$ can be computed in polynomial time and meet-reducibility can
also be decided in polynomial time, by assumption. Together with the analysis
of~\textit{Step~3} above, this yields that~\textit{Step~3} needs only
polynomial time. Finally observe that also~\textit{Step~4} runs in
polynomial time since the tests for satisfiability can be reduced to
(polynomial time) instance checking.

It remains to prove Points~$(ii)$ and~$(iii)$
from Theorem~\ref{thm:learning}. The learning algorithm for
Point~$(ii)$ is similar to the algorithm provided above, but
with a modified \textit{Step~3} since in this case~$\q_T$ might have lone
conjuncts and possibly more than one variable from $\var(\q_T)$ is mapped to the same time
point in \Dmc. Let $T$ be the temporal depth of the target query.

\paragraph{Step~3$'$.} In this step, we apply~$(\ast')$ until \Dmc
stabilises.
\begin{itemize}[align=left, leftmargin=*]

  \item[$(\ast')$] Choose a primitive block
    $\emptyset^b\Amc\emptyset^b$ and an ELIQ $\q$ with $\Amc=\hat \q$.
    Let $\Fmc_\q=\{\q_1,\ldots,\q_\ell\}$
    and $w = \hat\q_1\emptyset^b \hat \q_2\emptyset^b\dots \hat\q_\ell
    \emptyset^b$. Let $\Dmc'$ be the result of replacing
    $\emptyset^b\Amc\emptyset^b$ in \Dmc with $\emptyset^b(w)^T$. If
    $\Dmc',a$ is a positive
    example, then replace \Dmc with the result of
    exhaustively applying Rule~2(a) to $\Dmc'$ and subsequently
    shortening blocks $\emptyset^d$ for $d>b$ to $\emptyset^b$.

\end{itemize}
Let \Dmc be the result of \textit{Step~3}. It is routine to verify
that 2(a)--2(e) are not applicable. The following can be proven
similar to Lemma~\ref{lem:stepae}.
\begin{lemma} \label{}
  Let $\mathcal{O}$ and $\q_T$ be as above. Let $b$ exceed the number
  of $\Diamond$ and $\nxt$ in $\q_T$, and let $\D$ be $b$-normal. If
  Rules~2(a)--(e) and $(\ast')$ are not applicable, then any root
  $\mathcal{O}$-homomorphim $h \colon \q \rightarrow \D$ is a root
  $\mathcal{O}$-isomorphism.
\end{lemma}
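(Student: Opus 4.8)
The plan is to reprove Lemma~\ref{lem:stepae} almost verbatim, exploiting that ``Rule~3(x) is not applicable'' means precisely that the instance $\D'$ produced from $\D$ by the underlying Rule~(x) is not a positive example, i.e.\ $\mathcal{O},\D'\not\models\q_T$; likewise ``$(\ast')$ is not applicable'' means $\mathcal{O},\D'\not\models\q_T$ for the instance $\D'$ it produces. Thus the non-applicability hypotheses hand us exactly the ``$\mathcal{O},\D'\not\models\q_T$ for all rule outputs $\D'$'' premises used in the proof of Lemma~\ref{lem:stepae}. First I would fix a root $\mathcal{O}$-homomorphism $h\colon\q_T\to\D$ and run the original argument unchanged to show that $h$ is block surjective (via Rules~(a),(b)) and data surjective (via Rule~(a)); and in the analysis of a pair $(t\le t')\in\q_T$ with $h(t)=h(t')=\ell$, Cases~1, 3, 4 and 5 go through verbatim using Rules~(c), (d), (e). The only place needing change is Case~2.

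In Case~2 the point $\ell$ sits in a \emph{primitive} block $\A=\hat{\s}$ of $\D$ with $i\ne 0$, and $k\ge 2$ variables $t_1,\dots,t_k$ of $\q_T$ are sent to $\ell$, giving (by data surjectivity, as in Lemma~\ref{lem:stepae}) $\r_1\wedge_\mathcal{O}\cdots\wedge_\mathcal{O}\r_k\equiv_\mathcal{O}\s$, where each $\r_j\not\equiv_\mathcal{O}\top$ is the query of the $j$-th contributing block and $\s\models_\mathcal{O}\r_j$. I would split on whether $\s$ is meet-reducible. If $\s$ is \emph{not} meet-reducible then, exactly as in the original proof, the equivalence forces $\r_1\equiv_\mathcal{O}\cdots\equiv_\mathcal{O}\r_k\equiv_\mathcal{O}\s$, contradicting \textbf{(n3)}/\textbf{(n4)}. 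If $\s$ \emph{is} meet-reducible, then $\s$ is a lone conjunct of $\D$ and I would instead obtain a contradiction from the non-applicability of $(\ast')$.

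For this sub-case I would first note that the contributing blocks are distinct primitive blocks joined by $\le$, so by \textbf{(n3)} and \textbf{(n4)} consecutive $\r_j,\r_{j+1}$ are $\models_\mathcal{O}$-incomparable; combined with $\s\models_\mathcal{O}\r_j$ for all $j$ this rules out $\r_j\equiv_\mathcal{O}\s$ (else $\r_j\models_\mathcal{O}\r_{j\pm1}$), so every $\r_j$ is a \emph{strict} weakening of $\s$. By the defining property of a frontier, each $\r_j$ is then entailed by some $\q_{p_j}\in\Fmc_\s=\{\q_1,\dots,\q_\ell\}$, i.e.\ $\mathcal{O},\hat{\q}_{p_j}\models\r_j(a)$, and $\ell\ge 2$ by Lemma~\ref{lonecon}. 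Now let $\D'$ be the instance obtained by $(\ast')$, replacing $\emptyset^b\A\emptyset^b$ with $\emptyset^b(w)^T$ for $w=\hat{\q}_1\emptyset^b\cdots\hat{\q}_\ell\emptyset^b$. Since $i\ne 0$ forces $\ell\ne 0=h(0)$, the blocks carrying $t_1,\dots,t_k$ lie among the $n\le T$ non-root blocks of $\q_T$, so $k\le T$ and the $T$ copies of $w$ suffice to redirect $t_1,\dots,t_k$ into $k$ distinct, strictly increasing copies, placing each $t_j$ at the occurrence of $\hat{\q}_{p_j}$ in the $j$-th copy while keeping $h$ unchanged elsewhere. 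This yields a root $\mathcal{O}$-homomorphism $\q_T\to\D'$, so $\D',a$ is positive and $(\ast')$ would be applicable --- the desired contradiction.

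The main obstacle I expect is the bookkeeping in this last step: checking that the redirected map is still a genuine root $\mathcal{O}$-homomorphism, i.e.\ that all $<$/$\le$ constraints of $\q_T$ and the block boundaries immediately before and after $\ell$ remain respected after the positional expansion of $\emptyset^b\A\emptyset^b$ into $\emptyset^b(w)^T$, and that $k\le T$ really gives enough copies. Everything else is a faithful transcription of the proof of Lemma~\ref{lem:stepae}, with the hypothesis ``$\mathcal{O},\D'\not\models\q_T$'' now supplied by the non-applicability of Rules~3(a)--(e) and $(\ast')$ rather than assumed directly.
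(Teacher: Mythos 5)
Your proposal is correct and takes essentially the same route as the paper: the paper itself gives no details beyond ``can be proven similar to Lemma~\ref{lem:stepae}'' (and, for the analogous rule (f$_{n}$) in the proof of Theorem~\ref{thm:first}$(iii)$, explicitly prescribes modifying only Case~2), which is exactly what you do --- rerun the proof of Lemma~\ref{lem:stepae} with the non-applicability of Rules~3(a)--(e) supplying its non-entailment hypotheses, and resolve Case~2 by sending $t_1,\dots,t_k$ into distinct copies of the frontier word $w$ produced by $(\ast')$. The bookkeeping you flag does go through: $k\le T$ since the $t_j$ are distinct non-root variables of $\q_T$, and all $<$/$\le$ constraints survive the positional expansion because every $\mathcal{R}_i$ contains fewer than $b$ occurrences of $<$ while each redirected $t_j$ is separated from its neighbours and from the unchanged (respectively, shifted) prefix and suffix of $h$ by at least $b$ positions of the inserted word.
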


As an immediate consequence, after \textit{Step 3$'$}, the modified
algorithm has identified all blocks in $\q_T$ as described above and
it remains to apply \textit{Step~4}.

\medskip The learning algorithm for Point~$(iii)$ is a similar
modification. Note that for the query class $\LTL_p\ND(\text{ELIQ})$,
we know that the temporal depth of $\q_T$ is exactly $T_0=\max(\Dmc)$
for the temporally minimal input example $\Dmc$. This $T_0$ can then
be used in place of $T$ in~$(\ast')$ above. This finishes the proof of
Theorem~\ref{thm:learning}. 
\end{proof}

\subsection{Proof of Theorem~\ref{thm:learning-dllite}}

\thmlearningdllite*

\begin{proof}
  The theorem is a direct consequence of Theorem~\ref{thm:learning}
  and the fact that the considered ontology languages satisfy all
  conditions mentioned in that theorem. 
  Most importantly:
  \begin{itemize}

    \item $\DL_{\cal{F}}$ admits polynomial time instance
      checking~\cite{romans} and $\DL_{\cal{F}}^{-}$ admits polynomial time computable
      frontiers~\cite{DBLP:conf/ijcai/FunkJL22}. Meet-reducibility in
      $\DL_{\cal{F}}^{-}$ is decidable in polynomial time, by
      Lemma~\ref{lem:meet-reducible}. 

    \item $\DL_{\cal{H}}$ admits polynomial time instance
      checking~\cite{romans} and admits polynomial time computable
      frontiers~\cite{DBLP:conf/ijcai/FunkJL22}.  
  \end{itemize}
  This completes the proof.
\end{proof}


\end{appendix}
\end{document}